\newtheorem{definition}{Definition}
\newtheorem*{definition*}{Definition}
\newtheorem*{lemma*}{Lemma}
\newtheorem*{theorem*}{Theorem}
\newtheorem{lemma}{Lemma}
\newtheorem{corollary}{Corollary}
\newtheorem{theorem}{Theorem}
\newcommand{\ours}{CLOvE}
\newcommand{\x}{{\mathbf x}}
\newcommand{\w}{{\mathbf w}}
\newcommand{\M}{{\mathbf{M}}}
\newcommand{\W}{{\mathbf{W}}}
\newcommand{\reals}{\mathbb{R}}
\newcommand{\X}{\mathcal{X}}
\newcommand{\cE}{\mathcal{E}}
\newcommand{\cX}{\mathcal{X}}
\newcommand{\cY}{\mathcal{Y}}
\newcommand{\E}{\mathbb{E}}
\newcommand{\U}{\mathbf{U}}
\newcommand{\N}{\mathcal{N}}
\newcommand{\dsp}{d_{\text{sp}}}
\DeclareMathOperator*{\argmin}{arg\,min} 
\renewcommand{\eqref}[1]{Equation~(\ref{#1})}
\newcommand{\figref}[1]{Figure~\ref{#1}}
\newcommand{\secref}[1]{Section~\ref{#1}}
\newcommand{\thmref}[1]{Theorem ~\ref{#1}}
\newcommand{\lemref}[1]{Lemma~\ref{#1}}
\newcommand{\corref}[1]{Corollary~\ref{#1}}
\newlength\myindent 
\newcommand{\ucomment}[1]{\textcolor{blue}{Uri: #1}}
\def \Xc{X_\text{causal}}
\def \Xac{X_\text{ac-spurious}}
\def \Xacn{X_\text{ac-non-spurious}}
\newcommand*{\indep}{%
  \mathbin{%
    \mathpalette{\@indep}{}%
  }%
}
\newcommand*{\nindep}{%
  \mathbin{
    \mathpalette{\@indep}{\not}
  }%
}
\newcommand*{\@indep}[2]{%
  \sbox0{$#1\perp\m@th$}
  \sbox2{$#1=$}
  \sbox4{$#1\vcenter{}$}
  \rlap{\copy0}
  \dimen@=\dimexpr\ht2-\ht4-.2pt\relax
  \kern\dimen@
  {#2}%
  \kern\dimen@
  \copy0 
} 
\title{On Calibration and Out-of-domain Generalization}
\author{%
  Yoav Wald\thanks{Equal contribution} \\
  Johns Hopkins University\\
  \texttt{yoav.wald@gmail.com} \\
   \And
  Amir Feder\footnotemark[1] \\
  Technion\\
  \texttt{amirfeder@gmail.com} \\
  \And
  Daniel Greenfeld \\
  Jether Energy Research\\
  \texttt{danielgreenfeld3@gmail.com} \\
  \And
  Uri Shalit \\
  Technion\\
  \texttt{urishalit@technion.ac.il}
}
\begin{document}
\maketitle

\begin{abstract}
    Out-of-domain (OOD) generalization is a significant challenge for machine learning models. Many techniques have been proposed to overcome this challenge, often focused on learning models with certain invariance properties. In this work, we draw a link between OOD performance and model calibration, arguing that calibration across multiple domains can be viewed as a special case of an invariant representation leading to better OOD generalization. Specifically, we show that under certain conditions, models which achieve \emph{multi-domain calibration} are provably free of spurious correlations. This leads us to propose multi-domain calibration as a measurable and trainable surrogate for the OOD performance of a classifier. We therefore introduce methods that are easy to apply and allow practitioners to improve multi-domain calibration by training or modifying an existing model, leading to better performance on unseen domains. Using four datasets from the recently proposed WILDS OOD benchmark \cite{koh2020wilds}, as well as the Colored MNIST dataset \cite{kim2019learning}, we demonstrate that training or tuning models so they are calibrated across multiple domains leads to significantly improved performance on unseen test domains. We believe this intriguing connection between calibration and OOD generalization is promising from both a practical and theoretical point of view.
\end{abstract}

\section{Introduction}
\label{sec:intro}
Machine learning models have recently displayed impressive success in a plethora of fields \cite{huang2017densely, devlin2019bert, DBLP:journals/nature/Senior0JKSGQZNB20}. However, as models  are typically only trained and tested on in-domain (ID) data, they often fail to generalize to out-of-domain (OOD) data \cite{koh2020wilds}. The problem is especially pressing when deploying machine learning models in the wild, where they are required to perform well under conditions that were not observed during training. For instance, a medical diagnosis system trained on patient data from a few hospitals could fail when deployed in a new hospital.

Many methods have been proposed to improve the OOD generalization of machine learning models. Specifically, there is rapidly growing interest in learning models that display certain invariance properties under distribution shifts and do not rely on spurious correlations in the training data \cite{peters2016causal, heinze2018invariant, arjovsky2019invariant}. While highlighting the need for learning robust models, so far these attempts have limited success scaling to realistic high-dimensional data, and in learning truly invariant representations~\cite{rosenfeld2020risks,gulrajani2020search,kamath2021does}.

In this paper, we argue that an alternative and relatively simple approach for learning invariant representations could be achieved through model calibration across multiple domains. Calibration asserts that the probabilities of outcomes predicted by a model match their true probabilities. Our claim is that simultaneous calibration over several domains can be used as an observable indicator for favorable performance on unseen domains. For example, if we take all patients for whom a classifier outputs a probability of $0.9$ for being ill, and in one hospital the true probability of illness in these patients is $0.85$ while in the other it is $0.95$, then we may suspect the classifier relies on spurious correlations. Intuitively, the features which lead the classifier to predict a probability of $0.9$ imply different results under different experimental conditions, suggesting that their correlation with the label is potentially unstable. Conversely, if the true probabilities in both hospitals match the classifier's output, it may be a sign of its robustness. 

Our contributions are as follows:
We prove that in Gaussian-linear models, under a general-position condition, being concurrently calibrated across a sufficient number of domains guarantees a model has no spurious correlations.
We then introduce three methods for encouraging multi-domain calibration in practice. These are, in ascending order of complexity: (i) model selection by a multi-domain calibration score, (ii) robust isotonic regression as a post-processing tool, and (iii) directly optimizing deep nets with a multi-domain calibration objective, based on the method introduced by Kumar et al. \cite{kumar2018trainable}. We show that multi-domain calibration achieves the correct invariant classifier in a learning scenario presented by Kamath et al. \cite{kamath2021does}, unlike the objective proposed in Invariant Risk Minimization \cite{arjovsky2019invariant}. Finally, we demonstrate that the proposed approaches lead to significant performance gains on the WILDS benchmark datasets \cite{koh2020wilds}, and also succeed on the colored MNIST dataset \cite{kim2019learning}.

\section{Calibration and Invariant Classifiers}
\label{sec:cal_invar}

\subsection{Problem Setting}
Consider observable features $X$, a label $Y$ and an environment (or domain) $E$ with sample spaces $\cX, \cY, \cE$ accordingly. We mostly focus on regression and binary classification, therefore $\cY=\reals$ or $\cY=\{0, 1\}$. To lighten notation, our definitions will be given for the binary classification setting and we will point out adjustments to regression where necessary. There is no explicit limitation on $|\cE|$, but we assume that training data that has been collected from a finite subset of the possible environments $E_{\text{train}}\subset \cE$. The number of training environments is denoted by $k$, and $E_{\text{train}}=\{e_i\}_{i=1}^{k}\subset \cE$, so that our training data is sampled from a distribution $P[X, Y \mid E=e_i] \quad \forall i\in{[k]}$. Our goal is to learn models that will generalize to new, unseen environments in $\mathcal{E}$.

Ideally, we would like to learn a classifier that is optimal for all environments $\cE$. Unfortunately, we only observe data from the limited set $E_{\text{train}}$ and even if this set is extremely large, the Bayes optimal classifiers on each environment do not necessarily coincide. Following other recent work \cite{peters2016causal,heinze2018invariant,arjovsky2019invariant} we therefore aim for a different goal -- learning classifiers whose per-instance output will be stable across environments $E$, as we explain below. 

We assume the data generating process for $E,X,Y$ follows the causal graph in \figref{scm}. \footnote{See Appendix \ref{sec:scm} for a brief introduction to causal graphs.}
We differentiate between causal and anti-causal components of $X$, and further differentiate between the anti-causal variables which are affected or unaffected by $E$, denoted as $\Xac$ and $\Xacn$, respectively. As an illustrative example, consider again predicting illness across different hospitals. When predicting lung cancer, $Y$, from patient health records, $\Xc$ could be features like smoking. $\Xacn$ are symptoms of $Y$ such as infections that appear in chest X-rays, while $\Xac$ can be marks that technicians put on X-rays as in \cite{zech2018variable}. Smoking habits may vary across hospital populations, as might X-ray markings; but the influence of smoking on cancer and the manifestation of cancer in an X-ray do not vary by hospital.

        

We do not assume to know how to partition $X$ into $\Xc, \Xac, \Xacn$.
The main assumptions made in the causal graph in Fig. \ref{scm} are that there are no hidden variables, and that there is no edge directly from environment $E$ to the label $Y$. Such an arrow would imply the conditional distribution of $Y$ given $X$ can be arbitrarily different in an unseen environment $E$, compared to those present in the training set. Note that for simplicity we do not include arrows from $\Xc$ to $\Xac$ and $\Xacn$ but they may be included as well. 

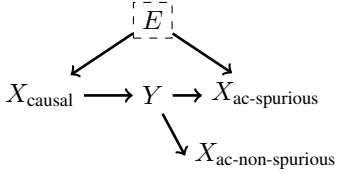
\begin{figure}[ht]
    \centering
    \begin{tikzpicture}
        \node[text centered] (xc) {$\Xc$};
        \node[right of = xc, node distance=1.5cm, text centered] (y) {$Y$};
        \node[right of = y, node distance=1.5cm, text centered] (xac) {$\Xac$};
        \node[below of = xac, node distance=0.8cm, text centered] (xns) {$\Xacn$};
        \node[draw, rectangle, dashed, above of = y, node distance=1cm, text centered] (e) {$E$};

        \draw[->, line width = 1] (xc) --  (y);
        \draw[->, line width = 1] (y) -- (xac);
        \draw[->, line width = 1] (e) -- (xc);
        \draw[->, line width = 1] (e) -- (xac);
        \draw[->, line width = 1] (y) to (xns.west);
    \end{tikzpicture}
    
    \caption{Learning in the presence of causal and anti-causal features. Anti-causal features can be either spurious ($\Xac$), or non-spurious ($\Xacn$).}
    \label{scm}
\end{figure}

We will say a representation $\Phi(X)$ contains a \emph{spurious correlation} with respect to the environments $E$ and label $Y$, if $Y \nindep E \mid \Phi(X)$; this motivates our naming of $\Xac$ and $\Xacn$ in Fig. \ref{scm}, as $Y \nindep E \mid \Xac$ but $Y \indep E \mid \Xacn$. Similar observations have been made by \cite{heinze2018invariant, arjovsky2019invariant}. Having a spurious correlation  implies that the relation between $\Phi(X)$ and $Y$ depends on the environment -- it is not transferable nor stable across environments. 
In this work we will simply consider the output $f(X)$ of a classifier $f: \X \rightarrow [0,1]$ as a representation. 
The crux of this paper is the observation that having $\E[Y \mid f(X), E=e]=f(X)$ for every value of $E$, i.e. $f$ being a \emph{\textbf{calibrated}} classifier across all environments, is equivalent up-to a simple transformation to having $Y \indep E \mid f(X)$, and thus to $f$ having \emph{\textbf{no spurious correlations}} with respect to $E$. We prove this assertion in section \ref{sec:ind_and_calib}, and as a demonstration of this principle we prove (section \ref{sec:motiv}) that linear models which are calibrated across a diverse set of environments $E$ are guaranteed to discard $\Xac$ as viable features for prediction.

\subsection{Invariance and Calibration on Multiple Domains}\label{sec:ind_and_calib}
We define calibration, along with a straightforward generalization to the multiple environment setting.
\begin{definition}
    Let $P[X, Y]$ be a joint distribution over the features and label, and $f:\X\rightarrow [0,1]$ a classifier. Then $f(\x)$ is calibrated w.r.t to $P$ if for all $\alpha\in{[0,1]}$ in the range of $f$, $\E_{P}{\left[ Y \mid f(X)=\alpha\right]} = \alpha$.
    In the multiple environments setting, $f(\x)$ is calibrated on $E_{\text{train}}$ if for all $e_i\in{E_{\text{train}}}$ and $\alpha$ in the range of $f$ restricted to $e_i$,
        $\E{\left[ Y \mid f(X)=\alpha, E=e_i\right]} = \alpha \nonumber$.
\end{definition}
For regression problems, we consider regressors that output estimates for the mean and variance of $Y$, and say they are calibrated if they match the true values similarly to the definition above. The precise definition can be found in the supplementary material.

We now tie the notion of calibration on multiple environments with OOD generalization, starting with its correspondence with our definition of spurious correlations. Recall that a representation $\Phi(X)$ does not contain spurious correlations if $Y \indep E \mid \Phi(X)$. Treating the output $f(X)$ of a classifier as a representation of the data, and considering classifiers satisfying the above conditional independence with respect to training environments, we arrive at a definition of an invariant classifier.
\begin{definition}
    Let $f:\X\rightarrow [0,1]$. $f$ is an \emph{invariant classifier} w.r.t $E_{\text{train}}$ if for all $\alpha\in{[0,1]}$ and environments $e_i,e_j\in{E_{\text{train}}}$, where $\alpha$ is in the range of $f$ restricted to each of them:
    \begin{align} \label{eq:invariant_predictor}
        \E[ Y \mid f(X)&=\alpha, E=e_i ] =  \E{\left[ Y \mid f(X)=\alpha, E=e_j \right]}.
    \end{align}
\end{definition}
Lemma \ref{lemma:corresp} gives the correspondence between invariant classifiers and classifiers calibrated on multiple environments. The proof is in \secref{sec:calibration_intro} of the supplementary material.

\begin{lemma}\label{lemma:corresp}
    If a binary classifier $f$ is invariant w.r.t $E_{\text{train}}$, then there exists some $g:\reals\rightarrow [0,1]$ such that (i) $g\circ f$ is calibrated on all training environments, and (ii) the mean squared error of $g\circ f$ on each environment does not exceed that of $f$. On the other hand, if a classifier is calibrated on all training environments it is also invariant w.r.t $E_{\text{train}}$.
\end{lemma}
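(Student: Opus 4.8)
The plan is to prove the two implications separately. The converse direction (calibrated $\Rightarrow$ invariant) I would dispatch first, and it is immediate: if $f$ is calibrated on every $e_i\in E_{\text{train}}$, then for any $\alpha$ in the range of $f$ restricted to both $e_i$ and $e_j$ we have $\E[Y\mid f(X)=\alpha,E=e_i]=\alpha=\E[Y\mid f(X)=\alpha,E=e_j]$, which is exactly the invariance identity \eqref{eq:invariant_predictor}.

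For the forward direction, assume $f$ is invariant. I would let $R_i$ denote the range of $f$ restricted to $e_i$ and, for $\alpha\in R_i$, define the per-environment recalibration map $g_i(\alpha)\eqdef\E[Y\mid f(X)=\alpha,E=e_i]\in[0,1]$ (it lies in $[0,1]$ since $Y$ is binary). Invariance says precisely that $g_i$ and $g_j$ agree on $R_i\cap R_j$ — any $\alpha$ in that intersection is in the range of $f$ restricted to each of $e_i,e_j$, so \eqref{eq:invariant_predictor} applies — so the partial functions $\{g_i\}$ glue into a single well-defined $g$ on $\bigcup_i R_i$, which I extend arbitrarily to a map $\reals\to[0,1]$ off that set. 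Then I would fix an environment $e_i$ and observe that on the support of $P[X\mid E=e_i]$ one has $f(X)\in R_i$, hence $g(f(X))=g_i(f(X))=\E[Y\mid f(X),E=e_i]$. Property (i) (calibration on $e_i$) follows by writing $g_i^{-1}(\beta)=\{\alpha\in R_i:g_i(\alpha)=\beta\}$, noting that within $E=e_i$ the event $\{g(f(X))=\beta\}$ equals $\{f(X)\in g_i^{-1}(\beta)\}$, and then expanding the conditional expectation as a ratio and applying the tower rule to replace $Y$ by $\E[Y\mid f(X),E=e_i]=g_i(f(X))$, which equals $\beta$ on that event. Property (ii) (the MSE bound) follows because $g(f(X))=\E[Y\mid f(X),E=e_i]$ is the $L^2(P[\cdot\mid E=e_i])$-projection of $Y$ onto $\sigma(f(X))$, so the Pythagorean identity
\[
\E\big[(Y-h(f(X)))^2\mid E=e_i\big]=\E\big[(Y-g(f(X)))^2\mid E=e_i\big]+\E\big[(g(f(X))-h(f(X)))^2\mid E=e_i\big]
\]
holds for every measurable $h$; taking $h=\mathrm{id}$ gives $\E[(Y-g(f(X)))^2\mid E=e_i]\le\E[(Y-f(X))^2\mid E=e_i]$.

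I do not anticipate a deep obstacle — the forward direction is essentially ``build the recalibration map environment by environment and check the pieces are consistent.'' The main things to be careful about will be measure-theoretic: conditional expectations given $f(X)=\alpha$ are defined only for push-forward-almost-every $\alpha$, so ``for all $\alpha$ in the range'' should be read in that sense and the gluing of the $g_i$ carried out off the relevant null sets; and the bookkeeping that $g$ need not be monotone or injective, so that $g_i^{-1}(\beta)$ is genuinely a set. What makes the statement work is precisely that invariance lets a \emph{single} global post-processing map $g$ calibrate all environments simultaneously; without it one would only obtain a separate $g_i$ per environment.
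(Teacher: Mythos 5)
Your proposal is correct and follows essentially the same route as the paper's proof: both directions rest on defining $g(\alpha)=\E[Y\mid f(X)=\alpha,E=e_i]$, using invariance to see that this yields a single well-defined recalibration map across environments, and then invoking the fact that the conditional expectation is both calibrated and the $L^2$-optimal post-processing (the paper phrases the MSE bound as $g$ minimizing $\E[(Y-g\circ f(X))^2\mid E=e_i]$, which is your Pythagorean identity), with the converse immediate from the definitions. Your version simply spells out the gluing and tower-rule bookkeeping that the paper leaves implicit.
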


Now, we can note how the above notion of invariance relates to that of Invariant Risk Minimization \cite{arjovsky2019invariant}, where invariance of a representation $\Phi:\cX\rightarrow\mathcal{H}$ is linked to a shared classifier $\w^*:\mathcal{H}\rightarrow [0, 1]$,  $\w^*\circ \Phi$ being optimal on all environments w.r.t a loss $l:[0,1]\times\mathcal{Y}\rightarrow \reals_{\geq 0}$. Under the representation $\Phi(X)=f(X)$, and the cross-entropy or squared losses it turns out that the original IRM definition coincides with \eqref{eq:invariant_predictor} \footnote{See Observation 2 in \cite{kamath2021does} for a proof.}. Hence we aim for a similar notion of conditional independence, yet we approach it from the point-of-view of calibration.
In \secref{sec:algs} we will see that taking this approach leads to different methods that are highly effective in achieving and assessing invariance.
We further note that the original IRM objective was deemed too difficult to optimize by the original IRM authors, leading them to propose an alternative called IRMv1. This alternative however does not capture the full set of required invariances, as shown by \cite{kamath2021does}, whereas we show in section \ref{subsec:2bit} that multi-domain calibration does indeed capture the required invariances.

Having established the connection between calibration on multiple environments and invariance, there are several interesting questions and points to consider: \\
\textbf{Calibration and sharpness.} Calibration alone is not enough to guarantee that a classifier performs well; on a single environment, always predicting $\E[Y]$ will give a perfectly calibrated classifier. Hence, multi-domain calibration should be combined with some sort of guarantee on accuracy. In the calibration literature, this is often referred to as sharpness. To this end, in \secref{sec:algs} we will propose regularizing models during training or fine-tuning with Calibration Loss Over Environments (\ours{}). Combining this regularizer with standard empirical loss functions helps balance between sharpness and multi-domain calibration. Even without training a new model, we will propose methods for model selection and post-processing that are very easy to apply and help improve multi-domain calibration without a significant effect on the sharpness of the models.
\\
\textbf{Generalization and dependence on $\Xac$.} Suppose that $f(X)$ is calibrated on $E_{\text{train}}$. Under what conditions does this imply it is calibrated on $\cE$? It is easy to show that calibration on several environments entails calibration on any distribution which can be expressed as a linear combination of the distributions underlying said environments. However, can we go beyond that? Given a general set $\cE$ we would like to know what conditions and how many training environments are required for calibration to generalize.
We also wish to understand when does calibration over a finite set of training environments indeed guarantee that a classifier is free of spurious correlations. We now turn to answer these questions in the setting of linear-Gaussian models.
\section{Motivation: a Linear-Gaussian Model}
\label{sec:motiv}
Let us consider data where $X$ is a multivariate Gaussian. Since we will be considering Gaussian data, the set of all environments $\cE$ will be parameterized using pairs of real vectors expressing expectations and positive definite matrices of an appropriate dimension expressing covariances: $\cE = \{(\mu, \Sigma) \mid \mu\in{\reals^d}, \Sigma\in{\mathbb{S}^d_{++}} \}$.

For two scenarios ((a) and (b) in Figure \ref{theoretical_cases}) we prove that when provided with data from $k$ training environments, where $k$ is linear in the number of features, and the environments satisfy some mild non-degeneracy conditions, any predictor that is calibrated on all training environments will not rely on any of the spurious features $X_{\text{ac-sp}}$, and will also be calibrated on all $e\in{\cE}$.

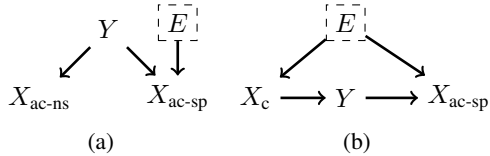
\begin{figure}[ht]
    \vspace{-0.3cm}
    \centering
    \subfigure[]{
    \begin{tikzpicture}
        \node[text centered] (y) {$Y$};
        \node[below right of = y, node distance = 1.3cm, text centered] (xac) {$X_{\text{ac-sp}}$};
        \node[above of = xac, node distance = 1.cm, draw, rectangle, dashed, text centered] (e) {$E$};
        \node[below left of = y, node distance = 1.3cm, text centered] (xns) {$X_{\text{ac-ns}}$};
   
        \draw[->, line width = 1] (y) -- (xac);
        \draw[->, line width = 1] (e) -- (xac);
        \draw[->, line width = 1] (y) to (xns);
    \end{tikzpicture}}
    \subfigure[]{
    \begin{tikzpicture}
        \node[draw, rectangle, dashed, text centered] (e) {$E$};
        \node[below of = e, node distance=1.cm, text centered] (y) {$Y$};
        \node[right of = y, node distance=1.5cm, text centered] (xac) {$X_{\text{ac-sp}}$};
        \node[left of = y, node distance=1.2cm, text centered] (xc) {$X_{\text{c}}$};
        \draw[->, line width = 1] (e) -- (xc);
        \draw[->, line width = 1] (xc) -- (y);
        \draw[->, line width = 1] (y) -- (xac);
        \draw[->, line width = 1] (e) -- (xac);
    \end{tikzpicture}}
    \vspace{-0.2cm}
    \caption{Graphs describing the two cases in our theoretical analysis. We use acronyms in subscripts to lighten notation. (a) All features are anti-causal, some are spurious while others are invariant. (b) Features are either causal and may undergo covariate shift, or are anti-causal and spurious.}
    \label{theoretical_cases}
\end{figure}

   

In scenario (a), we take $Y$ to be a binary variable drawn from a Bernoulli distribution with parameter $\eta\in{[0,1]}$, and observed features are generated conditionally on $Y$. The features $\x_{\text{ac-ns}}\in{\reals^{d_{\text{ns}}}}$ are invariant, meaning their conditional distribution given $Y$ is the same for all environments, whereas $\x_{\text{ac-sp}}\in{\reals^{d_{\text{sp}}}}$  are spurious features, as their distribution may shift between environments, altering their correlation with $Y$. The data generating process for training environment $i\in{[k]}$ in Fig. \ref{theoretical_cases}(a) is given by:

\begin{minipage}{.32\linewidth}
    \begin{align*}
    y = \begin{cases}
    1 & \text{w.p }  \eta \\
    0 & \text{o.w}
    \end{cases}
    \end{align*}
\end{minipage}
\begin{minipage}{.65\linewidth}
    \begin{align} \label{eq:setting_a}
        X_{\text{ac-ns}} &\mid Y=y \sim\N\left((y-1/2)\mu_{\text{ns}}, \Sigma_{\text{ns}}\right), \nonumber \\
        X_{\text{ac-sp}} &\mid Y=y \sim\N\left((y-1/2)\mu_i, \Sigma_i\right).
    \end{align}
\end{minipage}

For $\x= [\x_{\text{ac-ns}}, \x_{\text{ac-sp}}]$ we consider a linear classifier $f(\x ; \w,b) = \sigma( \w^\top \x  + b)$, where $\sigma: \reals \rightarrow [0,1]$ is some invertible function (e.g. a sigmoid). Since the mean of spurious features, $\mu_i$, is determined by $y$, these features can help predict the label in some environments. Yet, these correlations do not carry to all environments, and $f(\x)$ might rely on spurious correlations whenever the coefficients in $\w$ corresponding to $\x_{\text{ac-sp}}$ are non-zero.
Any such classifier can suffer an arbitrarily high loss in an unseen environment, because a new environment can reverse and magnify the correlations observed in $E_{\text{train}}$.
Using these definitions, we may now state our result for this case:
\begin{theorem} \label{thm:setting_a}
Given $k > 2d_{\text{sp}}$
training environments where data is generated according to \eqref{eq:setting_a} with parameters $\{\mu_i, \Sigma_i\}_{i=1}^{k}$, we say they lie in general position if for all non-zero $\x\in{\reals^{d_{\text{sp}}}}$:
\begin{align*}
    \mathrm{dim}\left(\mathrm{span}\left\{\begin{bmatrix} \Sigma_i\x + \mu_i \\
    1 \end{bmatrix}\right\}_{i\in{[k]}}\right) = d_{\text{sp}}+1.
\end{align*}
If a linear classifier is calibrated on $k$ training environments which lie in general position, then its coefficients for the features $\x_{\text{ac-sp}}$ are zero. Moreover, the set of training environments that do not lie in general position has measure zero in the set of all possible training environments $\cE^k$.
\end{theorem}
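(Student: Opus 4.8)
The plan is to prove the two assertions separately: (i) calibration on $k$ general-position environments forces the coefficients on $\x_{\text{ac-sp}}$ to vanish, and (ii) the tuples failing the general-position condition form a Lebesgue-null set. Part (i) is a Gaussian/Bayes computation; part (ii) is a determinantal-variety dimension count, and it is there that the hypothesis $k>2d_{\mathrm{sp}}$ is exactly what is needed.

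\emph{Part (i).} In Part (i) I would write $\w=(\w_{\mathrm{ns}},\v)$, with $\v\in\reals^{d_{\mathrm{sp}}}$ the block of coefficients on $\x_{\text{ac-sp}}$, and reduce calibration to a statement about the scalar score $S=\w^\top X+b$. Since $X$ is conditionally Gaussian, $S\mid Y=y,E=e_i\sim\N\big((y-\tfrac{1}{2})m_i+b,\ s_i^2\big)$, where $m_i=\w_{\mathrm{ns}}^\top\mu_{\mathrm{ns}}+\v^\top\mu_i$ and $s_i^2=\w_{\mathrm{ns}}^\top\Sigma_{\mathrm{ns}}\w_{\mathrm{ns}}+\v^\top\Sigma_i\v$. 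A Bayes-rule computation then gives $\E[Y\mid S=t,E=e_i]=\ell\big(\tfrac{m_i}{s_i^2}(t-b)+\ln\tfrac{\eta}{1-\eta}\big)$ with $\ell(z)=(1+e^{-z})^{-1}$. The degenerate cases are immediate: if $\w=0$ the classifier is constant and the claim is trivial; if $\v\ne 0$ then $\Sigma_i\succ 0$ forces $s_i^2>0$, so $S$ has full support and the calibration identity holds for every $t\in\reals$; and each $m_i\ne 0$, since otherwise $\E[Y\mid S,E=e_i]$ is constant, contradicting invertibility of $\sigma$. Calibration requires this expression to equal $\sigma(t)$ for all $i$ with one and the same $\sigma$, so injectivity of $\ell$ forces all the slopes $m_i/s_i^2$ to coincide, say $m_i/s_i^2=\beta\ne 0$. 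Hence $m_i=\beta s_i^2$ for every $i$, which rearranges to $\v^\top(\Sigma_i\x_0+\mu_i)=\gamma$ for all $i$, where $\x_0:=-\beta\v$ and $\gamma:=\beta\,\w_{\mathrm{ns}}^\top\Sigma_{\mathrm{ns}}\w_{\mathrm{ns}}-\w_{\mathrm{ns}}^\top\mu_{\mathrm{ns}}$. Equivalently, appending a $1$ to each $\Sigma_i\x_0+\mu_i$ yields $k$ vectors of $\reals^{d_{\mathrm{sp}}+1}$ all orthogonal to the fixed vector $(\v,-\gamma)$. But if $\v\ne 0$ then $\x_0\ne 0$ and $(\v,-\gamma)\ne 0$, while the general-position hypothesis at $\x=\x_0$ says those $k$ vectors span $\reals^{d_{\mathrm{sp}}+1}$, so no nonzero vector can be orthogonal to all of them --- a contradiction. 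Therefore $\v=0$. (This argument uses only the general-position property, which already forces $k\ge d_{\mathrm{sp}}+1$; the stronger bound $k>2d_{\mathrm{sp}}$ is not needed here.)

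\emph{Part (ii).} Regard a training tuple as a point of the parameter space $\cE^k$; each environment contributes a fixed number $D$ of real coordinates (a mean vector and a symmetric matrix --- positive-definiteness is an open condition, irrelevant for measure), so $\dim\cE^k=kD$. Fix a nonzero $\x$ and let $M(\theta,\x)$ be the $(d_{\mathrm{sp}}+1)\times k$ matrix whose $i$-th column is $\Sigma_i\x+\mu_i$ with a $1$ appended; general position fails at $\x$ precisely when $\operatorname{rank}(M(\theta,\x))\le d_{\mathrm{sp}}$. With $\x$ fixed, as $\theta$ ranges over $\cE^k$ the top $d_{\mathrm{sp}}$ rows of $M$ sweep out all of $\reals^{d_{\mathrm{sp}}\times k}$ (already through the $\mu_i$'s) while the last row stays $\mathbf 1^\top$; rank-deficiency then amounts to the $k$ columns of the top block lying on a common affine hyperplane of $\reals^{d_{\mathrm{sp}}}$, a determinantal condition of codimension $k-d_{\mathrm{sp}}$, which is proper exactly because $k\ge d_{\mathrm{sp}}+1$. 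So for each fixed $\x$ the bad $\theta$-set has dimension $kD-(k-d_{\mathrm{sp}})$, and the full bad set is the union of these over $\x\in\reals^{d_{\mathrm{sp}}}\setminus\{0\}$ --- i.e. the projection to $\theta$ of a semi-algebraic incidence set of dimension at most $\big(kD-(k-d_{\mathrm{sp}})\big)+d_{\mathrm{sp}}=kD-(k-2d_{\mathrm{sp}})$. Since projection does not increase dimension and this number is strictly below $kD$ exactly when $k>2d_{\mathrm{sp}}$, the bad set is semi-algebraic of dimension $<kD$, hence Lebesgue-null in $\cE^k$.

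The step I expect to be the real work is making (ii) rigorous: pinning down the codimension of the determinantal locus $\{\operatorname{rank}(M(\theta,\x))\le d_{\mathrm{sp}}\}$ (the usual $(p-r)(q-r)$ codimension formula applied inside the affine slice whose last row equals $\mathbf 1^\top$, together with the fact that $\theta\mapsto$ (top block of $M$) is a submersion), and then invoking Tarski--Seidenberg plus the fact that a semi-algebraic set of dimension $<kD$ has Lebesgue measure zero. Part (i), by contrast, is essentially bookkeeping once the Gaussian conditioning and the injectivity-of-$\ell$ step are in place; the only care needed there is isolating the degenerate sub-cases so that the functional identity genuinely holds on all of $\reals$.
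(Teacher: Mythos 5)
Your proposal is correct. Part (i) is essentially the paper's own argument (Lemma S1 plus Theorem S1 in the appendix): the same Gaussian conditioning and log-odds computation yields equality of the slopes $m_i/s_i^2$ across environments, the zero-slope/constant-classifier degeneracy is disposed of the same way, and the final contradiction is the same orthogonality-versus-spanning step, which you phrase a bit more directly by plugging $\x_0=-\beta\v$ straight into the general-position condition rather than introducing the full-rank matrix $M(\{\mu_i,\Sigma_i\},\w_{\mathrm{sp}})$; your observation that this half needs only $k\ge d_{\mathrm{sp}}+1$ is also consistent with the paper. Part (ii), however, is a genuinely different route. The paper follows Arjovsky et al.: it fixes the means, views the map $\x\mapsto(\Sigma_i\x-\mu_i)_i$ as a $d_{\mathrm{sp}}$-dimensional manifold $\W$, computes $\dim\M^1_*(k,d_{\mathrm{sp}})=k(d_{\mathrm{sp}}-1)+d_{\mathrm{sp}}$ by a submersion argument, and invokes transversality so that $\dim\W+\dim\M^1_*-kd_{\mathrm{sp}}=2d_{\mathrm{sp}}-k<0$ forces empty intersection for generic covariances. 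You instead bound, for each fixed $\x\ne 0$, the codimension of the failure locus in the full parameter space by $k-d_{\mathrm{sp}}$ (the affine-hyperplane/determinantal count), form the semi-algebraic incidence set over $\x$, and project, using Tarski--Seidenberg and the fact that semi-algebraic sets of dimension below the ambient dimension are Lebesgue-null. Both yield the theorem as stated; your argument is more elementary and self-contained (no transversality theorem), and it isolates $k>2d_{\mathrm{sp}}$ as exactly the codimension-minus-fiber-dimension threshold, whereas the paper's version buys a slightly stronger conclusion --- measure zero within the covariance tuples for every arbitrary fixed choice of means, which implies your joint statement by Fubini. The standard facts you flag (codimension of the rank-deficient slice with last row $\mathbf 1^\top$, fiberwise dimension bounds, and dimension behavior under projection) are all true and routine to cite, so there is no gap.
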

As a corollary, we see that calibration on training environments generalizes to calibration on $\cE$. The proof of this theorem is given in the supplementary material, \secref{sec:proof1}. The data generating process closely resembles the one considered by \cite{rosenfeld2020risks}, who use diagonal covariance matrices.

In the second scenario we consider the addition of causal features subject to covariate shift $\x_{\text{c}}\in{\reals^{d_{\text{c}}}}$, as shown in \figref{theoretical_cases}b. The covariate shift is induced when the environments $E$ alter the distribution of the causal features $\x_{\text{c}}$ \cite{scholkopf2012causal}. In this case, we analyze a regression problem since it is amenable to exact analysis. The data generating process for training environment $i\in{[k]}$ is:
\begin{align} \label{eq:setting_b}
    X_{c} \sim  \N(\mu^c_i, &\Sigma^c_{i}); \:
    Y = {\w^*_c}^\top \x_c + \xi, \: \xi\sim\N(0, \sigma^2_y) \nonumber \\
    &X_{\text{ac-sp}} = y\mu_i + \eta, \: \eta\sim \N(\textbf{0},\Sigma_i).
\end{align}
For $\x= [\x_c, \x_{\text{ac-sp}}]$ it turns out that in this case, calibration on multiple domains forces $f(\x)$ to discard $\x_{\text{ac-sp}}$, but also forces it to use $\w_c^*$, since it characterizes $P(Y \mid \x_c)$ which is the invariant mechanism in this scenario. The exact statement and proof are in \secref{sec:proof2} of the supplement.
\begin{theorem}[informal]
Let $f(\x; \w) = \w^\top\x$ be a linear regressor and assume we have $k > \max{\{d_\text{c}+2, d_{\text{sp}}\}}$ training environments where data is generated according to \eqref{eq:setting_b}. Under mild non-degeneracy conditions,
if the regressor is calibrated across all training environments then the coefficients corresponding to $X_{\text{c}}$ equal $\w_c^*$ and those that correspond to $X_{\text{ac-sp}}$ are zero.
\end{theorem}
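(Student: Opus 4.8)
The plan is to turn multi-domain calibration into a finite system of moment equations, then eliminate the spurious coefficients and only afterwards pin down the causal ones. The key enabling fact is that, within each training environment $e_i$, the triple $(\x_c, Y, \x_{\text{ac-sp}})$ is jointly Gaussian: $\x_c$ is Gaussian, $Y$ is affine in $\x_c$ plus the independent Gaussian noise $\xi$, and $\x_{\text{ac-sp}}$ is affine in $Y$ plus the independent Gaussian noise $\eta$. Consequently $(Y, f(X)) = (Y, \w^\top X)$ is jointly Gaussian within $e_i$, so $\E[Y \mid f(X)=\alpha, E=e_i]$ is affine in $\alpha$ and $\mathrm{Var}(Y \mid f(X)=\alpha, E=e_i)$ is constant in $\alpha$. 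Calibration of the mean therefore holds iff that affine map is the identity, i.e. iff (i) $\E[f(X)\mid E=e_i] = \E[Y\mid E=e_i]$ and (ii) $\mathrm{Cov}(Y, f(X)\mid E=e_i) = \mathrm{Var}(f(X)\mid E=e_i)$ for every $i\in[k]$; calibration of the variance estimate adds, after substituting (ii), the requirement that $\mathrm{Var}(Y\mid E=e_i) - \mathrm{Var}(f(X)\mid E=e_i)$ be the same for all $i$, which I would use only to exclude degenerate solutions.

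Next I would make the residual explicit. Splitting $\w = (\w_c, \w_s)$ according to $\x = [\x_c, \x_{\text{ac-sp}}]$ and substituting $\x_{\text{ac-sp}} = Y\mu_i + \eta$ and $Y = {\w^*_c}^\top \x_c + \xi$, one obtains inside $e_i$
\begin{align*}
Y - f(X) = \u_i^\top \x_c + (1-t_i)\xi - \w_s^\top \eta, \qquad t_i := \w_s^\top \mu_i, \quad \u_i := (1-t_i)\w^*_c - \w_c .
\end{align*}
Since $\x_c$, $\xi$ and $\eta$ are mutually independent within $e_i$, condition (i) becomes $\u_i^\top \mu^c_i = 0$, condition (ii) becomes $\u_i^\top \Sigma^c_i(t_i\w^*_c + \w_c) + t_i(1-t_i)\sigma^2_y = \w_s^\top \Sigma_i \w_s$, and the variance condition becomes ``$\mathrm{Var}(Y\mid E=e_i)-\mathrm{Var}(f(X)\mid E=e_i)$ independent of $i$''. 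The target conclusion ``$\w_s = 0$ and $\w_c = \w^*_c$'' is exactly ``$t_i \equiv 0$ and $\u_i \equiv 0$''.

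Then I would argue in two stages. \emph{Stage 1 ($\w_s=0$).} The right-hand side of (ii) is $\langle \w_s\w_s^\top, \Sigma_i\rangle$, which is strictly positive for every $i$ when $\w_s\neq 0$ because each $\Sigma_i$ is positive definite — the same lever used in \thmref{thm:setting_a}. The non-degeneracy hypothesis I would impose is that the spurious parameters $(\mu_i,\Sigma_i)$ vary genuinely and independently of the causal parameters $(\mu^c_i,\Sigma^c_i)$; then, holding everything but $\Sigma_i$ fixed, the left-hand side of (ii) does not move while the right-hand side does unless $\w_s=0$, so the $k>d_{\text{sp}}$ instances of (ii) force $\w_s\w_s^\top=0$, hence $\w_s=0$ and $t_i=\w_s^\top\mu_i=0$ for all $i$, whence $\u_i=\v := \w^*_c-\w_c$ for all $i$. \emph{Stage 2 ($\w_c=\w^*_c$).} With $\w_s=0$ the surviving equations are $\v^\top\mu^c_i=0$ and $\w_c^\top\Sigma^c_i\v=0$ for all $i$, together with ``$\v^\top\Sigma^c_i\v$ independent of $i$'' from the variance condition. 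If the covariate shift is genuine so that $\{\mu^c_i\}$ spans $\reals^{d_{\text{c}}}$, the first batch already gives $\v=0$; in general, using ${\w^*_c}^\top\Sigma^c_i\v = \v^\top\Sigma^c_i\v \ge 0$, the variance equation rules out $\w_c=0$ (which would force ${\w^*_c}^\top\Sigma^c_i\w^*_c$ to be constant in $i$), and then for $\{\mu^c_i,\Sigma^c_i\}$ in general position with $k>d_{\text{c}}+2$ the vectors $\{\Sigma^c_i\w_c\}_i$ span $\reals^{d_{\text{c}}}$, so $\v\perp\reals^{d_{\text{c}}}$ and $\v=0$. Finally, the environment tuples violating these conditions lie in the zero set of a nontrivial polynomial and hence form a measure-zero subset of $\cE^k$, exactly as in \thmref{thm:setting_a}.

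The main obstacle is Stage 1 together with fixing the precise non-degeneracy condition. In the purely anti-causal model of \thmref{thm:setting_a}, $f(X)$ only touches anti-causal features and the calibration constraints stay relatively clean; here $\x_{\text{ac-sp}}$ is generated from $Y$, which is itself generated from $\x_c$, so conditions (i)–(ii) entangle $\w_s$, $\w_c$ and $\w^*_c$ through the scalars $t_i$ and through cross-covariance terms such as $\u_i^\top\Sigma^c_i\w^*_c$; isolating $\w_s\w_s^\top$ from this tangle, and turning the informal ``generic independent perturbation'' argument into a single Zariski-open condition on $\cE^k$ that also handles the continuum of candidate weight vectors, is the delicate part. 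I also expect the bookkeeping of which subfamilies of environments control which unknowns — and in particular the source of the ``$+2$'' in $k>d_{\text{c}}+2$ — to be the most tedious piece of the formal write-up.
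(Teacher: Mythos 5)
Your reduction of calibration to per-environment moment identities is the same starting point as the paper's proof of \thmref{thm:causal_regression}: joint Gaussianity of $(Y,\w^\top X)$ makes $\E[Y\mid f(X)_1=\alpha,E=e_i]$ affine in $\alpha$, and your conditions (i)--(ii) together with the cross-environment equality of $\mathrm{Var}(Y\mid E=e_i)-\mathrm{Var}(f(X)\mid E=e_i)$ are an equivalent packaging of the paper's slope, intercept and second-moment constraints (your residual algebra, $\u_i=(1-t_i)\w^*_c-\w_c$ with $t_i=\w_s^\top\mu_i$, checks out). The genuine gap is Stage~1: the claim that the $k>d_{\text{sp}}$ instances of (ii) alone force $\w_s=\mathbf{0}$. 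The perturbation heuristic (``move $\Sigma_i$ while the left side stays put'') varies the environments, but the theorem concerns one fixed tuple of environments and must rule out a continuum of candidate weights, for which $\w_c$ and the scalars $t_i$ are free to co-adjust. Concretely, (ii) is a system of $k$ equations in the $d_c+d_{\text{sp}}$ unknowns $(\w_c,\w_s)$, it is satisfied at $(\w^*_c,\mathbf{0})$, and its Jacobian there has rows $\bigl[-{\w^*_c}^\top\Sigma^c_i,\;(\sigma^2_y-{\w^*_c}^\top\Sigma^c_i\w^*_c)\,\mu_i^\top\bigr]$. Under the hypothesis $k>\max\{d_c+2,d_{\text{sp}}\}$ one may well have $k<d_c+d_{\text{sp}}$ (e.g.\ $d_c=d_{\text{sp}}=10$, $k=13$); then for generic environments this Jacobian is surjective and its kernel contains no nonzero vector with vanishing $\w_s$-block (the $\w_c$-block alone already has rank $d_c$), so the zero set of (ii) is a manifold of dimension $d_c+d_{\text{sp}}-k>0$ through $(\w^*_c,\mathbf{0})$ containing points with $\w_s\neq\mathbf{0}$ arbitrarily close to it. No ``mild non-degeneracy'' of the environments can remove these solutions; they are excluded only by the constraints you deferred, namely (i) and the variance equalities, so $\w_s=\mathbf{0}$ cannot be extracted from (ii) in isolation.

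The paper resolves exactly this entanglement by eliminating in the opposite order and, crucially, by linearizing first: the intercept conditions become a linear system \eqref{eq:means_invariance} in $(\bar{\w}_c,\w_{sp},t_2)$ with $\bar{\w}_c=\w^*_c/t-\w_c$; the second-moment conditions become a further linear system in $(\bar{\w}_c,t_2,t_3)$ whose coefficient matrix $M_2\in\reals^{k\times (d_c+2)}$ explains the ``$+2$'' you were unsure about; full rank of $M_2$ with $k>d_c+2$ forces $\bar{\w}_c=\mathbf{0}$, $t_2=0$, and only then does substituting back into \eqref{eq:means_invariance}, with $k>d_{\text{sp}}$ and $\{\mu_i\}$ spanning $\reals^{d_{\text{sp}}}$, give $\w_{sp}=\mathbf{0}$, after which calibration pins $t=1$, i.e.\ $\w_c=\w^*_c$. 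Because the non-degeneracy there is a rank condition on an explicit matrix that does not involve the unknown $\w$, uniformity over the continuum of candidate weights comes for free --- precisely the ingredient your genericity sketch lacks. Your Stage~2 is essentially sound once $\w_s=\mathbf{0}$ is granted (it mirrors the paper's final substitution, with spanning of the causal means playing the role of the rank condition), but to repair the proof you must either use (i), (ii) and the variance conditions jointly before eliminating, or strengthen the assumption to $k>d_c+d_{\text{sp}}$, which is a weaker theorem than the one stated.
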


Together, these results show calibration can generalize across environments, given that the number of environments is approximately that of the spurious features. They also show that for the settings above, the relatively stable and well-known notion of calibration implies avoiding spurious correlations. 

\section{Related Work}
As discussed in Section \ref{sec:cal_invar}, multi-domain calibration is an instance of an invariant representation \cite{arjovsky2019invariant}. Many extensions to the above work have been proposed, e.g. \cite{krueger2020out, bellot2020generalization}. Yet, recent work claims that many of these approaches still fail to find invariant relations in cases of interest \cite{kamath2021does, rosenfeld2020risks, guo2021out}, where a significant challenge seems to be the gap between what is achieved by the regularization term used in practice and the goal of conditional independence $Y \indep E \mid \Phi(X)$. Gulrajani et al. \cite{gulrajani2020search} give a sobering view on methods for OOD generalization, emphasizing the power of ERM and data augmentation, and the challenge of model selection. We claim that compared to the above approaches, multi-domain calibration studied here is a simpler form of invariance. Furthermore, calibration is attractive because there are standard tools to quantify it such as calibration scores \cite{DBLP:conf/cvpr/NixonDZJT19} and a vast literature on its properties and how it can be obtained \cite{DBLP:conf/icml/ZadroznyE01,vovk2003self,niculescu2005predicting,kumar2018trainable,vaicenavicius2019evaluating,gupta2020distribution,rahimi2020intra}.

Learning models which generalize OOD is a fruitful area of research with many recent developments. Most work focuses on the case of Domain Adaptation where unlabeled samples are available from the target domain, including recent work on OOD calibration \cite{wang2020transferable}. However, important work has also been done on the area of our focus -- the so-called ``proactive'' case \cite{subbaswamy2019preventing}, where no OOD samples are available whatsoever \cite{magliacane2018domain,heinze2018invariant, rothenhausler2018anchor,peters2016causal, sagawa2019distributionally}. 

Calibration also plays an important role in uncertainty estimation for deep networks \cite{guo2017calibration}, and recently in fairness, where calibration on subgroups of populations is sought \cite{pleiss2017fairness}. This has interesting resemblance to the multiple environments calibration we consider here. A more general notion of multi-calibration has also been studied in this context \cite{hebert2018multicalibration}, with recent results on sample complexity \cite{shabat2020sample} which may provide tools to finite sample analysis of domain generalization. Finally, multiple methods for training calibrated models \cite{kumar2018trainable, mukhoti2020calibrating, rahimi2020intra} have also been proposed. In Section \ref{sec:algs} we propose a generalization of \cite{kumar2018trainable} to the multi-domain case to achieve multi-domain calibration.
\section{Proactively Achieving Multi-Domain Calibration} \label{sec:algs}
So far we have seen a general argument why calibration can limit spurious correlations, and that in linear-Gaussian models multi-domain calibration guarantees OOD generalization. Now we turn to a more applied perspective and show how can we optimize models so they achieve this type of calibration in practice. We propose three approaches: (1) using calibration measures for model selection, (2) post-processing calibration, and (3) a calibration objective building on a method proposed by \cite{kumar2018trainable}. \secref{sec:calibration_intro} in the supplementary provides a slightly broader introduction to notions we use here. 
We will assess model calibration by the Expected Calibration Error (\emph{ECE}) of the calibration curve~\cite{degroot1983comparison}, which is the average deviation between model accuracy and model confidence.


\subsection{Model selection with average ECE} \label{sec:model_selection} Model selection is challenging when aimed at OOD generalization. As recently observed by \cite{gulrajani2020search}, since OOD accuracy is often at odds with In-Domain (ID) accuracy, selection based on ID validation error eliminates the advantage of domain generalization methods over vanilla ERM with data augmentation. We suggest that model selection towards OOD generalization should balance ID validation error with another observable surrogate for the stability of a model to distribution shifts between domains. Motivated by multi-domain calibration, we propose using the average ECE across training environments as this surrogate. Concretely, we propose choosing a model with lowest average ECE from those obtaining ID validation accuracy that is above a certain user-defined threshold.
\subsection{Post-Processing Calibration}\label{subsec:postproc}
Practitioners interested in (single-domain) calibrated models often apply post-processing calibration methods to binary classifiers, where the most widely used approach is Isotonic Regression Scaling \cite{DBLP:conf/icml/ZadroznyE01, niculescu2005predicting}. 
Unlike standard calibration problems, in our case there are multiple domains to calibrate over. We give two ways of extending Isotonic Regression to the multi-domain setting, which we term ``naive calibration'' and ``robust calibration''.
\textbf{Naive Calibration} takes predictions of a trained model $f$ on validation data pooled from all domains and fits an isotonic regression $z^*$. We then report the performance of $z^* \circ f$ on the OOD test set.\\
\textbf{Robust Calibration:} In a multiple domain setting, Naive calibration may produce a model that is well calibrated on the pooled data, but uncalibrated on individual environments.
Since our goal is simultaneous calibration, the following alternative attempts to bound the worst-case miscalibration across training environments. For each environment $e\in{E_{\text{train}}}$, we denote the number of validation examples we have from it by $N_{e}$, and by $f_{e,i}$ the prediction of the model on the $i$-th example. Then in a similar vein to robust optimization, we fit an isotonic regressor that solves:
$\begin{aligned} \label{eq:robust_iso}
    z^* =  \argmin_{z}\max_{e\in{E_{\text{train}}}}{\frac{1}{N_e}\sum_{i=1}^{N_e}{\left(z(f_{e,i})-y_i\right)^2}}.
\end{aligned}$
Since Isotonic Regression can be formulated as a quadratic program, and \eqref{eq:robust_iso} minimizes a pointwise maximum over such objectives, we can cast Eq. \ref{eq:robust_iso} as a convex program and solve with standard optimizers. 
We then evaluate the OOD performance of $z^* \circ f$.

\subsection{Learning with Multi-Domain Calibration Error}\label{subsec:clove}
The above model selection and post-processing methods are easy to apply and (as we will soon see) surprisingly effective. However, both are limited in their power to learn a model that is truly well-calibrated across multiple domains. We now propose a more powerful approach: an objective function that directly penalizes calibration errors on multiple domains during training.
Specifically, we propose learning a parameterized classifier $f_\theta(\x)$ using a learning rule of the form:
    $\min_{\theta}{\sum_{e\in{E_{\text{train}}}}{l^e(f_\theta)} + \lambda \cdot r(f_\theta) }$,
where $l:\reals \times \reals \rightarrow \reals$ is an empirical loss function (e.g. cross-entropy) and $l^e(f_\theta)$ denotes the expected loss over data from training environment $e$, and $r(f_\theta)$ is a regularization term over multiple environments. Using this notation the method proposed by \cite{arjovsky2019invariant} learns a classifier $f=w\circ \Phi$ with a regularizer given by $r(f) = \sum_{e\in{E_{\text{train}}}}{r^e_{\text{IRMv1}}}(f)$, where $r^e_{\text{IRMv1}}(f) = \|\nabla_{w \mid w=1}{l^e(w\cdot \Phi)}\|^2$. 

Our proposed regularizer $r(f_\theta)$ is based on the work of Kumar et al. \cite{kumar2018trainable}, who introduce a method they call Maximum Mean Calibration Error (MMCE). MMCE harnesses the power of universal kernels to express the ECE as an Integral Probability Measure, and works as follows: For a dataset $D = \{\x_i, y_i\}_{i=1}^{m}$, denote the confidence of a classifier on the $i$-th example by $f_{\theta;i}=\max\{f_\theta(x_i), 1-f_\theta(x_i)\}$
and its correctness by $c_i=\mathbbm{1}_{| y_i-f_{\theta;i} | < \frac{1}{2}}$.
For a given universal kernel $k:\reals \times \reals \rightarrow \reals$, MMCE over the dataset $D$ is given by:
    $r^{D}_{\text{MMCE}}(f_\theta) = \frac{1}{m^2}\sum_{i,j\in{D}}{(c_i-f_{\theta;i})(c_j-f_{\theta;j})k(f_{\theta;i},f_{\theta;j})}$.
\textbf{Calibration Loss Over Environments (\ours{}).} 
Given multiple training domains with a dataset $D^e$ for each $e\in{E_{\text{train}}}$, we arrive at our proposed regularizer by aggregating MMCE over them:
$r_{\text{\ours{}}}(f_\theta) = \sum_{e\in{E_{\text{train}}}}r^{D_e}_{\text{MMCE}}(f_\theta)$.
A key property of \ours{} is that its minima correspond to perfectly calibrated classifiers over all training domains, a consequence of the correspondence between MMCE and perfect calibration.
\begin{corollary}[of Thm.~1 in \cite{kumar2018trainable}]\label{corr:proper_score}
\ours{} is a proper scoring rule. That is, it equals $0$ if and only if $f_\theta(\x)$ is perfectly calibrated for every $e \in E_{\text{train}}$.
\end{corollary}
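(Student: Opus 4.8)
The plan is to obtain the statement as an immediate consequence of Theorem~1 of Kumar et al.~\cite{kumar2018trainable}, applied environment by environment, together with the additive structure of \ours{} and the nonnegativity of each MMCE term. Throughout I phrase the argument at the population level, matching the Definition of calibration used in the paper; the purely empirical version is identical with sample averages replacing expectations.

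First I would rewrite each per-environment term in reproducing-kernel form. Let $\phi$ be the feature map of the universal kernel $k$, so that $k(s,t)=\inner{\phi(s),\phi(t)}_{\H}$. Writing $\hat f_{\theta}(\x)$ and $c(\x,y)$ for the confidence and correctness of $f_\theta$ (or, in the binary setting, simply $f_\theta(\x)$ and $y$, so that the notion of ``perfectly calibrated'' coincides with Definition~1), a direct expansion gives
$r^{D^e}_{\mathrm{MMCE}}(f_\theta) = \big\| \E\big[(c - \hat f_\theta)\,\phi(\hat f_\theta) \mid E=e\big] \big\|_{\H}^2 \ge 0 .$
Hence $r_{\ours{}}(f_\theta) = \sum_{e\in E_{\text{train}}} r^{D^e}_{\mathrm{MMCE}}(f_\theta)$ is a finite sum of nonnegative reals, so it vanishes if and only if every summand vanishes, i.e. $r^{D^e}_{\mathrm{MMCE}}(f_\theta)=0$ for all $e\in E_{\text{train}}$.

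Next I would invoke Theorem~1 of \cite{kumar2018trainable} for each environment: because $k$ is a \emph{universal} kernel, the associated mean embedding is injective on the relevant class of measures, so the element of $\H$ above is zero precisely when $\E[(c-\hat f_\theta)\mid \hat f_\theta=\alpha, E=e]=0$ for (almost) every $\alpha$ in the range of $f_\theta$ restricted to $e$ — which is exactly the statement that $f_\theta$ is perfectly calibrated on environment $e$. Combining this with the previous step yields $r_{\ours{}}(f_\theta)=0$ iff $f_\theta$ is perfectly calibrated on every training environment, which is the claim; the term ``proper scoring rule'' is just a name for a loss with this zero-iff-calibrated property.

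There is no real obstacle here, since this is a corollary; the only points requiring care are (i) being explicit that \emph{universality} of $k$ is what upgrades the easy direction (``$r^{D^e}_{\mathrm{MMCE}}$ is nonnegative and vanishes at calibrated classifiers'') to the biconditional — without it only one implication survives — and (ii) clarifying that MMCE is stated in \cite{kumar2018trainable} in terms of confidence and correctness, so one should either note that for binary classification we apply it with $f_\theta(\x)$ and $y$ directly (making ``perfectly calibrated'' match Definition~1 verbatim), or remark that the two notions agree for the purpose of the zero set. Everything else is a per-environment quotation of the cited result.
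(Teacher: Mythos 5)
Your proposal is correct and matches the paper's own argument: the paper likewise writes $r_{\text{\ours{}}}$ as $\sum_{e\in E_{\text{train}}} \mathrm{MMCE}(f_\theta; P[X,Y\mid E=e])$ and applies Theorem~1 of \cite{kumar2018trainable} to each summand, with nonnegativity of each MMCE term giving that the sum vanishes iff every term does. Your extra remarks on universality and on the confidence/correctness parameterization are sensible clarifications but do not change the route.
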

Additional properties of \ours{}, such as large deviation bounds and relation to ECE, can also be derived; see results in \cite{kumar2018trainable} for further details. In the following section, we will see how these properties translate into favorable OOD generalization in practice when training with \ours{}.

\section{Experiments and Results}
\label{sec:exp}

\subsection{Colored MNIST and Two-Bit Environments}\label{subsec:2bit}

In order to explore the challenges of OOD generalization and how they relate to learning from multiple environments, \cite{arjovsky2019invariant} used the colored MNIST dataset \cite{kim2019learning}. In this dataset certain digits tend to be colored either red or green in the train set, but the correlation between colors and digits is flipped in the OOD test set, making color a spurious feature. This dataset was then further simplified into ``Two-Bit'' environments by \cite{kamath2021does}, who proved that the IRMv1 penalty proposed in \cite{arjovsky2019invariant} does \emph{not} in fact achieve the correct invariant solution on the simplified setting.
The Two-Bit environments problem setting has two binary features, $X_1, X_2\in{\{-1, 1\}}$, corresponding respectively to digit identity ($0-4$ or $5-9$) and digit color in the original colored MNIST. 
The environments are parameterized by $e=(\alpha, \beta)\in{[0,1]^2}$ controlling the correlation of the features with the label:\\ 
   $ Y \leftarrow \mathrm{Rad}(0.5),~
    X_1 \leftarrow Y\cdot \mathrm{Rad}(\alpha), X_2 \leftarrow Y\cdot \mathrm{Rad}(\beta)$,
where $\mathrm{Rad}(\delta)$ is a random variable equal to $-1$ with probability $\delta$ and $1$ with probability $1-\delta$. At training we are given data from two environments $e_1=(\alpha, \beta_1), e_2=(\alpha, \beta_2)$, $\beta_1 \neq \beta_2$. The learned model is tested on a new environment $e_3=(\alpha, \beta_3)$ with $\beta_3$ significantly different from $\beta_1, \beta_2$. Only a model discarding the spurious feature $X_2$ will maintain its accuracy moving from train to OOD test.\\ \textbf{Calibration discards spurious correlation in Two-Bit environments.}
\figref{fig:mnist}(a), which we adapt from Figure 6 in Appendix B of \cite{kamath2021does}, illustrates the merits of \ours{} in this setting. The figure shows the space of odd classifiers, i.e. those for which $f(1,-1) = -f(-1,1)$, and $f(1,1) = -f(-1,-1)$.\footnote{As explained in \cite{kamath2021does}, the optimal solutions are odd so we may focus on them for visualization purposes.} The true invariant classifiers are those for which in addition $f(1,1) = f(1,-1)$, corresponding to models lying on the diagonal of \figref{fig:mnist}(a), denoted by the dashed gray line.
In the figure, we plot in solid lines the classifiers for which $r^e_{\text{IRMv1}}(f)$ equals $0$, and in solid circles the classifiers for which $r^e_{\text{MMCE}}(f)$ equals 0 (due to \corref{corr:proper_score} these coincide with calibrated classifiers on environment $e$). Note that in this parameterization, the zeros of $r^e_{\text{IRMv1}}(f)$ are lines whereas the zeros of $r^e_{\text{MMCE}}(f)$ are isolated points. 
Intersections of the zeros of $r^e_{\text{IRMv1}}(f)$ denote solutions for which the corresponding regularization terms are $0$ on all respective environments, while intersection of zeros of $r^e_{\text{MMCE}}(f)$ are the zeros of $r_{\text{\ours}}(f)$. As observed by \cite{kamath2021does}, when $E_{\text{train}}=\{e_1, e_2\}$ the solution denoted by $\text{OPT}_{\text{IRMv1}}$ has the lowest empirical loss, yet this solution has a spurious correlation with $X_2$ and thus will incur a higher loss on the test environment $e_3$. This means the corresponding IRMv1 learning rule cannot retrieve the optimal invariant classifier. On the other hand, learning with \ours{} does retrieve the optimal invariant classifier in this case, in addition to the trivial, constant classifier. This means \ours{} discards spurious correlations in cases where IRMv1 does not. In \secref{sec:cmnist_results} we present experiments reproducing the above scenario on the Colored MNIST dataset.

\textbf{Model selection based on average ECE} 
We train models with varying hyperparameters on Colored MNIST using ERM, \ours{} and IRM, (100 models with each algorithm, see \secref{sec:cmnist_results} of the supplement for details). We then calculate the ECE and IRMv1 penalties of each model over a held-out validation set from each training environment, and evaluate the average of these against OOD accuracy. \figref{fig:mnist}(b) presents the results across all trained models. The ID ECE penalty displays a very strong correlation across the entire range and every training regime (Pearson corr. = -0.92), while ID IRMv1 behaves more erratically (Pearson corr. = -0.59).
Since quantities used for model selection should be agnostic to choices made at training time, we suggest that ID ECE is a better choice for use in model selection. Further results on model selection can be found in the supplement, \secref{sec:cmnist_results}.

\begin{figure}[ht]
    \centering
    \subfigure[]{\centering
    \includegraphics[width=0.29\textwidth]{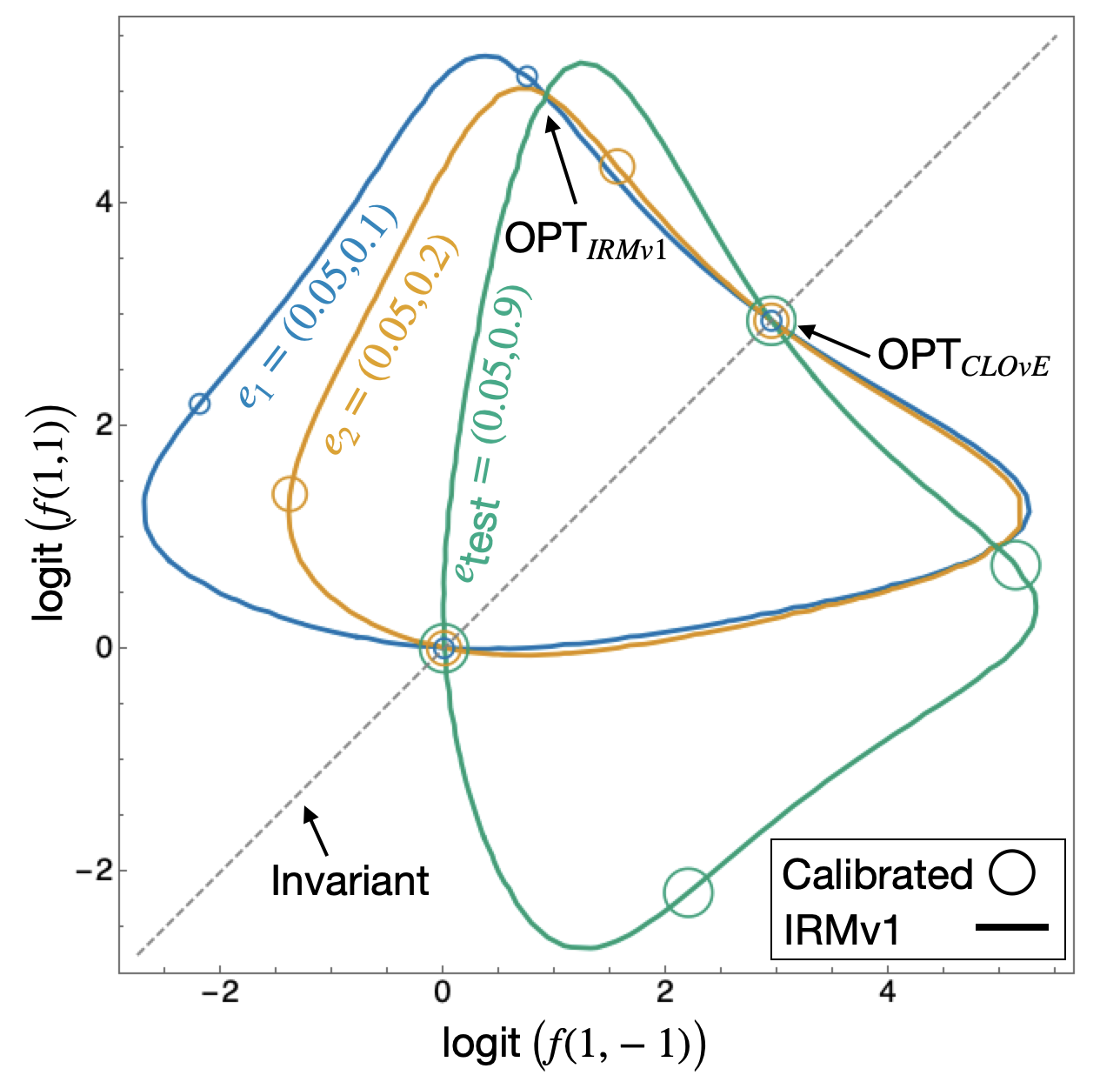}}
    \subfigure[]{\centering
    \includegraphics[width=0.69\textwidth]{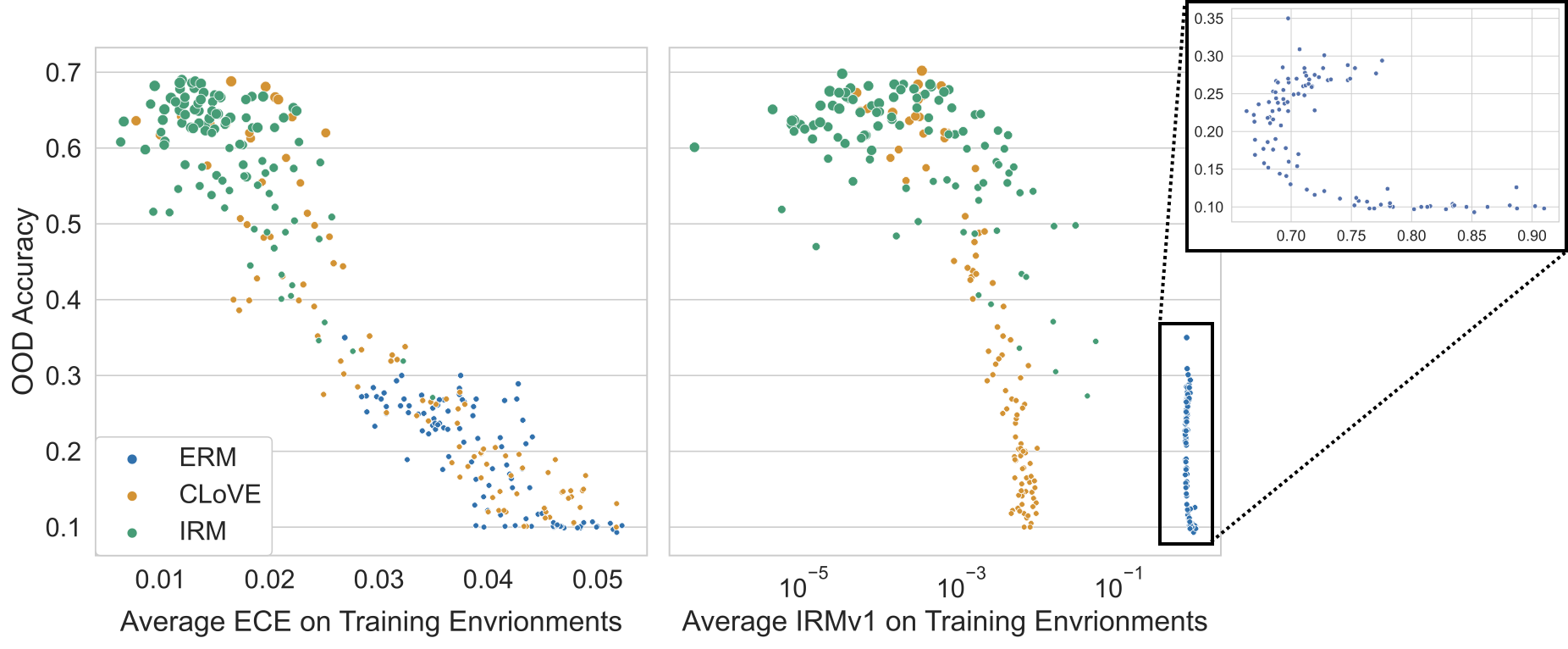}}
    \vspace{-0.2cm}
    \caption{(a) Zeros of MMCE and IRMv1 are indicated by circles and by solid lines respectively, in a color corresponding to each environment. The dashed diagonal is the space of invariant solutions. Some zeros intersect across environments, and these are therefore the domain-invariant solutions. Among the domain-invariant solutions, $\text{OPT}_{\text{IRMv1}}$ has the lowest empirical loss when training on $e_1, e_2$. Hence learning with IRMv1 will prefer this model over $\text{OPT}_{\text{\ours}{}}$, which discards the spurious correlation with $X_2$. (b) Correspondence between observable criteria and OOD accuracy in CMNIST. Each point corresponds to a model trained with some training algorithm (marked by color) and hyperparameter setting. Size of marker is proportional to the ratio between OOD and ID accuracies.}
    \label{fig:mnist}
\end{figure}

\subsection{WILDS Benchmarks}
\label{subsec:wilds}

WILDS is a recently proposed benchmark of in-the-wild distribution shifts from several data modalities and applications\footnote{\url{https://wilds.stanford.edu}}. Table \ref{tab:wilds} presents the four WILDS datasets we experiment with, chosen to represent diverse OOD generalization scenarios. We follow the models and training algorithms proposed by \cite{koh2020wilds}. In order to perform multi-domain calibration we modify the splits to include a multi-domain validation set whenever possible. See supplemental \secref{sec:datasets_models} for details and for additional results on Amazon Reviews. As in \cite{koh2020wilds}, we use three different training algorithms to train our models: \textbf{ERM}, \textbf{IRM}, \textbf{DeepCORAL}, and further use \textbf{GroupDRO} for one of the datasets, compatible with WILDS version 1.0.0. We apply three calibration approaches described in \ref{subsec:postproc} and \ref{subsec:clove} above to each trained model: \textbf{naive calibration} and \textbf{robust calibration}, which are post-processing methods and therefore applied on the models' outputs; and \textbf{\ours}, which we apply as a fine-tuning approach to the top layers of each trained model.
We train each (algorithm $\times$ calibration) combination four times with different random seeds, and report average results and their standard deviations.

\begin{table*}[htp!]
    \centering
    \scalebox{0.8}{
    \begin{tabular}{l|lllll} \toprule
        Dataset & Type & Label ($y$) & Input ($x$) & Domain ($e$) & Model ($f(x)$) \\ \midrule \midrule
        \textbf{PovertyMap} & Regression & Asset Wealth Index & Satellite Image & Country & ResNet \\
        \textbf{Camelyon17} & Binary & Tumor Tissue & Histopathological Image & Hospital & DenseNet \\
        \textbf{CivilComments} & Binary & Comment Toxicity & Online Comment & Demographics & BERT\\
        \textbf{FMoW} & Multi-class & Land Use Type & Satellite Image & Region & DenseNet \\ \bottomrule
    \end{tabular}}
    \caption{Description of each of the datasets used in our WILDS experiments.}
    \label{tab:wilds}
    \vspace{-0.3cm}
\end{table*}





Table \ref{tab:fmow_camelyon} presents our main results on the \textit{FMoW} (left) and \textit{Camelyon17} (right) datasets. On both datasets, robust calibration already improves performance, and CLOvE then significantly outperforms robust calibration, improving performance by $7\%$ and  $2.8\%$ (absolute) over the strongest alternative on \textit{FMoW} and \textit{Camelyon17}, respectively. When compared to the original model, the performance of CLOvE is even more striking, with CLOvE outperforming it by more than $10\%$ (absolute) on \textit{FMoW} and  $6\%$ on \textit{Camelyon17}. Another appealing property of CLOvE is the low variance exhibited across different runs. Indeed, CLOvE has lower variance than both naive and robust calibration approaches, and has lower variance than the original (uncalibrated) model on 4 of the 6 experiments. 

\begin{table}[ht]
    \centering
    \scalebox{0.8}{
    \begin{tabular}{l|cccc|cccc} \toprule
     & \multicolumn{4}{c}{\textit{FMoW}} &  \multicolumn{4}{c}{\textit{Camelyon17}} \\
    Algorithm &  Orig. & Naive Cal. & Rob. Cal. & CLOvE  & Orig. & Naive Cal. & Rob. Cal. & CLOvE \\ \midrule \midrule
    ERM & 32.63 & 33.09 & 37.19 & \textbf{44.16} & 66.66 & 71.23 & 71.22 & \textbf{75.75} \\
     & (\underline{1.6}) & (2.1) & (3.5) & (1.8) & (14.4) & (8.9) & (8.6) & (\underline{4.9}) \\
    DeepCORAL & 31.73 & 31.75 & 33.86 & \textbf{40.05} & 72.44 & 75.97 & 76.8 & \textbf{79.96} \\
     & (1.) & (1.) & (1.6) & (\underline{0.9}) & (4.4) & (5.4) & (6.5) & (\underline{3.9}) \\
    IRM & 31.33 & 31.81 & 34.41 & \textbf{42.24} & 70.87 & 73.25 & 73.4 & \textbf{73.95} \\ 
     & (\underline{1.2}) & (1.6) & (1.5) & (1.4) & (6.8) & (6.6) & (6.9) & (\underline{6.1}) \\ \bottomrule
    \end{tabular}}
    \caption{Left: worst unseen region accuracy on OOD test set in \textit{FMoW}. Right: Accuracy on unseen hospital test set in \textit{Camelyon17}. Orig.: original algorithm, no changes applied. Best OOD result for each domain in \textbf{bold}. Standard deviation across runs in brackets, lowest OOD std. is \underline{underlined}.}
    \label{tab:fmow_camelyon}
    \vspace{-0.5cm}
\end{table}

\textbf{Analysis.} As can be seen in Figure \ref{fig:camelyon_ece}, improvements in ID calibration are associated with better OOD performance. Interestingly, when our post-processing does not improve OOD performance, it is often linked to our inability to substantially improve ID calibration. This is most visible in IRM experiments, where robust calibration is unable to outperform naive calibration both in terms ID calibration and in OOD performance. Finally, we find it interesting that merely post-processing the data (as in robust calibration) can already have such a marked effect on OOD accuracy, though still inferior to actually optimizing for multi-domain calibration as done by \ours.  

\begin{wrapfigure}{r}{0.4\textwidth}
    \centering
    \vspace{-0.4cm}
    \includegraphics[width=0.4\textwidth]{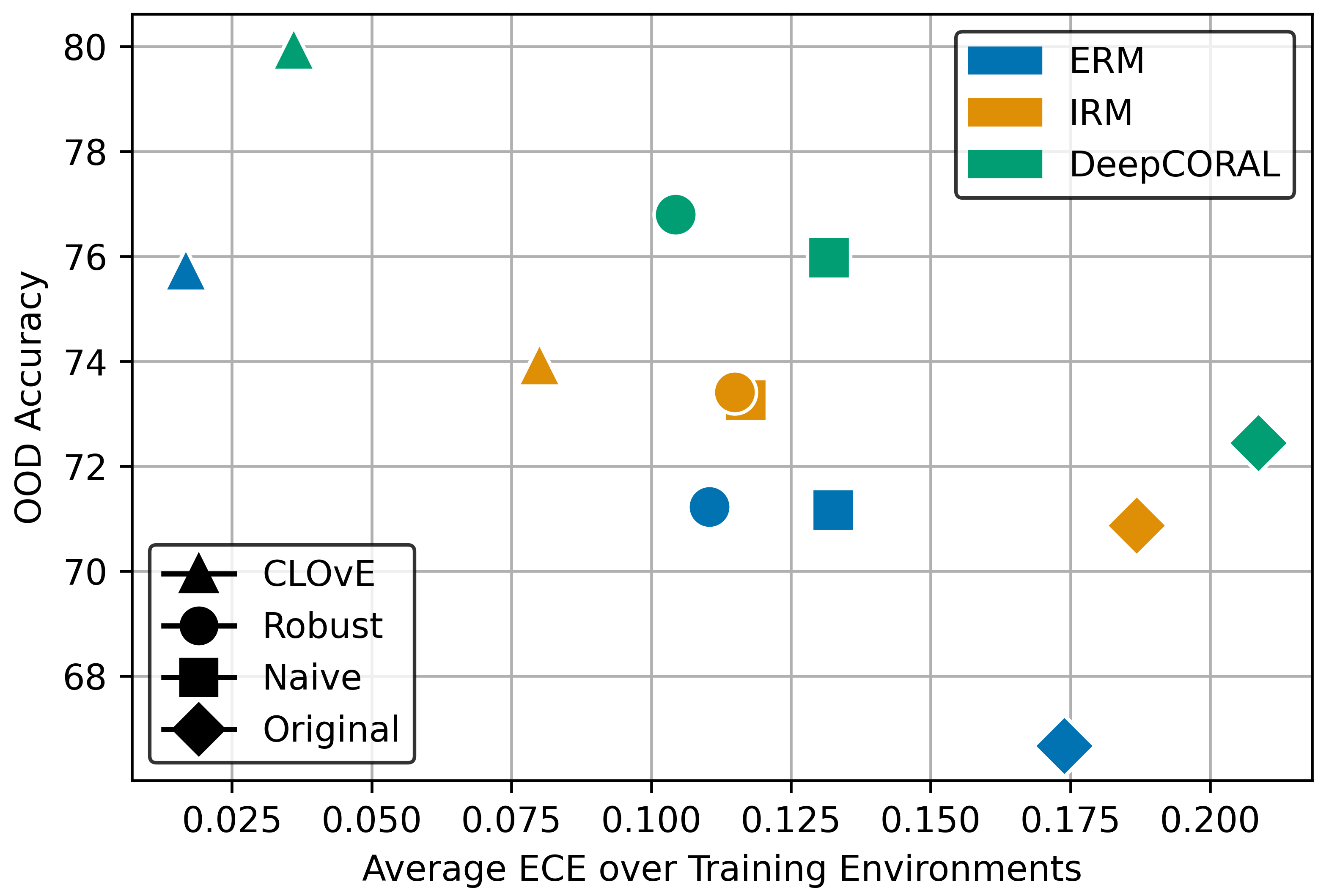}
    \vspace{-0.7cm}
    \caption{OOD accuracy as a function of average ECE over training domains, for all models on the \textit{Camelyon17} dataset.}
    \label{fig:camelyon_ece}
\end{wrapfigure}

\textbf{Results on alternative settings.} While our theoretical analysis is focused on OOD generalization of classification models, we also experiment with alternative settings from \textit{WILDS} to test the power of ID calibration in improving OOD performance. Specifically, we experiment with the \textit{PovertyMap} dataset, which introduces a regression task, and the \textit{CivilComments} dataset, which introduces a sub-population shift scenario for a binary classifier. As can be seen in Table \ref{tab:poverty_civilcomments}, results on the \textit{CivilComments} dataset (right), show that calibration consistently improves worst-case performance, with an average improvement of $21.5\%$ across training algorithms. While CLOvE does outperform naive and robust calibration on average, the gain is lower in comparison to \textit{FMoW} and \textit{Camelyon17}.

In \textit{PovertyMap} (left), the model solves a regression task, so we cannot use CLOvE to improve OOD performance. Still, robust calibration improves performance across all experiments, though by a smaller margin. In the case of models pre-trained by IRM, robust calibration improves OOD performance substantially, outperforming the original model by $0.08\%$ (absolute). Interestingly, calibration also leads to more stable results both in \textit{PovertyMap} and in \textit{CivilComments}, as can be seen in the standard deviation across different model runs.

\begin{table}[ht]
    \centering
    \scalebox{0.78}{
    \begin{tabular}{l|ccc|l|cccc} \toprule
      \multicolumn{4}{c|}{\textit{PovertyMap}} & & \multicolumn{4}{c}{\textit{CivilComments}} \\
     Algorithm  & Orig. & Naive Cal. & Rob. Cal. & Algorithm & Orig. & Naive Cal. & Rob. Cal. & CLOvE \\ \midrule \midrule
    ERM  & 0.832 & 0.827 & \textbf{0.834} & ERM & 63.65 & 76.98 & 78.99 & \textbf{80.39}  \\
      & (0.011) & (0.014) & (\underline{0.006}) &  & (2.6) & (\underline{0.5}) & (0.8) & (0.7) \\
    IRM  & 0.735 & 0.812 & \textbf{0.815} & IRM & 40.61 & \textbf{68.97} & 68.92 & 68.45 \\ 
      & (0.117) & (0.016) & (\underline{0.015}) & & (16) & (\underline{1.3}) & (\underline{1.3}) & (2.) \\
     DeepCORAL  & 0.832 & 0.835 & \textbf{0.837} & GroupDRO & 71.67 & 76.2 & 78.54 & \textbf{80.07} \\
      & (0.011) & (\underline{0.009}) & (0.012) &  & (0.7) & (1.3) & (0.8) & (\underline{0.3}) \\
     \bottomrule
    \end{tabular}}
    
    \caption{Left: Pearson correlation $r$ on in-domain (ID) and OOD (unseen countries) test sets in \textit{PovertyMap}. Right: average group accuracy on the test set in the \textit{CivilComments} dataset.}
    \label{tab:poverty_civilcomments}
    \vspace{-0.7cm}
\end{table}

\section{Conclusion}
\label{sec:conc}
In this paper we highlight a novel connection between multi-domain calibration and OOD generalization, arguing that such calibration can be viewed as an invariant representation. We proved in a linear setting that models calibrated on multiple domains are free of spurious correlations and therefore generalize out of domain. We then proposed multi-domain calibration as a practical and measurable surrogate for the OOD performance of a classifier. We demonstrated that actively tuning models to achieve multi-domain calibration significantly improves model performance on unseen test domains, and that in-domain calibration on a validation set is a useful criterion for model selection. 
A major limitation of our work is that our theoretical findings are limited to linear models in a population (as opposed to finite-sample) setting; we thus consider them more as a motivation rather than a full justification of using multi-domain calibration in practice as we do. 
Better formal understanding can also inform us on when should we expect to gain from calibration techniques. Even though in our experiments we see that the techniques mostly improve OOD performance while preserving ID accuracy, it is plausible that failure cases exist and should be characterized.
We look forward to expanding the scope of theoretical understanding of the conditions under which multi-domain calibration can provably guarantee out-of-domain generalization, including the finite-sample setting and the analysis of specific algorithms. We also expect new practical methods, building on our findings, will help push forward the real-world ability to generalize to unseen test domains.   

\section*{Acknowledgments}
We wish to thank Ira Shavitt for his helpful comments and to Alexandre Ram\'e for pointing us to an error in the original manuscript. This research was partially supported by the Israel Science Foundation (grant No. 1950/19).

\bibliography{ood_calibration}

\begin{thebibliography}{10}

\bibitem{arjovsky2019invariant}
M.~Arjovsky, L.~Bottou, I.~Gulrajani, and D.~Lopez-Paz.
\newblock Invariant risk minimization.
\newblock {\em arXiv preprint arXiv:1907.02893}, 2019.

\bibitem{bandi2018detection}
P.~Bandi, O.~Geessink, Q.~Manson, M.~Van~Dijk, M.~Balkenhol, M.~Hermsen, B.~E.
  Bejnordi, B.~Lee, K.~Paeng, A.~Zhong, et~al.
\newblock From detection of individual metastases to classification of lymph
  node status at the patient level: the camelyon17 challenge.
\newblock {\em IEEE transactions on medical imaging}, 38(2):550--560, 2018.

\bibitem{bellot2020generalization}
A.~Bellot and M.~van~der Schaar.
\newblock Generalization and invariances in the presence of unobserved
  confounding.
\newblock {\em arXiv preprint arXiv:2007.10653}, 2020.

\bibitem{borkan2019nuanced}
D.~Borkan, L.~Dixon, J.~Sorensen, N.~Thain, and L.~Vasserman.
\newblock Nuanced metrics for measuring unintended bias with real data for text
  classification.
\newblock In {\em Companion proceedings of the 2019 world wide web conference},
  pages 491--500, 2019.

\bibitem{brier1950verification}
G.~W. Brier et~al.
\newblock Verification of forecasts expressed in terms of probability.
\newblock {\em Monthly weather review}, 78(1):1--3, 1950.

\bibitem{christie2018functional}
G.~Christie, N.~Fendley, J.~Wilson, and R.~Mukherjee.
\newblock Functional map of the world.
\newblock In {\em Proceedings of the IEEE Conference on Computer Vision and
  Pattern Recognition}, pages 6172--6180, 2018.

\bibitem{degroot1983comparison}
M.~H. DeGroot and S.~E. Fienberg.
\newblock The comparison and evaluation of forecasters.
\newblock {\em Journal of the Royal Statistical Society: Series D (The
  Statistician)}, 32(1-2):12--22, 1983.

\bibitem{desai2020calibration}
S.~Desai and G.~Durrett.
\newblock Calibration of pre-trained transformers.
\newblock In {\em Proceedings of the 2020 Conference on Empirical Methods in
  Natural Language Processing (EMNLP)}, pages 295--302, 2020.

\bibitem{devlin2019bert}
J.~Devlin, M.-W. Chang, K.~Lee, and K.~Toutanova.
\newblock Bert: Pre-training of deep bidirectional transformers for language
  understanding.
\newblock In {\em Proceedings of the 2019 Conference of the North American
  Chapter of the Association for Computational Linguistics: Human Language
  Technologies, Volume 1 (Long and Short Papers)}, pages 4171--4186, 2019.

\bibitem{duchi2011adaptive}
J.~Duchi, E.~Hazan, and Y.~Singer.
\newblock Adaptive subgradient methods for online learning and stochastic
  optimization.
\newblock {\em Journal of machine learning research}, 12(7), 2011.

\bibitem{gulrajani2020search}
I.~Gulrajani and D.~Lopez-Paz.
\newblock In search of lost domain generalization.
\newblock {\em arXiv preprint arXiv:2007.01434}, 2020.

\bibitem{guo2017calibration}
C.~Guo, G.~Pleiss, Y.~Sun, and K.~Q. Weinberger.
\newblock On calibration of modern neural networks.
\newblock In {\em International Conference on Machine Learning}, pages
  1321--1330. PMLR, 2017.

\bibitem{guo2021out}
R.~Guo, P.~Zhang, H.~Liu, and E.~Kiciman.
\newblock Out-of-distribution prediction with invariant risk minimization: The
  limitation and an effective fix.
\newblock {\em arXiv preprint arXiv:2101.07732}, 2021.

\bibitem{gupta2020distribution}
C.~Gupta, A.~Podkopaev, and A.~Ramdas.
\newblock Distribution-free binary classification: prediction sets, confidence
  intervals and calibration.
\newblock {\em Advances in Neural Information Processing Systems}, 33, 2020.

\bibitem{he2016identity}
K.~He, X.~Zhang, S.~Ren, and J.~Sun.
\newblock Identity mappings in deep residual networks.
\newblock In {\em European conference on computer vision}, pages 630--645.
  Springer, 2016.

\bibitem{hebert2018multicalibration}
U.~H{\'e}bert-Johnson, M.~Kim, O.~Reingold, and G.~Rothblum.
\newblock Multicalibration: Calibration for the (computationally-identifiable)
  masses.
\newblock In {\em International Conference on Machine Learning}, pages
  1939--1948. PMLR, 2018.

\bibitem{heinze2018invariant}
C.~Heinze-Deml, J.~Peters, and N.~Meinshausen.
\newblock Invariant causal prediction for nonlinear models.
\newblock {\em Journal of Causal Inference}, 6(2), 2018.

\bibitem{hu2018does}
W.~Hu, G.~Niu, I.~Sato, and M.~Sugiyama.
\newblock Does distributionally robust supervised learning give robust
  classifiers?
\newblock In {\em International Conference on Machine Learning}, pages
  2029--2037. PMLR, 2018.

\bibitem{huang2017densely}
G.~Huang, Z.~Liu, L.~Van Der~Maaten, and K.~Q. Weinberger.
\newblock Densely connected convolutional networks.
\newblock In {\em Proceedings of the IEEE conference on computer vision and
  pattern recognition}, pages 4700--4708, 2017.

\bibitem{kamath2021does}
P.~Kamath, A.~Tangella, D.~J. Sutherland, and N.~Srebro.
\newblock Does invariant risk minimization capture invariance?
\newblock In {\em AISTATS}, 2021.

\bibitem{kim2019learning}
B.~Kim, H.~Kim, K.~Kim, S.~Kim, and J.~Kim.
\newblock Learning not to learn: Training deep neural networks with biased
  data.
\newblock In {\em Proceedings of the IEEE/CVF Conference on Computer Vision and
  Pattern Recognition}, pages 9012--9020, 2019.

\bibitem{kingma2014adam}
D.~P. Kingma and J.~Ba.
\newblock Adam: A method for stochastic optimization.
\newblock {\em arXiv preprint arXiv:1412.6980}, 2014.

\bibitem{koh2020wilds}
P.~W. Koh, S.~Sagawa, H.~Marklund, S.~M. Xie, M.~Zhang, A.~Balsubramani, W.~Hu,
  M.~Yasunaga, R.~L. Phillips, S.~Beery, et~al.
\newblock Wilds: A benchmark of in-the-wild distribution shifts.
\newblock {\em arXiv preprint arXiv:2012.07421}, 2020.

\bibitem{krueger2020out}
D.~Krueger, E.~Caballero, J.-H. Jacobsen, A.~Zhang, J.~Binas, R.~L. Priol, and
  A.~Courville.
\newblock Out-of-distribution generalization via risk extrapolation (rex).
\newblock {\em arXiv preprint arXiv:2003.00688}, 2020.

\bibitem{kuleshov2018accurate}
V.~Kuleshov, N.~Fenner, and S.~Ermon.
\newblock Accurate uncertainties for deep learning using calibrated regression.
\newblock In {\em International Conference on Machine Learning}, pages
  2796--2804. PMLR, 2018.

\bibitem{kumar2018trainable}
A.~Kumar, S.~Sarawagi, and U.~Jain.
\newblock Trainable calibration measures for neural networks from kernel mean
  embeddings.
\newblock In {\em International Conference on Machine Learning}, pages
  2805--2814, 2018.

\bibitem{lee2013smooth}
J.~M. Lee.
\newblock Smooth manifolds.
\newblock In {\em Introduction to Smooth Manifolds}, pages 1--31. Springer,
  2013.

\bibitem{magliacane2018domain}
S.~Magliacane, T.~van Ommen, T.~Claassen, S.~Bongers, P.~Versteeg, and J.~M.
  Mooij.
\newblock Domain adaptation by using causal inference to predict invariant
  conditional distributions.
\newblock In {\em Proceedings of the 32nd International Conference on Neural
  Information Processing Systems}, pages 10869--10879, 2018.

\bibitem{mukhoti2020calibrating}
J.~Mukhoti, V.~Kulharia, A.~Sanyal, S.~Golodetz, P.~H. Torr, and P.~K. Dokania.
\newblock Calibrating deep neural networks using focal loss.
\newblock {\em arXiv preprint arXiv:2002.09437}, 2020.

\bibitem{niculescu2005predicting}
A.~Niculescu-Mizil and R.~Caruana.
\newblock Predicting good probabilities with supervised learning.
\newblock In {\em Proceedings of the 22nd international conference on Machine
  learning}, pages 625--632, 2005.

\bibitem{DBLP:conf/cvpr/NixonDZJT19}
J.~Nixon, M.~W. Dusenberry, L.~Zhang, G.~Jerfel, and D.~Tran.
\newblock Measuring calibration in deep learning.
\newblock In {\em {IEEE} Conference on Computer Vision and Pattern Recognition
  Workshops, {CVPR} Workshops 2019, Long Beach, CA, USA, June 16-20, 2019},
  pages 38--41. Computer Vision Foundation / {IEEE}, 2019.

\bibitem{pearl1994probabilistic}
J.~Pearl.
\newblock A probabilistic calculus of actions.
\newblock In {\em Uncertainty Proceedings 1994}, pages 454--462. Elsevier,
  1994.

\bibitem{pearl2009causality}
J.~Pearl.
\newblock {\em Causality}.
\newblock Cambridge university press, 2009.

\bibitem{peters2016causal}
J.~Peters, P.~B{\"u}hlmann, and N.~Meinshausen.
\newblock Causal inference by using invariant prediction: identification and
  confidence intervals.
\newblock {\em Journal of the Royal Statistical Society. Series B (Statistical
  Methodology)}, pages 947--1012, 2016.

\bibitem{pleiss2017fairness}
G.~Pleiss, M.~Raghavan, F.~Wu, J.~Kleinberg, and K.~Q. Weinberger.
\newblock On fairness and calibration.
\newblock {\em arXiv preprint arXiv:1709.02012}, 2017.

\bibitem{rahimi2020intra}
A.~Rahimi, A.~Shaban, C.-A. Cheng, R.~Hartley, and B.~Boots.
\newblock Intra order-preserving functions for calibration of multi-class
  neural networks.
\newblock {\em Advances in Neural Information Processing Systems}, 33, 2020.

\bibitem{rosenfeld2020risks}
E.~Rosenfeld, P.~Ravikumar, and A.~Risteski.
\newblock The risks of invariant risk minimization.
\newblock {\em arXiv preprint arXiv:2010.05761}, 2020.

\bibitem{rothenhausler2018anchor}
D.~Rothenh{\"a}usler, N.~Meinshausen, P.~B{\"u}hlmann, and J.~Peters.
\newblock Anchor regression: heterogeneous data meets causality.
\newblock {\em arXiv preprint arXiv:1801.06229}, 2018.

\bibitem{sagawa2019distributionally}
S.~Sagawa, P.~W. Koh, T.~B. Hashimoto, and P.~Liang.
\newblock Distributionally robust neural networks for group shifts: On the
  importance of regularization for worst-case generalization.
\newblock {\em arXiv preprint arXiv:1911.08731}, 2019.

\bibitem{scholkopf2012causal}
B.~Sch{\"o}lkopf, D.~Janzing, J.~Peters, E.~Sgouritsa, K.~Zhang, and J.~Mooij.
\newblock On causal and anticausal learning.
\newblock {\em arXiv preprint arXiv:1206.6471}, 2012.

\bibitem{DBLP:journals/nature/Senior0JKSGQZNB20}
A.~W. Senior, R.~Evans, J.~Jumper, J.~Kirkpatrick, L.~Sifre, T.~Green, C.~Qin,
  A.~Z{\'{\i}}dek, A.~W.~R. Nelson, A.~Bridgland, H.~Penedones, S.~Petersen,
  K.~Simonyan, S.~Crossan, P.~Kohli, D.~T. Jones, D.~Silver, K.~Kavukcuoglu,
  and D.~Hassabis.
\newblock Improved protein structure prediction using potentials from deep
  learning.
\newblock {\em Nat.}, 577(7792):706--710, 2020.

\bibitem{shabat2020sample}
E.~Shabat, L.~Cohen, and Y.~Mansour.
\newblock Sample complexity of uniform convergence for multicalibration.
\newblock {\em arXiv preprint arXiv:2005.01757}, 2020.

\bibitem{subbaswamy2019preventing}
A.~Subbaswamy, P.~Schulam, and S.~Saria.
\newblock Preventing failures due to dataset shift: Learning predictive models
  that transport.
\newblock In {\em The 22nd International Conference on Artificial Intelligence
  and Statistics}, pages 3118--3127. PMLR, 2019.

\bibitem{sun2016deep}
B.~Sun and K.~Saenko.
\newblock Deep coral: Correlation alignment for deep domain adaptation.
\newblock In {\em European conference on computer vision}, pages 443--450.
  Springer, 2016.

\bibitem{vaicenavicius2019evaluating}
J.~Vaicenavicius, D.~Widmann, C.~Andersson, F.~Lindsten, J.~Roll, and
  T.~Sch{\"o}n.
\newblock Evaluating model calibration in classification.
\newblock In {\em The 22nd International Conference on Artificial Intelligence
  and Statistics}, pages 3459--3467. PMLR, 2019.

\bibitem{vaswani2017attention}
A.~Vaswani, N.~Shazeer, N.~Parmar, J.~Uszkoreit, L.~Jones, A.~N. Gomez,
  {\L}.~Kaiser, and I.~Polosukhin.
\newblock Attention is all you need.
\newblock In {\em Advances in neural information processing systems}, pages
  5998--6008, 2017.

\bibitem{vovk2003self}
V.~Vovk, G.~Shafer, and I.~Nouretdinov.
\newblock Self-calibrating probability forecasting.
\newblock In {\em Proceedings of the 16th International Conference on Neural
  Information Processing Systems}, pages 1133--1140, 2003.

\bibitem{wang2020transferable}
X.~Wang, M.~Long, J.~Wang, and M.~I. Jordan.
\newblock Transferable calibration with lower bias and variance in domain
  adaptation.
\newblock {\em arXiv preprint arXiv:2007.08259}, 2020.

\bibitem{yeh2020using}
C.~Yeh, A.~Perez, A.~Driscoll, G.~Azzari, Z.~Tang, D.~Lobell, S.~Ermon, and
  M.~Burke.
\newblock Using publicly available satellite imagery and deep learning to
  understand economic well-being in africa.
\newblock {\em Nature communications}, 11(1):1--11, 2020.

\bibitem{DBLP:conf/icml/ZadroznyE01}
B.~Zadrozny and C.~Elkan.
\newblock Obtaining calibrated probability estimates from decision trees and
  naive bayesian classifiers.
\newblock In C.~E. Brodley and A.~P. Danyluk, editors, {\em Proceedings of the
  Eighteenth International Conference on Machine Learning {(ICML} 2001),
  Williams College, Williamstown, MA, USA, June 28 - July 1, 2001}, pages
  609--616. Morgan Kaufmann, 2001.

\bibitem{zech2018variable}
J.~R. Zech, M.~A. Badgeley, M.~Liu, A.~B. Costa, J.~J. Titano, and E.~K.
  Oermann.
\newblock Variable generalization performance of a deep learning model to
  detect pneumonia in chest radiographs: a cross-sectional study.
\newblock {\em PLoS medicine}, 15(11):e1002683, 2018.

\end{thebibliography}
\bibliographystyle{abbrv}

\newpage
\appendix
\renewcommand{\thefigure}{S\arabic{figure}}
\setcounter{figure}{0}
\renewcommand{\thetable}{S\arabic{table}}
\renewcommand{\thelemma}{S\arabic{lemma}}
\renewcommand{\thedefinition}{S\arabic{definition}}
\setcounter{table}{0}
\setcounter{theorem}{0}
\setcounter{definition}{0}
\setcounter{lemma}{0}

\section{Proofs for Theoretical Claims}
We begin by supplementing the definition of multiple domain calibration, extending it for the case of regression, then we provide proofs of the theorems in the paper.
\subsection{Definition of Calibration} \label{sec:calibration_intro}
Recall our definition of a calibrated classifier for binary tasks.
\begin{definition}
    Let $f:\X\rightarrow [0,1]$ and $P[X, Y]$ be a joint distribution over the features and label. Then $f(\x)$ is calibrated w.r.t to $P$ if for all $\alpha\in{[0,1]}$ in the range of $f$:
    \begin{align*}
        \E_{P}{\left[ Y \mid f(X)=\alpha\right]} = \alpha.
    \end{align*}
    In the multiple environments setting, $f(\x)$ is calibrated on $E_{\text{train}}$ if for all $e_i\in{E_{\text{train}}}$ and $\alpha$ in the range of $f$ restricted to $e_i$:
    \begin{align}
        \E{\left[ Y \mid f(X)=\alpha, E=e_i\right]} = \alpha \nonumber.
    \end{align}
\end{definition}
Let us prove the connection between multi-domain calibration and invariance, we repeat the statement of the lemma from the main paper for convenience.
\begin{lemma}[\lemref{lemma:corresp} in main paper]
    If a binary classifier $f$ is invariant w.r.t $E_{\text{train}}$ then there exists some $g:\reals\rightarrow [0,1]$ such that $g\circ f$ is calibrated on all training environments and its mean squared error on each environment does not exceed that of $f$. On the other hand, if a classifier is calibrated on all training environments it is also invariant w.r.t $E_{\text{train}}$.
\end{lemma}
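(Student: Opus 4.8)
The plan is to prove the two implications separately, with essentially all of the substance in the forward direction (invariance $\Rightarrow$ existence of a recalibrating $g$); the converse is immediate from the definitions.

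\textbf{Invariant $\Rightarrow$ a calibrated recalibration exists.} Suppose $f$ is invariant w.r.t.\ $E_{\text{train}}$. For each $\alpha\in[0,1]$, I would define $g(\alpha):=\E[Y\mid f(X)=\alpha,\,E=e_i]$, using any training environment $e_i$ in whose restricted range $\alpha$ lies, and set $g(\alpha):=\alpha$ for $\alpha$ in no restricted range (an irrelevant choice, since under $P[\cdot\mid E=e_i]$ the variable $f(X)$ a.s.\ takes values only in the restricted range of $e_i$). The invariance condition \eqref{eq:invariant_predictor} is exactly the statement that this value does not depend on the choice of $e_i$, so $g:\reals\to[0,1]$ is a well-defined measurable function (it lands in $[0,1]$ because $Y\in\{0,1\}$), and by construction $g(f(X))=\E[Y\mid f(X),\,E=e_i]$ almost surely under $P[\cdot\mid E=e_i]$, for every $i\in[k]$ simultaneously. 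Set $h:=g\circ f$.

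Calibration of $h$ on each training environment then follows from the tower rule: since $h$ is a measurable function of $f(X)$, the $\sigma$-algebra generated by $h(X)$ is contained in that generated by $f(X)$, so for fixed $e_i$ and $\beta$ in the restricted range of $h$,
\begin{align*}
\E[Y \mid h(X)=\beta,\, E=e_i]
  &= \E\!\left[\, \E[Y \mid f(X),\, E=e_i] \;\middle|\; h(X)=\beta,\, E=e_i \right] \\
  &= \E[\, g(f(X)) \mid h(X)=\beta,\, E=e_i \,] \;=\; \beta .
\end{align*}
Property (ii) is then the $L^2$-projection property of the conditional expectation: for each fixed $e_i$, $h(X)=\E[Y\mid f(X),\,E=e_i]$ is the minimizer of $\E[(Z-Y)^2\mid E=e_i]$ over all $\sigma(f(X))$-measurable $Z\in L^2$, and $f(X)$ is itself such a $Z$, hence $\E[(h(X)-Y)^2\mid E=e_i]\le \E[(f(X)-Y)^2\mid E=e_i]$, i.e.\ the MSE of $g\circ f$ on $e_i$ does not exceed that of $f$.

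\textbf{Calibrated $\Rightarrow$ invariant.} If $f$ is calibrated on all training environments, then for any $e_i,e_j\in E_{\text{train}}$ and any $\alpha$ in the restricted range of $f$ on both, $\E[Y\mid f(X)=\alpha,\,E=e_i]=\alpha=\E[Y\mid f(X)=\alpha,\,E=e_j]$, which is exactly \eqref{eq:invariant_predictor}. So $f$ is invariant w.r.t.\ $E_{\text{train}}$.

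\textbf{Main obstacle.} There is no deep difficulty here; the only thing to handle with care is the measure-theoretic bookkeeping — defining $g$ as a single measurable function even though the ``range of $f$ restricted to $e_i$'' varies with $i$, and making the tower-rule / conditional-expectation manipulations rigorous when $f(X)$ need not be discrete (one can phrase calibration through regular conditional distributions, or, as the definition implicitly does, restrict attention to values $\alpha$ with $\pr[f(X)=\alpha\mid E=e_i]>0$). The conceptual content is simply the standard observation that recalibrating by the (environment-invariant) conditional expectation produces a predictor that is calibrated in every environment and never worse in MSE.
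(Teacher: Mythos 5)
Your proof is correct and follows essentially the same route as the paper: define $g(\alpha)=\E[Y\mid f(X)=\alpha,E=e_i]$, use invariance to see this is a single well-defined function across environments, obtain the MSE bound from the $L^2$-projection property of conditional expectation, and note the converse is immediate from the definitions. Your tower-rule step merely makes explicit the calibration claim that the paper states without detail, so there is no substantive difference.
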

\vspace{-0.5cm}
\begin{proof}
Assume that the classifier is invariant w.r.t $E_{\text{train}}$, let $e_i\in{E_{\text{train}}}$ and note that:
\begin{align*}
\E[(Y - f(X))^2 \mid E=e_i] \geq \min_{g:\reals\rightarrow\reals}{\E[(Y - g\circ f(X))^2 \mid E=e_i]}.
\end{align*}
The solution to the RHS is to take $g(\hat{\alpha}) = \E [Y \mid f(X) = \hat{\alpha}, E=e_i]$ for all $\hat{\alpha}\in{[0,1]}$ and it results in a classifier $g \circ f$ that is calibrated w.r.t $e_i$. Due to invariance, for all $\hat{\alpha}\in{\reals}$ the expectation $\E [Y \mid f(X) = \hat{\alpha}]$ is identical across all $e_i\in{E_{\text{train}}}$ where $\hat{\alpha}$ is in the range of $f$ restricted to $e_i$. Therefore there exists a single function $g$ that solves the RHS simultaneously over all environments. The resulting $g\circ f$ is indeed calibrated over all training domains and its mean squared error does not exceed that of $f$ (note that since the square loss is Bayes-consistent, this claim also holds for the classification error).
The other part of the statement that a calibrated classifier on all $E_{\text{train}}$ is invariant follows easily from the definitions.
\end{proof}

For regression tasks, one may consider a function that outputs a full CDF on $Y$ and define a calibrated classifier as one where all quantiles of the CDF match the true quantiles of $Y$ as the number of examples approached infinity. This leads to the definition in \cite{kuleshov2018accurate}, and one may follow this to analyze more general cases than the scenario we will consider in this work.

Since in this section we consider Gaussian distributions and linear regressors, a definition based on the first two moments of the distribution (instead of all quantiles of a CDF) will suffice.
Hence we will be working the following definition:
\begin{definition}
    Let $f:\X\rightarrow \reals^2$ and $P[X, Y]$ a joint distribution over the features and label. Then $f(\x)$ is calibrated w.r.t to $P$ if for all $(\alpha, \beta)\in{\reals^2}$ in the range of $f$:
    \begin{align*}
        \E{\left[ Y \mid f(X)_1=\alpha\right]} = \alpha, \: \E{\left[ Y^2 \mid f(X)_2=\beta\right]} = \beta.
    \end{align*}
    In the multiple environments setting, $f(\x)$ is calibrated on $E_{\text{train}}$ if for all $e_i\in{E_{\text{train}}}$ and $(\alpha, \beta)$ in the range of $f$ restricted to $e_i$:
    \begin{align} \label{eq:calibrated_regressor_multidomain}
        \E{\left[ Y \mid f(X)=(\alpha, \beta), E=e_i\right]} = \alpha, \: \E{\left[ Y^2 \mid f(X)=(\alpha, \beta), E=e_i\right]} = \beta.
    \end{align}
\end{definition}

\subsection{Details about ECE, MMCE and Post-Processing Methods}
To evaluate calibration and optimize our models towards multi-domain calibration, we use the Expected Calibration Error (ECE) and the Maximum Mean Calibration Error (MMCE) \cite{kumar2018trainable}.

The ECE is a scalar summary of the calibration plot, used throughout the literature to assess how well calibrated is a given classifier.
\textbf{Calibration plots} \cite{degroot1983comparison} are a visual representation of model calibration in the case of binary labels. Each example $\x$ is placed into one of $B$ bins that partition the $[0,1]$ interval, in which the output, or \emph{confidence}, of the classifier $f(\x)$ falls. For each bin $b$, the accuracy of $f$ on the bin's examples $acc(b)$ is calculated along with the average confidence $conf(b)$. These are plotted against each other to form a curve, where deviations from a diagonal represent miscalibration.\\
\textbf{ECE score} summarizes the calibration curve by averaging the deviation between accuracy and confidence:
\begin{equation} \label{eq:ece}
    ECE = \sum^B_{b=1} \frac{n_b}{N} |acc(b) - conf(b)|.
\end{equation}
$n_b$ is the number of examples in bin $b$, $N$ is the total number of examples. In all of our experiments we used $B=10$ bins of equal size.

To handle the miscalibration that is often observed in models such as neural networks \cite{guo2017calibration}, the MMCE was proposed in \cite{kumar2018trainable} as a method to improve calibration at training time. Recalling the definition of this loss: We consider a dataset $D = \{\x_i, y_i\}_{i=1}^{m}$, a binary classifier parameterized by a vector $\theta$ which we denote $f_\theta:\rightarrow [0, 1]$. The confidence of $f_\theta$ on the $i$-th example is $f_{\theta;i}=\max\{f_\theta(x_i), 1-f_\theta(x_i)\}$
and its correctness is $c_i=\mathbbm{1}_{| y_i-f_{\theta;i} | < \frac{1}{2}}$.
Then we fix a kernel $k:\reals \times \reals \rightarrow \reals$, associated with a feature map $\phi:[0,1]\rightarrow \mathcal{H}$, and MMCE over the dataset $D$ is given by:
\begin{align} \label{eq:mmce_def}
    r^{D}_{\text{MMCE}}(f_\theta) = \frac{1}{m^2}\sum_{i,j\in{D}}{(c_i-f_{\theta;i})(c_j-f_{\theta;j})k(f_{\theta;i},f_{\theta;j})}.
\end{align}
In our experiments we use an RBF kernel $k(r, r') = \exp(-\gamma(r-r')^2)$ with $\gamma=2.5$.
\eqref{eq:mmce_def} is the finite sample approximation of the following:
\begin{align} \label{eq:mmce_population}
MMCE(f_\theta ; P[X, Y]) = \| \E_{(\x, y)\sim P}[(c-f_{\theta}(\x))\phi(f_{\theta}(\x))] \|_{\mathcal{H}}.
\end{align}
Here $c$ is the correctness of $f_{\theta}$ on $(\x, y)$ as defined for \eqref{eq:mmce_def}. Attractive properties of the MMCE include it being a proper scoring rule:
\begin{theorem*}[Adapted from Thm.~1 in \cite{kumar2018trainable}]
Let $P[X, Y]$ be a probability measure defined on the space $ (\mathcal{X} \times \{0,1\})$ such that the conditionals on the pushforward measure $P[r, c] = f_\theta \sharp P$,\footnote{we note the abuse of notation here, as $f_\theta\sharp P$ is used to denote the measure that we get by applying $f_\theta$ to $X$ to obtain $r$ and $c$ is obtained by calculating its correctness w.r.t to $Y$.} $P(r \mid c = 1)$ over $([0,1] \times \{0,1\})$, $P (r \mid c = 0)$ are Borel probability measures, and let $k$ be a universal kernel. The MMCE in \eqref{eq:mmce_population} is $0$ if and only if $f_\theta$ is calibrated w.r.t $P$.
\end{theorem*}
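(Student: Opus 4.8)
The plan is to reduce the claim to a property of the pushforward measure $P[r,c] := f_\theta\sharp P$ and then to prove the two implications separately, using universality of $k$ only for the harder one. Here $r$ denotes the score output by $f_\theta$ (equivalently its confidence $\max\{f_\theta(X),1-f_\theta(X)\}$, as in \eqref{eq:mmce_def}) and $c$ the correctness; the Borel-measurability hypotheses in the statement are exactly what guarantee that $P[r,c]$ and the conditionals $P(r\mid c)$ exist as Borel probability measures on the compact set $[0,1]$ and that the conditional expectations below are well-defined. Under this reduction, ``$f_\theta$ is calibrated w.r.t.\ $P$'' becomes the statement $\E[c\mid r]=r$ for $P_r$-almost every $r$ (for the confidence parametrization this is recovered from our definition of calibration by splitting on the event $\{f_\theta(X)\geq\thalf\}$), and by \eqref{eq:mmce_population} the quantity $\mathrm{MMCE}(f_\theta;P)$ equals the $\mathcal{H}$-norm of the Bochner integral $v:=\E[(c-r)\phi(r)]$ (which exists, since $k$ is bounded on the compact domain). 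The easy direction ($\Leftarrow$) is then immediate: if $f_\theta$ is calibrated, conditioning on $r$ and using the tower property gives $v=\E[(\E[c\mid r]-r)\phi(r)]=0$, hence $\mathrm{MMCE}=0$.

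For the substantive direction ($\Rightarrow$) I would proceed as follows. Assume $v=0$ in $\mathcal{H}$. Pairing with an arbitrary $h\in\mathcal{H}$ and using that the bounded linear functional $\langle h,\cdot\rangle_{\mathcal{H}}$ commutes with the Bochner integral, together with the reproducing property $h(r)=\langle h,\phi(r)\rangle_{\mathcal{H}}$, gives $0=\langle h,v\rangle_{\mathcal{H}}=\E[(c-r)h(r)]$ for every function $h$ in the RKHS of $k$. Setting $m(r):=\E[c\mid r]-r$, which is bounded (as $c\in\{0,1\}$ and $r\in[0,1]$ force $|m|\leq 1$), one more application of the tower property yields $\int h\,\mathrm d\nu=0$ for all RKHS functions $h$, where $\nu:=m\cdot P_r$ is a finite signed Borel measure on $[0,1]$. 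Since $k$ is a universal kernel on the compact metric space $[0,1]$, its RKHS is dense in $C([0,1])$ in the supremum norm; as $\nu$ is finite, uniform approximation extends the identity to $\int h\,\mathrm d\nu=0$ for all $h\in C([0,1])$, and then the Riesz representation theorem (identifying $C([0,1])^*$ with the finite signed regular Borel measures under a non-degenerate pairing) forces $\nu=0$. Hence $m=0$ $P_r$-a.e., i.e.\ $\E[c\mid r]=r$ $P_r$-a.s., which is precisely calibration of $f_\theta$ w.r.t.\ $P$.

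The hard part is this forward implication, and within it the only genuinely non-routine moves are (i) converting ``$v=0$ in $\mathcal{H}$'' into ``$\E[(c-r)h(r)]=0$ for all RKHS $h$'' via the reproducing property applied inside the expectation, (ii) using universality of $k$ to push this from RKHS functions up to all continuous functions on the compact range of $r$, and (iii) invoking the duality between $C([0,1])$ and finite signed measures to conclude $\nu=0$. Everything else — passing to the pushforward, the equivalence of confidence- and probability-calibration, and checking that the measurability hypotheses make the objects above well-defined — is bookkeeping. Indeed, since this statement is an adaptation of Theorem~1 of \cite{kumar2018trainable}, the real content is verifying that their argument carries over verbatim to our population (as opposed to empirical) setting and to the calibration notion defined above.
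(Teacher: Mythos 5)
The paper itself offers no proof of this statement: it is presented as an adaptation of Theorem~1 of \cite{kumar2018trainable}, and the supplement simply refers the reader to that work for its properties (the corollary in the main text is then obtained by summing the population MMCE over training environments). So there is no in-paper argument to compare against; what you give is a self-contained reconstruction, and its core is correct and is essentially the standard argument behind the cited theorem: the easy direction by the tower property, and the forward direction by pairing the vanishing mean embedding with an arbitrary RKHS element, passing to the finite signed measure $\nu=m\cdot P_r$ with $m(r)=\E[c\mid r]-r$, and using universality (denseness of the RKHS in $C([0,1])$ in sup norm) together with Riesz duality to force $\nu=0$, hence $m=0$ $P_r$-a.e. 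The measure-theoretic care (Bochner integrability from boundedness of $k$ on the compact range, regularity of finite Borel measures on $[0,1]$, so Riesz applies) is in order.

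The one step you should not wave off as bookkeeping is the identification of ``calibrated w.r.t.\ $P$'' with $\E[c\mid r]=r$. With $r$ the confidence and $c$ the correctness, $\E[c\mid r]=r$ is \emph{confidence} calibration, and it is strictly weaker than the paper's definition of calibration ($\E[Y\mid f(X)=\alpha]=\alpha$ for $\alpha$ in the range of $f$): splitting on $\{f_\theta(X)\geq\thalf\}$ shows that probability calibration implies confidence calibration, but not conversely. For instance, equal mass at $f_\theta(X)=0.7$ and $f_\theta(X)=0.3$ with $\E[Y\mid f_\theta(X)=0.7]=0.8$ and $\E[Y\mid f_\theta(X)=0.3]=0.4$ gives $\E[c\mid r=0.7]=0.7$, so the population MMCE vanishes, yet the model violates the paper's calibration definition. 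Consequently your forward direction establishes calibration only in the confidence sense of \cite{kumar2018trainable}; that is the reading consistent with the pushforward $P[r,c]$ in the statement and is surely what the adapted theorem intends, but the reduction is not the lossless equivalence your write-up suggests, and under the strict probability-calibration reading the ``only if'' direction would actually be false. Under the intended confidence-calibration reading, your proof is complete and matches the argument of the original source.
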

\corref{corr:proper_score} in the paper follows by considering $\sum_{e\in{E_{\text{train}}}}MMCE(f_\theta ; P[X, Y \mid E=e])$ and applying the theorem to each summand.
For more details on the MMCE, its derivation as an integral probability measure analogue of the ECE and its properties, we refer the reader to \cite{kumar2018trainable}.

Another popular metric for calibration in binary classification problems is the Brier score, which is simply the squared error between the predicted probability and the outcome \cite{brier1950verification}:
\begin{align*}
    BS(f) = \frac{1}{m}\sum_{i=1}^{m}{(f(\x_i) - y_i)^2}.
\end{align*}
The Isotonic Regression \cite{niculescu2005predicting} post-processing methods that we use in the paper minimize the Brier score using a monotonic post-processing function. Hence we consider a classifier $f$ and a dataset $\{\x_i, y_i\}_{i=1}^{m}$. Denote the prediction of $f$ on $\x_i$ by $f_{i}$, then isotonic regression solves:
\begin{align*}
    \min_{z: f_{i} \leq f_{j} \Rightarrow z(f_{i}) \leq z(f_{j})}{\frac{1}{m}\sum_{i=1}^{m}{(z(f_{i}) - y_i)^2}}.
\end{align*}
A motivation for using this as a post-processing calibration method is the decomposition of the Brier score to a refinement and calibration score. We may denote the set of prediction values that are obtained by $f$ across the dataset by $F = \{ f_i \mid i\in{[m]}\}$. For each such value $\tilde{f}\in{F}$ then denote $N_{\tilde{f}}=|\{i \mid f_i=\tilde{f}\}|$ as the number of points for which we obtain this prediction and $y_{\tilde{f}} = \frac{1}{N_{\tilde{f}}}\sum_{i:f_i=\tilde{f}}{y_i}$ the average outcome over them:
\begin{align*}
    BS(f) = CAL(f) + REF(f) = \frac{1}{m}\sum_{\tilde{f}\in{F}}{N_{\tilde{f}}(\tilde{f}-y_{\tilde{f}})^2} + \frac{1}{m}\sum_{\tilde{f}\in{F}}{N_{\tilde{f}}(y_{\tilde{f}}(1-y_{\tilde{f}}))}
\end{align*}
The calibration score measures how far is the average prediction value from the average outcome, while refinement gives a measure of their sharpness (i.e. it raises the score of uncertain prediction). Due to the monotonicity constraint of isotonic Regression, it is usually thought of as not changing the $REF(f)$ too much, which means it minimizes the Brier score mainly by reducing $CAL(f)$.
In the multi-domain cases we are interested in, note that this vanilla isotonic regression does not take domains into account. In our experiments we use it simple by pooling the dataset on all environments and performing post-processing calibration on this dataset using isotonic regression. This procedure could output a classifier that is perfectly calibrated for the entire dataset, but not on single environments.

To give a simple variant that does post-processing while taking environments into account, we proposed a Robust Isotonic Regression method. The method minimizes the Brier score on the worst-case environment, thus aiming to bound the worst miscalibration on each environment. While in practice it will usually not provide perfect calibration on each environment, the method trades off the error between environments so it is better geared towards simultaneous calibration of the classifier on all domains. Formally we solve:
\begin{align}
    z^* =  \argmin_{z:f_i \leq f_j \Rightarrow z(f_i)\leq z(f_j)}\max_{e\in{E_{\text{train}}}}{\frac{1}{N_e}\sum_{i=1}^{N_e}{\left(z(f_{e,i})-y_i\right)^2}}.
\end{align}
Where $N_e$ are the number of data points in environment $e\in{E_{\text{train}}}$ and $f_{e,i}$ is the output of $f$ on point $i$ in the environment.

\subsection{Causal Graphical Models} \label{sec:scm}

In order to answer queries about unseen distributions based on data from different, observed distributions, one must make certain assumptions about the data generating processes and the relationships between the observed and unobserved distributions. One way of articulating such models of the world is by using causal graphs. In a causal graph, edges from a variable $X$ to a variable $Y$ mean that changing the value of $X$ \emph{may} change the distribution of $Y$. Causal graphs entail all statistical dependencies between variables, and we can read off such independence statements using the d-separation criterion \cite{pearl1994probabilistic}. We refer to background material to discuss how to identify and estimate causal effects with these causal graphical models in hand \cite{pearl2009causality}.

In the main paper, \figref{scm} illustrates our assumed causal graph for a general problem of distribution shift, and \figref{theoretical_cases} illustrates the assumed causal graph for causal and anti-causal simplified examples described in equations  \ref{eq:setting_a} and \ref{eq:setting_b}, respectively.
For instance according to d-separation, in distributions described by \figref{scm} it holds that $Y\indep E \mid \Xc, \Xacn$ and that in general $Y\nindep E \mid \Xac$. Furthermore, if we introduce a node $\Phi(\X)$ whose parents do not include $\Xac$, then $Y \indep E \mid \Phi(X)$ (and conversely, if $\Xac$ is a parent then the independence does not hold in general), which motivates the definition of a representation that has no spurious correlations.

Equipped with the definitions and background given in the previous sections, we now turn to the proofs of the theorems in the paper.

\subsection{Classification with Invariant Features} \label{sec:proof1}
We first consider the classification task from the main paper, where the data generating process is described in \figref{fig:scenario_a}. Recall that we are considering linear classifiers of the form $f(\x; \w, b)=\sigma(\w^\top\x + b)$. Our environments here are defined by the parameters of the multivariate Gaussian distributions that generate the spurious features $\{\mu_i, \Sigma_i\}_{i=1}^{k}$. As a first step we will derive the algebraic form of the constraints that calibration imposes on $\w$ and the parameters defining the environments. For convenience, we modify the notation from the main paper and consider a binary label where $\cY = \{-1, 1\}$ instead of $\cY = \{0, 1\}$.
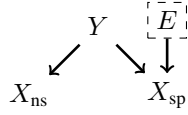
\begin{figure}[ht]
    \centering
    \begin{tikzpicture}
        \node[text centered] (y) {$Y$};
        \node[below right of = y, node distance = 1.3cm, text centered] (xac) {$X_{\text{sp}}$};
        \node[above of = xac, node distance = 1.cm, draw, rectangle, dashed, text centered] (e) {$E$};
        \node[below left of = y, node distance = 1.3cm, text centered] (xns) {$X_{\text{ns}}$};

        \draw[->, line width = 1] (y) -- (xac);
        \draw[->, line width = 1] (e) -- (xac);
        \draw[->, line width = 1] (y) to (xns);
    \end{tikzpicture}
    \caption{Diagram for data generating process in the invariant features scenario.}
    \label{fig:scenario_a}
\end{figure}
\begin{lemma} \label{lem:calibration_conditions_spurious_setting}
Assume we have $k$ environments with means and covariance matrices for environmental features $\mu_i\in{\reals^{d_e}}, \Sigma_{i}\in{\mathbb{S}_{++}
^{d_e}}, i\in{[k]}$ and a common covariance matrix $\Sigma_{\text{ns}}\in{\mathbb{S}^{d_{\text{ns}}}_{++}}$ for invariant features, where data is generated according to:
\begin{align*}
    \begin{split}
    y = \begin{cases}
    1 & \text{w.p }  \eta \\
    -1 & \text{otherwise}
    \end{cases}
    \end{split},
    \begin{split}
        \x_\text{ns} \mid Y=y \sim\N(y\mu_\text{ns}, \Sigma_{\text{ns}}), \\
        \x_{\text{sp}} \mid Y=y \sim\N(y\mu_i, \Sigma_i),
    \end{split}
\end{align*}
and $\x_{\text{ns}}, \x_{\text{sp}}$ are drawn independently. Let $\sigma:\reals\rightarrow (0, 1)$ be an invertible function and define the classifier:
\begin{align*}
    f(\x ; \w,b) = \sigma( \w^\top \x  - b).
\end{align*}
Decompose the weights $\w = [ \w_{\text{ns}}, \w_{\text{sp}} ]$ to the coefficients of the invariant and spurious features accordingly. Then if the classifier is calibrated on all environments, it holds that either $\w=\mathbf{0}$ or there exists $t \neq 0$ such that:
\begin{align} \label{eq:desideratum_lem1}
    \frac{ \w^\top_{\text{ns}}\mu_{\text{ns}} + \w^\top_{sp}\mu_i }{\w_{\text{ns}}^\top\Sigma_{\text{ns}}\w_{\text{ns}} + \w_{sp}^\top\Sigma_i\w_{sp}} = t \quad \forall i\in{[k]}.
\end{align}
\end{lemma}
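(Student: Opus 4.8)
The plan is to reduce the multivariate calibration requirement to a one-dimensional statement about the scalar score $S\eqdef \w^\top X$, and then to read off the constraint from the closed form of a Gaussian discriminant posterior. Throughout I will take $\eta\in(0,1)$, since the boundary values make $Y$ deterministic and are degenerate.

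First I would note that $f(\x;\w,b)=\sigma(\w^\top\x-b)$ depends on $\x$ only through $S=\w^\top\x$, so calibration on environment $e_i$ is entirely a statement about the conditional law of $Y$ given $S$ there. Using that $\x_{\text{ns}}$ and $\x_{\text{sp}}$ are drawn independently, conditional on $Y=y$ the score is a sum of two independent univariate Gaussians, hence $S \mid Y=y, E=e_i \sim \N\!\big(y\,m_i,\,v_i\big)$ with $m_i\eqdef \w_{\text{ns}}^\top\mu_{\text{ns}}+\w_{\text{sp}}^\top\mu_i$ and $v_i\eqdef \w_{\text{ns}}^\top\Sigma_{\text{ns}}\w_{\text{ns}}+\w_{\text{sp}}^\top\Sigma_i\w_{\text{sp}}$. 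Since the covariance matrices are positive definite, $v_i>0$ exactly when $\w\neq\zeros$; if $\w=\zeros$ we land in the first alternative of the lemma and are done, so I would assume $\w\neq\zeros$ from here on, which gives $v_i>0$ for every $i$.

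Next I would apply Bayes' rule with the Bernoulli$(\eta)$ prior. The likelihood ratio of $\N(m_i,v_i)$ to $\N(-m_i,v_i)$ at $S=s$ is $\exp(2sm_i/v_i)$, and a short computation yields
\[
\E\!\left[Y \mid S=s,\, E=e_i\right] \;=\; \tanh\!\Big(\tfrac{m_i}{v_i}\,s + \tfrac12\log\tfrac{\eta}{1-\eta}\Big),
\]
an affine function of $s$ fed through $\tanh$, with slope $m_i/v_i$ and an intercept $\tfrac12\log\tfrac{\eta}{1-\eta}$ that is common to all environments. Because $v_i>0$, the law of $S$ under $e_i$ has full support on $\reals$, so the range of $f$ restricted to $e_i$ is $\sigma(\reals)=(0,1)$; writing $\alpha=\sigma(u)$ and $s=b+u$, calibration on $e_i$ becomes exactly $\tanh\!\big(\tfrac{m_i}{v_i}(b+u)+\tfrac12\log\tfrac{\eta}{1-\eta}\big)=\sigma(u)$ for all $u\in\reals$. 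The right-hand side does not depend on $i$, and $\tanh$ is injective, so comparing two environments forces the affine maps $u\mapsto\tfrac{m_i}{v_i}(b+u)$ to agree; hence $m_i/v_i$ equals a single value $t$ for all $i$, which is precisely \eqref{eq:desideratum_lem1}. If $t=0$, the left-hand side would be the constant $\tanh(\tfrac12\log\tfrac{\eta}{1-\eta})$, forcing $\sigma$ to be constant and contradicting its invertibility, so $t\neq0$.

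The main thing to be careful about — rather than a genuine obstacle — is the full-support claim, since it is what guarantees the calibration identity holds for \emph{all} $u\in\reals$, letting us equate two functions instead of comparing finitely many evaluations. Beyond that, one must keep the degenerate cases separated ($\w=\zeros$ handled at the outset, $t=0$ ruled out at the end by invertibility of $\sigma$) and carry out the conditional-density bookkeeping that produces the clean $\tanh$ form in Step 2; both are routine Gaussian discriminant analysis.
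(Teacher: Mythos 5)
Your proposal is correct and follows essentially the same route as the paper's proof: reduce to the scalar score $\w^\top X$, whose law given $Y=\pm 1$ in environment $i$ is Gaussian with mean $\pm m_i$ and variance $v_i$, apply Bayes' rule, and use calibration (together with full support of the score when $\w\neq\zeros$) to equate the resulting posterior log-odds slopes $m_i/v_i$ across environments, ruling out $t=0$ and handling $\w=\zeros$ separately. The only cosmetic difference is your $\tanh$ form of $\E[Y\mid S=s]$ versus the paper's log-odds-ratio comparison, and your calibration target should strictly be $2\sigma(u)-1$ (or $P[Y=1\mid\cdot]=\sigma(u)$) for $\pm 1$ labels, but this recoding does not affect the cross-environment argument or the conclusion.
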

\begin{proof}
Let $i\in{[k]}$, the joint distribution of features in the environment is Gaussian with mean $\hat{\mu}_i=[\mu_{\text{ns}}, \mu_i]$, covariance $ \hat{\Sigma}_i = \begin{bmatrix}
\Sigma_{\text{ns}} & 0 \\
0 & \Sigma_i
\end{bmatrix}$. Hence the output of the affine function corresponding to the classifier is a random variable with probability density function:
\begin{align*}
    P[\sigma^{-1}(f(X))=\alpha \mid Y=y, E = e_i] = (2\pi\w^\top\hat{\Sigma_i}\w)^{-\frac{1}{2}}\exp\left(\frac{\left(\alpha-y \w^\top \hat{\mu}_i + b\right)^2}{2\w^\top\hat{\Sigma}_i\w}\right).
\end{align*}
Hence the conditional probability of $Y$ is given by:
\begin{align*}
    P[Y=1 \mid \sigma^{-1}(f(X))=\alpha, E=e_i] = \frac{\eta\exp\left(\frac{\left(\alpha-\w^\top \hat{\mu}_i + b\right)^2}{2\w^\top\hat{\Sigma}_i\w}\right)}{\eta\exp\left(\frac{\left(\alpha-\w^\top \hat{\mu}_i + b\right)^2}{2\w^\top\hat{\Sigma}_i\w}\right) + (1-\eta)\exp\left(\frac{\left(\alpha+\w^\top \hat{\mu}_i + b\right)^2}{2\w^\top\hat{\Sigma}_i\w}\right)}.
\end{align*}
Note that unless $\w=\mathbf{0}$ (which results in a calibrated classifier that satisfies \eqref{eq:desideratum_lem1}), the variance of $\sigma^{-1}(f(X))$ is strictly positive since $\hat{\Sigma}_i\succ 0$, so above conditional probabilities are well-defined. Now it is easy to see that if the classifier is calibrated across environments, we need to have equality in the log-odds ratio for each $i,j$ and all $\alpha\in{\reals}$:
\begin{align*}
    \frac{\left(\alpha-\w^\top \hat{\mu}_i + b\right)^2}{2\w^\top\hat{\Sigma}_i\w} - \frac{\left(\alpha+\w^\top \hat{\mu}_i + b\right)^2}{2\w^\top\hat{\Sigma}_i\w} = \frac{\left(\alpha-\w^\top \hat{\mu}_j + b\right)^2}{2\w^\top\hat{\Sigma}_j\w} - \frac{\left(\alpha+\w^\top \hat{\mu}_j + b\right)^2}{2\w^\top\hat{\Sigma}_j\w} \quad \forall\alpha\in{\reals}.
\end{align*}
After dropping all the terms that cancel out in the subtractions we arrive at:
\begin{align*}
    \frac{\w^\top \hat{\mu}_i}{\w^\top\hat{\Sigma}_i\w} = \frac{\w^\top\hat{\mu}_j}{\w^\top\hat{\Sigma}_j\w}.
\end{align*}
This may also be written as a system of equations with an additional scalar variable $t\in{\reals}$:
\begin{align*}
    \frac{\w^\top \hat{\mu}_i}{\w^\top\hat{\Sigma}_i\w} = t \quad \forall i\in{[k]}.
\end{align*}
Now because we assumed $\Sigma_i \succ 0$ for all environments, for any solution to the above system with $t=0$, we must have:
\begin{align*}
    \w^\top\hat{\mu}_i = 0 \quad \forall i\in{[k]}.
\end{align*}
Furthermore we will have for any $\alpha\in{\reals}$:
\begin{align*}
    P[Y = 1 \mid \sigma^{-1}(f(X)) = \alpha, E=e_i] = \eta.
\end{align*}
Since we assume $f$ is calibrated and the right hand side needs to equal $\alpha$, this is only possible if $f(\x; \w, b)$ is a constant function. Again, because $\Sigma_i \succ 0$, this is only possible if $\w=\mathbf{0}$. Hence we conclude with our desired result, as can be seen by decomposing $\w$ to the parts corresponding to invariant and spurious features.
\end{proof}
We now give a result for the special case where the covariance matrices of the spurious features satisfy $\Sigma_i = \sigma^2_i\mathbf{I}$, considered in \cite{rosenfeld2020risks}. The nice correspondence here is that we will see that calibration demands one more environment than IRM to discard all spurious features. This matches the intuition that each environment reduces a degree of freedom from the set of invariant classifiers, while risk minimization reduces one more degree of freedom.
\begin{lemma} \label{thm:simple_case}
Assume we have $k \geq d_{\text{sp}}+2$ environments and define $M\left(\{\mu_i, \sigma_i\}_{i=1}^{k}\right)\in{\reals^{k\times d_e+2}}$:
\begin{align*}
    M(\{\mu_i, \sigma_i\}_{i=1}^{k}) = \begin{bmatrix}
        \mu^\top_1 & \sigma_1^2 & 1 \\
        & \vdots & \\
        \mu^\top_k & \sigma_k^2 & 1
    \end{bmatrix}.
\end{align*}

If the matrix has full rank, then for any invariant predictor the linear coefficients on spurious features are zero.
\end{lemma}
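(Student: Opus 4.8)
The plan is to feed an invariant predictor into the computation already carried out in \lemref{lem:calibration_conditions_spurious_setting}, specialize it to the isotropic case $\Sigma_i=\sigma_i^2\I$, and recognize the resulting cross-environment constraints as a single homogeneous linear system whose coefficient matrix is exactly $M$; the full-rank hypothesis then forces the relevant solution vector to vanish. Concretely, let $f(\x;\w,b)=\sigma(\w^\top\x-b)$ be an invariant predictor and decompose $\w=[\w_{\text{ns}},\w_{\text{sp}}]$ as in the lemma. If $\w_{\text{sp}}=\mathbf{0}$ there is nothing to prove, so suppose $\w_{\text{sp}}\neq\mathbf{0}$ and aim for a contradiction. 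Then the conditional variance of the score $\w^\top X-b$ given $(Y,E=e_i)$ equals $\w_{\text{ns}}^\top\Sigma_{\text{ns}}\w_{\text{ns}}+\sigma_i^2\|\w_{\text{sp}}\|^2$, which is strictly positive since $\Sigma_{\text{ns}}\succ 0$ and $\sigma_i^2>0$; hence all conditional laws involved are nondegenerate and, because $\sigma$ is invertible, conditioning on $f(X)=\alpha$ is the same as conditioning on $\w^\top X-b=\sigma^{-1}(\alpha)$.

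First I would extract the cross-environment constraint. Invariance means $\E[Y\mid f(X)=\alpha,E=e_i]$ is the same for all training environments; writing out the two conditional Gaussians and comparing log-odds exactly as in the proof of \lemref{lem:calibration_conditions_spurious_setting}, the log-odds is affine in $\sigma^{-1}(\alpha)$ with slope proportional to $\w^\top\hat\mu_i/(\w^\top\hat\Sigma_i\w)$, where $\hat\mu_i=[\mu_{\text{ns}},\mu_i]$ and $\hat\Sigma_i=\mathrm{diag}(\Sigma_{\text{ns}},\Sigma_i)$. Equating across environments for every $\alpha$ in the (common) range therefore yields a single scalar $t$ (possibly zero) with
\begin{align*}
    \frac{\w_{\text{ns}}^\top\mu_{\text{ns}}+\w_{\text{sp}}^\top\mu_i}{\w_{\text{ns}}^\top\Sigma_{\text{ns}}\w_{\text{ns}}+\sigma_i^2\|\w_{\text{sp}}\|^2}=t \qquad \forall i\in[k].
\end{align*}
This step uses only equality of the conditional expectations across environments, i.e., invariance, so it applies here and not merely to calibrated predictors.

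Next I would clear the (strictly positive) denominators and read off the matrix form. Each equation becomes
\begin{align*}
    \mu_i^\top\w_{\text{sp}}+\sigma_i^2\bigl(-t\|\w_{\text{sp}}\|^2\bigr)+\bigl(\w_{\text{ns}}^\top\mu_{\text{ns}}-t\,\w_{\text{ns}}^\top\Sigma_{\text{ns}}\w_{\text{ns}}\bigr)=0 ,
\end{align*}
so, stacking over $i\in[k]$, we obtain $M v=\mathbf{0}$ for $v=\bigl(\w_{\text{sp}};\,-t\|\w_{\text{sp}}\|^2;\,\w_{\text{ns}}^\top\mu_{\text{ns}}-t\,\w_{\text{ns}}^\top\Sigma_{\text{ns}}\w_{\text{ns}}\bigr)\in\reals^{d_{\text{sp}}+2}$. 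Since $k\geq d_{\text{sp}}+2$ and $M$ has full rank, $M$ has full column rank and hence trivial kernel, so $v=\mathbf{0}$; in particular $\w_{\text{sp}}=\mathbf{0}$, contradicting our assumption. Thus the coefficients on the spurious features of any invariant predictor are zero.

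I do not expect a genuine obstacle: the substance is the reduction to $Mv=\mathbf{0}$ together with a dimension count. The two mildly delicate points are (i) verifying that the ratio identity of \lemref{lem:calibration_conditions_spurious_setting} follows from invariance alone --- which it does, being derived precisely from equality of conditional expectations across environments --- and (ii) reading ``full rank'' of a $k\times(d_{\text{sp}}+2)$ matrix with $k\geq d_{\text{sp}}+2$ as full column rank, which is what collapses the $k$ constraints to the unique solution $v=\mathbf{0}$. It is worth noting that the ``$1$'' column of $M$ absorbs the environment-independent contribution of $\w_{\text{ns}}$ while the ``$\sigma_i^2$'' column absorbs $t\|\w_{\text{sp}}\|^2$; this bookkeeping is exactly why $d_{\text{sp}}+2$ environments in general position suffice here --- one more than the $d_{\text{sp}}+1$ that IRM would require.
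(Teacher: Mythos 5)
Your proof is correct and follows essentially the same route as the paper's: extract the cross-environment ratio condition from the log-odds computation of Lemma~\ref{lem:calibration_conditions_spurious_setting}, rewrite it as the homogeneous system $Mv=\mathbf{0}$, and invoke full column rank of $M$ (using $k\geq d_{\text{sp}}+2$) to force $\w_{\text{sp}}=\mathbf{0}$. Your packaging is slightly cleaner in two minor respects — you avoid the paper's normalization of $t$ to $1$ by absorbing $-t\|\w_{\text{sp}}\|^2$ into a coordinate of $v$ (so $t=0$ needs no separate treatment), and you correctly observe that the ratio identity needs only invariance (equality of conditional expectations across environments), matching the lemma's hypothesis — but these are refinements of the same argument, not a different one.
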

\begin{proof}
According to \lemref{lem:calibration_conditions_spurious_setting}, writing down the conditional probability $P[Y \mid \sigma^{-1}(f(\x)), E = e ]$ and demanding calibration results in the constraint that either $\w=\mathbf{0}$, and then the linear coefficients on spurious features are indeed $0$; or that for some $t\neq 0$:
\begin{align*}
    \frac{ \w^\top_{\text{ns}}\mu_{\text{ns}} + \w^\top_{\text{sp}}\mu_i }{\w_{\text{ns}}^\top\Sigma_{\text{ns}}\w_{\text{ns}} + \sigma^2_i\|\w_{\text{sp}}\|^2_2} = t \quad \forall i\in{[k]}.
\end{align*}
Without loss of generality we can phrase these constraints as:
\begin{align*}
    \frac{ \w^\top_{\text{ns}}\mu_{\text{ns}} + \w^\top_{\text{sp}}\mu_i }{\w_{\text{ns}}^\top\Sigma_{\text{ns}}\w_{\text{ns}} + \sigma^2_i\|\w_{\text{sp}}\|^2_2} = 1 \quad \forall i\in{[k]}.
\end{align*}
This is true since if $\w$ is a solution to this system of equations where the right hand side is some $t\in{\reals}$ then $t\w$ is a solution to the system where $t$ is replaced by $1$.
Rewrite the constraints again to isolate the parts depending on $\w_{\text{sp}}$:
\begin{align*}
    \sigma_i^2\|\w_{\text{sp}}\|_2^2 - \mu^\top_i \w_{\text{sp}} = \w_{\text{ns}}^{\top}\Sigma_{\text{ns}}\w_{\text{ns}} - \w_{\text{ns}}^\top\mu_{\text{ns}} \quad \forall i\in{[k]}.
\end{align*}
To find whether this system has a solution where $\w_{\text{sp}}$ is non-zero we can replace the right hand side with a scalar variable $t\in{\reals}$, and ask whether the following system has a non-zero solution:
\begin{align*}
    \sigma_i^2\|\w_{\text{sp}}\|_2^2 - \mu^\top_i \w_{\text{sp}} = t \quad \forall i\in{[k]}.
\end{align*}
For the above equations to have a non-zero solution, the following linear system must also have such a solution:
\begin{align*}
    M(\{\mu_i, \sigma_i\}_{i=1}^{k})\x = \mathbf{0}.
\end{align*}
But from our non-degeneracy condition, such a solution does not exist.
\end{proof}

Next we generalize the above to prove the result from the main paper, namely when the matrices $\{\Sigma_i\}_{i=1}^{k}$ are not diagonal.
For this purpose we introduce a definition of general position for environments, similar to the one given in \cite{arjovsky2019invariant}.
\begin{definition}
Given $k > 2\dsp$ environments with mean parameters $\{\Sigma_i, \mu_i\}_{i=1}^{k}$, we say they are in general position if for all non-zero $\x\in{\reals^\dsp}$:
\begin{align*}
    \mathrm{dim}\left(\mathrm{span}\left\{\begin{bmatrix} \Sigma_i\x + \mu_i \\
    1 \end{bmatrix}\right\}_{i\in{[k]}}\right) = d_e+1.
\end{align*}
\end{definition}
Equipped with this notion of general position, we now need to show that if it holds then the only predictors that satisfy the conditions of \lemref{lem:calibration_conditions_spurious_setting} are those with $\w_{\text{sp}}=\mathbf{0}$. Another claim we will need to prove is that the subset of environments which do not lie in general position have measure zero in the set of all possible environment settings. Hence generic environments are expected to lie in general position. This argument will follow the lines of the one given in \cite{arjovsky2019invariant}, adapted to our case with the fixed coordinate $1$ added in the above definition.
\begin{theorem}
Under the setting of \lemref{lem:calibration_conditions_spurious_setting}, if the environments lie in general position then all classifiers that are calibrated across environments satisfy $\w_{\text{sp}}=\mathbf{0}$.
\end{theorem}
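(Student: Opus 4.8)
The plan is to argue by contradiction, using \lemref{lem:calibration_conditions_spurious_setting} to reduce calibration to an algebraic constraint on $\w$ and the environment parameters, and then to collapse everything to a single dimension count that the general-position hypothesis forbids. Suppose $f(\x;\w,b)$ is calibrated on all $k$ environments but $\w_{\text{sp}}\neq\mathbf{0}$; in particular $\w\neq\mathbf{0}$, so \lemref{lem:calibration_conditions_spurious_setting} supplies a scalar $t\neq 0$ with
\[
\frac{\w_{\text{ns}}^\top\mu_{\text{ns}}+\w_{\text{sp}}^\top\mu_i}{\w_{\text{ns}}^\top\Sigma_{\text{ns}}\w_{\text{ns}}+\w_{\text{sp}}^\top\Sigma_i\w_{\text{sp}}}=t\qquad\forall i\in[k].
\]
As in the proof of \lemref{thm:simple_case}, I would first normalize: replacing $\w$ by $t\w$ scales the numerator by $t$ and the denominator by $t^2$, so the common value of the ratio becomes $1$ while $\w_{\text{sp}}$ stays nonzero. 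Cross-multiplying and moving the $i$-independent terms to one side then gives, for every $i\in[k]$,
\[
\w_{\text{sp}}^\top\Sigma_i\w_{\text{sp}}-\w_{\text{sp}}^\top\mu_i \;=\; \w_{\text{ns}}^\top\mu_{\text{ns}}-\w_{\text{ns}}^\top\Sigma_{\text{ns}}\w_{\text{ns}} \;=:\; c,
\]
where $c$ does not depend on $i$.

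Next I would put this identity in exactly the form appearing in the definition of general position. Setting $\v:=-\w_{\text{sp}}\neq\mathbf{0}$ flips the sign of the linear term while leaving the quadratic term unchanged, so the display above becomes $\v^\top\bigl(\Sigma_i\v+\mu_i\bigr)=c$ for every $i\in[k]$. Reading this as an inner product in $\reals^{\dsp+1}$, it says that the fixed vector $\left(\v,\,-c\right)$ is orthogonal to each of the vectors $\left(\Sigma_i\v+\mu_i,\,1\right)$, $i\in[k]$. Since $\v\neq\mathbf{0}$, that orthogonal vector is nonzero, and hence
\[
\dim\spn\left\{\left(\Sigma_i\v+\mu_i,\,1\right)\right\}_{i\in[k]}\;\leq\;\dsp\;<\;\dsp+1 ,
\]
contradicting the general-position hypothesis instantiated at this particular nonzero $\v$. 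Therefore $\w_{\text{sp}}=\mathbf{0}$. The corollary that calibration then extends from $E_{\text{train}}$ to all of $\cE$ is immediate, since once $\w_{\text{sp}}=\mathbf{0}$ the calibration condition from \lemref{lem:calibration_conditions_spurious_setting} no longer references the environment at all.

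I expect the steps above to be essentially routine once \lemref{lem:calibration_conditions_spurious_setting} is granted; the only things to watch are the bookkeeping around sign conventions (the $\cY=\{-1,1\}$ reparametrization and the substitution $\v=-\w_{\text{sp}}$) so that the object produced is literally $\Sigma_i\v+\mu_i$, and the observation that the rescaling $\w\mapsto t\w$ is legitimate because $t\neq0$. The genuinely delicate companion fact — that the set of parameter tuples $\{\mu_i,\Sigma_i\}_{i=1}^k$ failing general position has Lebesgue measure zero in $\cE^k$ — is where the real work lies, and I would handle it by following Arjovsky et al.~\cite{arjovsky2019invariant}: the failure set is contained in the vanishing locus of a nonzero real-analytic function of the environment parameters, hence is null, and the substantive part is exhibiting one tuple of environments for which general position holds (so that this function is not identically zero). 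Constructing that explicit non-degenerate example, which is precisely where the bound $k>2\dsp$ is needed (enough environments so that no single quadric passes through all the points $\Sigma_i\v+\mu_i$ simultaneously over all directions $\v$), is the main obstacle in the full argument.
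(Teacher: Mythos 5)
Your proof is correct and follows essentially the same route as the paper: reduce calibration via the lemma to the environment-indexed quadratic constraint on $\w_{\text{sp}}$ (normalizing the ratio to $1$ exactly as the paper does), then read that constraint as saying a fixed nonzero vector in $\reals^{\dsp+1}$ annihilates every $\bigl(\Sigma_i\v+\mu_i,\,1\bigr)$, contradicting the full-dimension requirement of general position; your orthogonal-vector phrasing is just the paper's full-column-rank/null-space argument in disguise, and your sign bookkeeping via $\v=-\w_{\text{sp}}$ is fine. The measure-zero claim you defer to at the end is a separate lemma in the paper and is not part of the statement you were asked to prove, so its absence is not a gap here.
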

\begin{proof}
According to \lemref{lem:calibration_conditions_spurious_setting}, if the predictor is calibrated then \eqref{eq:desideratum_lem1} must hold.
Following the same arguments laid out in the proof at the main paper, we get that $\w_{sp}$ needs to be a solution for the following system of equations:
\begin{align} \label{eq:ellipsoid_system}
    \w_{sp}^\top\Sigma_i\w_{sp} - \mu_i^\top\w_{sp} - t = 0 \quad \forall i\in{[k]}.
\end{align}
Now, let $\w_{sp}\in{\reals^{d_{\text{sp}}}}$ be a non-zero vector and let us define the $k\times d_e+1$ matrix:
\begin{align*}
    M(\{\mu_i, \Sigma_i\}_{i=1}^{k}, \w_{sp}) = \begin{bmatrix}
    \w_{sp}^\top\Sigma_1 - \mu^\top_1 & 1 \\
    \vdots \\
    \w_{sp}^\top\Sigma_k - \mu^\top_k & 1
    \end{bmatrix}
\end{align*}
If the environments are in general position, the above matrix has full rank for any non-zero $\w_{sp}$. Similarly to the proof of \lemref{thm:simple_case}, if \eqref{eq:ellipsoid_system} has a non-zero solution then the following system must also have a solution:
\begin{align*}
    M(\{\mu_i, \Sigma_i\}_{i=1}^{k}, \w_{sp}) \x = \mathbf{0}.
\end{align*}
Which is of course impossible due to $M(\{\mu_i, \Sigma_i\}_{i=1}^{k}, \w_{sp})$ having full rank.
\end{proof}

We conclude with the statement about the measure of sets of environments which do not lie in general position, this will follow the lines of \cite{arjovsky2019invariant}.
\begin{lemma}
    Let $k > 2d_{\text{sp}}$ and $\{\mu_{i}\}_{i=1}^{k}$ be arbitrary fixed vectors, then the set of matrices $\{\Sigma_i\}_{i=1}^{k}\in (\mathbb{S}^{d_{\text{sp}}}_{++})^{k}$ for which $\{ \Sigma_i, \mu_i \}_{i=1}^{k}$ do not lie in general position has measure zero within the set $(\mathbb{S}^{d_{\text{sp}}}_{++})^{k}$.
\end{lemma}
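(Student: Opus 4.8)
The plan is to follow the genericity argument of \cite{arjovsky2019invariant}, recast as a dimension count for an incidence variety. Write $d := d_{\text{sp}}$ and treat the means $\mu_1,\dots,\mu_k$ as fixed. For a tuple $\boldsymbol{\Sigma} = (\Sigma_1,\dots,\Sigma_k)$ and a vector $\x \in \reals^{d}\setminus\{\zeros\}$, let $V(\boldsymbol{\Sigma},\x)$ be the $(d+1)\times k$ matrix whose $i$-th column is $\bigl[(\Sigma_i\x+\mu_i)^\top,\, 1\bigr]^\top$. By the definition of general position, $\{\Sigma_i,\mu_i\}_{i=1}^k$ fails to be in general position exactly when there is some $\x\neq\zeros$ with $\rank V(\boldsymbol{\Sigma},\x)\le d$. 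Accordingly, define the incidence set
\[
W \;=\; \bigl\{ (\boldsymbol{\Sigma},\x) \in (\mathbb{S}^{d})^{k}\times(\reals^{d}\setminus\{\zeros\}) \;:\; \rank V(\boldsymbol{\Sigma},\x)\le d \bigr\},
\]
which is semialgebraic, the rank bound being the simultaneous vanishing of all $(d+1)\times(d+1)$ minors of $V$ --- polynomials in the entries of $\boldsymbol{\Sigma}$ and $\x$. The set of ``bad'' tuples is the image $\pi(W)$ of $W$ under projection to the first factor. Since $\pi$ is a polynomial map, it suffices to prove $\dim W < \dim(\mathbb{S}^{d})^{k} = k\cdot\tfrac{d(d+1)}{2}$: then $\pi(W)$ is a semialgebraic set of dimension strictly below the ambient dimension, hence Lebesgue-null, and its intersection with the open set $(\mathbb{S}^{d}_{++})^{k}$ is null within $(\mathbb{S}^{d}_{++})^{k}$, which is what we want.

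To bound $\dim W$ I would fiber over the $\x$-coordinate and use the standard fact that for a semialgebraic map $\dim W \le \dim(\reals^{d}\setminus\{\zeros\}) + \sup_{\x\neq\zeros}\dim W_\x$, where $W_\x = \{\boldsymbol{\Sigma} : \rank V(\boldsymbol{\Sigma},\x)\le d\}$. Fix $\x\neq\zeros$. The linear map $\Sigma\mapsto\Sigma\x$ is surjective onto $\reals^{d}$ with kernel of dimension $\tfrac{d(d-1)}{2}$, so $\dim W_\x$ equals $k\cdot\tfrac{d(d-1)}{2}$ plus the dimension of the set of tuples $(\w_1,\dots,\w_k)\in(\reals^{d})^{k}$ for which the homogenized points $\bigl[(\w_i+\mu_i)^\top,1\bigr]^\top$ span at most $d$ dimensions. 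Those vectors span at most $d$ dimensions precisely when the points $\w_i+\mu_i$ all lie on a common affine hyperplane of $\reals^{d}$, and the $k$-tuples of points with that property form a set of dimension $d + k(d-1)$ --- choose the hyperplane ($d$ parameters), then each point within it ($d-1$ parameters). Hence $\dim W_\x \le k\cdot\tfrac{d(d-1)}{2} + kd - k + d$, i.e.\ $W_\x$ has codimension at least $k-d$ in $(\mathbb{S}^{d})^{k}$, and therefore
\[
\dim W \;\le\; d + \Bigl( k\cdot\tfrac{d(d+1)}{2} - (k-d) \Bigr) \;=\; k\cdot\tfrac{d(d+1)}{2} - (k-2d).
\]
Because $k > 2d = 2d_{\text{sp}}$, the right-hand side is strictly smaller than $\dim(\mathbb{S}^{d})^{k}$, which completes the proof.

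The one step needing care is precisely the conversion of the ``for every direction $\x$'' quantifier into a dimension count: one cannot take a union over the continuum of directions directly, and this is what the incidence-variety construction together with the fibration inequality $\dim W \le \dim(\text{base}) + \sup(\text{fiber dimension})$ is designed to handle. The remaining ingredients are routine: surjectivity of $\Sigma\mapsto\Sigma\x$ with kernel dimension $\tfrac{d(d-1)}{2}$; the characterization of rank drop of homogenized points via common affine hyperplanes; the dimension $d+k(d-1)$ of the set of $k$-tuples lying on a common hyperplane; and the fact that the image of a semialgebraic set of dimension $<n$ under a polynomial map into $\reals^{n}$ has Lebesgue measure zero (equivalently, stratify and use that a $C^1$ image of a lower-dimensional manifold is null). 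I would also record the small subtlety that $\rank V(\boldsymbol{\Sigma},\x)$ is \emph{not} invariant under rescaling $\x$ once some $\mu_i\neq\zeros$, so the direction has to be parametrized by $\reals^{d}\setminus\{\zeros\}$ (dimension $d$) rather than by $\mathbb{P}^{d-1}$; this costs one unit in the bound and is harmless since $k-2d\ge 1$ to begin with.
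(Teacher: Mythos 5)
Your proof is correct, but it follows a genuinely different route from the paper's. The paper works in the space $\reals^{k\times d_{\text{sp}}}$ of matrices: it considers the image $\W$ of the map $\x\mapsto G(\x)$ built from $\Sigma_i\x$ and $\mu_i$, invokes the parametric-transversality argument of Theorem~10 in the IRM paper to conclude that for generic $\{\Sigma_i\}$ the curve $\W$ misses the low-rank strata, and then handles the remaining failure mode (full column rank with $\mathbf{1}_k$ in the column span) by showing that the manifold $\M^1_{*}(k,d_{\text{sp}})$ of such matrices has dimension $k(d_{\text{sp}}-1)+d_{\text{sp}}$ -- proved via the submersion theorem applied to $F(A)=\mathbf{1}_{k-d_{\text{sp}}}-CB^{-1}\mathbf{1}_{d_{\text{sp}}}$ -- so that transversality forces an empty intersection when $k>2d_{\text{sp}}$. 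You instead build a single semialgebraic incidence set over pairs $(\boldsymbol{\Sigma},\x)$, bound its dimension by fibering over $\x$ (using surjectivity of $\Sigma\mapsto\Sigma\x$ with kernel dimension $d(d-1)/2$ and the affine-hyperplane characterization of the rank drop of the augmented matrix), and then project via Tarski--Seidenberg, so that the bad parameter set has semialgebraic dimension at most $k\,d(d+1)/2-(k-2d)$ and is hence Lebesgue-null. Both arguments ultimately turn on the same codimension count ($k-2d>0$ after spending $d$ dimensions on the direction $\x$, exactly as in the paper's inequality $\dim\W+\dim\M^1_{*}-kd_{\text{sp}}<0$), but your version is self-contained: it does not lean on the IRM paper's transversality theorem, and it treats the two failure modes (column-rank deficiency of $G(\x)$ and $\mathbf{1}_k$ lying in its column space) uniformly through the single condition $\rank V\le d$, at the modest cost of invoking standard semialgebraic facts (fiber-dimension inequality, dimension does not increase under semialgebraic maps, sets of dimension below ambient are null); your remark that $\x$ cannot be projectivized because of the inhomogeneous terms $\mu_i$ and the constant row of ones is exactly the right subtlety to flag, and it is consistent with the paper requiring $k>2d_{\text{sp}}$ rather than $k\ge 2d_{\text{sp}}$.
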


\begin{proof}
We assume $k > 2\dsp$ and denote by $LR(k, \dsp, r)$ the matrices of dimensions $k\times \dsp$ and rank $r$. Also for any $d$ denote by $\mathbf{1}_d$ the vector in $\reals^d$ where all entries equal $1$. Define $\M^{1}_{*}(k, \dsp)$ as the set of $k\times \dsp$ matrices of full column-rank whose columns span the vector of ones $\mathbf{1}_{k}$:
\begin{align*}
    \M^1_{*}(k, \dsp) = \{A\in{LR(k, \dsp, \dsp)} \mid \mathbf{1}_{k} \in{\mathrm{colsp}(A)} \}.
\end{align*}
Let $\{ \Sigma_i \}_{i=1}^{k}\in{(\mathbb{S}^{d_{\text{sp}}}_{++})^{k}}$ and define $\W\subseteq \reals^{k \times d_{sp}}$ as the image of the mapping $G:\reals^{d_{sp}}\setminus{\{0\}}\rightarrow \reals^{k \times d_{sp}}$:
    \begin{align*}
        (G(\x))_{i, l} = \left( \Sigma_i\x - \mu_i \right)_l
    \end{align*}
By the definition of general position given in the paper, the environments defined by $\{\Sigma_{i}, \mu_i\}_{i=1}^{k}$ lie in general position if $\W$ does not intersect $LR(k, \dsp, r)$ for all $r<\dsp$ and $\M^1_{*}(k, \dsp)$. We would like to show that this happens for all but a measure zero of $\left(\mathbb{S}^{d_{sp}}_{++}\right)^k$.

Due to the exact same arguments in Thoerem 10 of \cite{arjovsky2019invariant}, we have that $\W$ is transversal to any submanifold of $\reals^{k\times d_{\text{sp}}}$ and also does not intersect $LR(k, \dsp, r)$ where $r<\dsp$, for all $\{\Sigma_i \}_{i=1}^{k}$ but a measure zero of $\left(\mathbb{S}^{d_{sp}}_{++}\right)^k$.

It is left to show that it also does not intersect $\M^1_{*}(k, \dsp)$ for all but a measure zero of $\left(\mathbb{S}^{d_{sp}}_{++}\right)^k$. Because $\M^1_{*}(k, \dsp)$ is a submanifold of $\reals^{k\times \dsp}$, it intersects transversally with $\W$ for generic $\{\Sigma_i \}_{i=1}^{k}$. Then by transversality they cannot intersect if $\text{dim}(\W) + \text{dim}(\M^1_{*}(k, d_{\text{sp}})) - \text{dim}(\reals^{k\times d_{\text{sp}}}) < 0$. We will claim that $\text{dim}(\M^1_{*}(k, \dsp)) = k(d_{\text{sp}} - 1) + d_{\text{sp}}$ and then since $k>2d_{\text{sp}}$ we may obtain:
\begin{align*}
\text{dim}(\W) + \text{dim}(\M^1_{*}(k, d_{\text{sp}})) - \text{dim}(\reals^{k \times d_{\text{sp}}})
& \leq d_{\text{sp}} + k(d_{\text{sp}}-1) + d_{\text{sp}} - kd_{\text{sp}} \\
& = 2d_{\text{sp}} - k \\
&< 0.
\end{align*}
The negativity of the dimension implies that if $\W$ and $\M^1_{*}(k, d_{\text{sp}})$ are transversal then they do not intersect, and we may conclude our desired result that the environments lie in general position for all but a measure zero of $\left(\mathbb{S}^{d_{\text{sp}}}_{++}\right)^k$.

To show that $\text{dim}(\M^1_{*}(k, d_{\text{sp}})) = k(\dsp - 1) + \dsp$, consider a matrix $A\in{\M^1_{*}(k, d_{\text{sp}})}$. Since it has full rank, it has a $\dsp\times\dsp$ minor that is invertible. Assume this minor is just the first $\dsp$ rows of $A$, otherwise there is a linear isomorphism that transforms it into such a matrix and the arguments that follow still apply (see \cite{lee2013smooth}, Example 5.30; our proof follows a similar line of reasoning). Now write $A$ as a block matrix using $B\in{\reals^{\dsp\times\dsp}}, C\in{\reals^{(k-\dsp)\times\dsp}}$:
\begin{align*}
    A = \begin{bmatrix}
    B \\
    C
    \end{bmatrix}.
\end{align*}
Denoting by $\U$ the set of $k\times\dsp$ matrices whose first $\dsp$ rows are invertible, we consider the mapping $F:\U \rightarrow \reals^{k-\dsp}$:
\begin{align*}
    F(A) = \mathbf{1}_{k-\dsp} - CB^{-1}\mathbf{1}_{\dsp}.
\end{align*}
Clearly $F^{-1}(\mathbf{0}) = \M^1_{*}(k, d_{\text{sp}})$ and $F$ is smooth. We will show that it is a submersion by observing that its differential $DF(U)$ is surjective for each $U\in{\U}$. To this end, for a given $U=\begin{bmatrix} B \\ C \end{bmatrix}$ and any $X\in{\reals^{(k-\dsp)\times\dsp}}$ define a curve $\gamma:(-\epsilon, \epsilon) \rightarrow \U$ by:
\begin{align*}
    \gamma(t) = \begin{bmatrix}
    B \\
    C + \gamma X
    \end{bmatrix}.
\end{align*}
We have that:
\begin{align*}
    (F\circ\gamma)'(t) = \frac{d}{dt}|_{t=0}(\mathbf{1}_{k-\dsp} - (C+tX)B^{-1}\mathbf{1}_{\dsp}) = XB^{-1}\mathbf{1}_{\dsp}.
\end{align*}
Since $B^{-1}\mathbf{1}_{\dsp}$ is not the zero vector, and $X\in{\reals^{(k-\dsp)\times\dsp}}$ where $k-\dsp>\dsp$, then it is clear that the above mapping is surjective. Note that the derivatives along the curve are just a subset of the range of $DF(U)$, hence $DF(U)$ is also surjective at each point $U\in{\U}$. It follows from the submersion theorem that $\mathrm{dim}(\M^1_{*}(k, \dsp)) = k\dsp - (k-\dsp) = k(\dsp - 1) + \dsp$ as desired for our result to hold.
\end{proof}

\subsection{Regression Under Covariate Shift and Spurious Features} \label{sec:proof2}
We now move on to the second scenario presented in the paper where the mechanism $P(Y \mid X)$ is invariant and the diagram depicting the data generating process is given in \figref{fig:scenario_b}. Here for each environment $i\in{[k]}$ we will have:
\begin{align} \label{eq:regression_sem}
    &X_{c} \sim  \N(\mu^c_i, \Sigma^c_{i}) \\
    &Y = {\w^*_c}^\top \x_c + \xi, \: \xi\sim\N(0, \sigma^2_y) \nonumber\\
    &X_{sp} = y\mu_i + \eta, \: \eta\sim\N(\mathbf{0}, \Sigma_i). \nonumber
\end{align}
We consider a regressor $f:\cX\rightarrow\reals^2$, where the estimate of the mean is linear, i.e. $[f(\x; \w)]_1 = \w^\top\x$, and the estimate of the variance is constant $[f(\x; \w)]_2=c$.\footnote{Limiting the variance estimate to a constant does not make a difference for the purpose of our proof. The proof does not rely on the correctness of the variance estimate as imposed by \eqref{eq:calibrated_regressor_multidomain}, but only on the variances being equal across environments when conditioned on $f(\x)$. In other words it relies on the correctness of the mean estimate, and the distribution of $Y$ conditioned on $f(X)$ being the same across environments.}
We decompose the weights $\w$ into their parts corresponding to causal and spurious features $[\w_c, \w_{sp}]$. Then our result regarding calibration and generalization to $\cE$ is given below.
\begin{figure}[ht]
    \centering
    \begin{tikzpicture}
    \node[draw, rectangle, dashed, text centered] (e) {$E$};
        \node[below of = e, node distance=1.cm, text centered] (y) {$Y$};
        \node[right of = y, node distance=1.5cm, text centered] (xac) {$X_{\text{ac-sp}}$};
        \node[left of = y, node distance=1.2cm, text centered] (xc) {$X_{\text{c}}$};
        \draw[->, line width = 1] (e) -- (xc);
        \draw[->, line width = 1] (xc) -- (y);
        \draw[->, line width = 1] (y) -- (xac);
        \draw[->, line width = 1] (e) -- (xac);
    \end{tikzpicture}
    \caption{Diagram for data generating process in the covariate shift scenario.}
    \label{fig:scenario_b}
\end{figure}
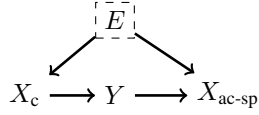
\begin{theorem} \label{thm:causal_regression}
Denote the dimensions of $X_c, X_{sp}$ by $d_c, d_{sp}$ accordingly. Assume we have $k$ environments with parameters $\{ \mu^c_i, \mu_i, \Sigma^c_i, \Sigma_i \}_{i=1}^{k}$. For any matrix $A$ denote its $i$-th row by $A^i$, and define the matrices $M(\{\mu^c_i,\mu_i\}_{i=1}^{k})\in{\reals^{k\times d_{\text{c}}+\dsp+1}}$ and $M_2(\{\mu^c_i,\Sigma^c_i\}_{i=1}^{k}, \sigma^2_y, \w_c^*)\in{\reals^{k\times d_c+2}}$ whose rows are given by:
\begin{align*}
    M(\{\mu^c_i,\mu_i\}_{i=1}^{k}) &= \begin{bmatrix}
            {\mu^c_i}^\top & \left({\w^*_c}^\top\mu^c_1\right)\mu_1^\top & 1 \\
            & \vdots & \\
            {\mu^c_k}^\top & \left({\w^*_c}^\top\mu^c_k\right)\mu_k^\top & 1
    \end{bmatrix}, \\ M_2(\{\mu^c_i,\Sigma^c_i\}_{i=1}^{k}, \sigma^2_y, \w_c^*) &= \begin{bmatrix}
            {\w_c^*}^\top\Sigma^c_1 + \left(\frac{  {\w_c^*}^\top\Sigma^c_1{\w_c^*} + \sigma^2_y}{{\w^*_c}^\top\mu^c_1} \right){\mu^c_1}^\top & \frac{{\w_c^*}^\top\Sigma^c_1{\w_c^*}}{{\w^*_c}^\top\mu^c_1} & 1 \\
            & \vdots & \\
            {\w_c^*}^\top\Sigma^c_k + \left(\frac{  {\w_c^*}^\top\Sigma^c_k{\w_c^*} + \sigma^2_y}{{\w^*_c}^\top\mu^c_k} \right){\mu^c_k}^\top & \frac{{\w_c^*}^\top\Sigma^c_k{\w_c^*}}{{\w^*_c}^\top\mu^c_k} & 1
    \end{bmatrix}.
\end{align*}
Let $f(\x; \w)$ be a calibrated regressor, assume ${\w_c^{*}}^\top\mu_i^c \neq 0$ for all $i\in{[k]}$ and that there exists $i,j\in{[k]}$ such that $\E[Y \mid E=e_i] \neq \E[Y \mid E=e_j]$. Furthermore assume that one of the following conditions hold:
\begin{itemize}
    \item $k > \max{\{d_\text{c} + 2, d_{\text{sp}}\}}$, $M_2(\{\mu^c_i,\Sigma^c_i\}_{i=1}^{k}, \sigma^2_y, \w_c^*)$ has full rank and the means of spurious features $\{\mu_i\}_{i=1}^{k}$ span $\reals^{d_{sp}}$.
    \item $k > d_c + d_{sp} + 1$ and $M(\{\mu^c_i,\mu_i\}_{i=1}^{k})$ has full rank.
\end{itemize}
then the weights of $f$ must be $\w = [\w_c^*, \mathbf{0}]$.
\end{theorem}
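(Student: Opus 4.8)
The plan is to reuse the template of the proofs in Section~\ref{sec:proof1}: turn multi-domain calibration into a finite family of polynomial identities in the unknown weight vector, then invoke the full-rank hypotheses to pin down the only solution, which is readily checked to be $\w=[\w_c^*,\mathbf{0}]$ (this choice makes $Y=f(X)+\xi$ in every environment, hence calibrated with residual variance $\sigma_y^2$).

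\textbf{Step 1 (algebraic form of calibration).} Fix a training environment $e_i$. Since $X_c,\xi,\eta$ are independent Gaussians and $f(\x;\w)=\w_c^\top\x_c+\w_{sp}^\top\x_{sp}$, plugging in \eqref{eq:regression_sem} shows $([f(X)]_1,Y)$ is jointly Gaussian given $E=e_i$: with $\rho_i:=\mu_i^\top\w_{sp}$ and $\v_i:=\w_c+\rho_i\w_c^*$ one has $[f(X)]_1=\v_i^\top X_c+\rho_i\xi+\w_{sp}^\top\eta$ and $Y={\w_c^*}^\top X_c+\xi$, giving closed forms for the conditional means, variances and covariance of $[f(X)]_1$ and $Y$ in terms of $\w_c,\w_{sp}$ and the environment parameters. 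Because $Y\mid([f(X)]_1=\alpha,e_i)$ is Gaussian, calibration forces the regression of $Y$ on $[f(X)]_1$ to have unit slope and zero intercept, i.e.\ $\mathrm{Cov}(f(X),Y\mid e_i)=\rV(f(X)\mid e_i)$ and $\E[Y\mid e_i]=\E[f(X)\mid e_i]$, while the second-moment requirement (see the footnote to the definition) reduces to $\rV(Y\mid f(X),e_i)$ being a single constant $\tau$ for all $i$. Substituting the closed forms yields, for each $i\in[k]$, the linear \emph{mean identity} ${\mu_i^c}^\top\w_c+({\w_c^*}^\top\mu_i^c)\mu_i^\top\w_{sp}={\w_c^*}^\top\mu_i^c$, together with the slope and equal-residual-variance identities, which a priori also carry the nonlinear terms $\rho_i^2$, $\v_i^\top\Sigma_i^c\v_i$ and $\w_{sp}^\top\Sigma_i\w_{sp}$. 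The stated non-degeneracy conditions (${\w_c^*}^\top\mu_i^c\neq0$, and $\E[Y\mid e_i]$ not constant in $i$) are what keep these conditionals non-degenerate --- ruling out $\w=\mathbf{0}$ --- and keep the matrices below well-defined.

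\textbf{Step 2 (case $k>d_c+d_{sp}+1$, $M$ full rank).} The mean identity alone suffices. Writing $\w_c=\w_c^*+\delta$ and using ${\mu_i^c}^\top\w_c^*={\w_c^*}^\top\mu_i^c$, it becomes the homogeneous system ${\mu_i^c}^\top\delta+({\w_c^*}^\top\mu_i^c)\mu_i^\top\w_{sp}=0$ for all $i$, whose coefficient matrix is formed by the first $d_c+d_{sp}$ columns of $M(\{\mu_i^c,\mu_i\}_{i=1}^k)$; since $M$ has full rank these columns are independent, so $\delta=\mathbf{0}$ and $\w_{sp}=\mathbf{0}$.

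\textbf{Step 3 (case $k>\max\{d_c+2,d_{sp}\}$, $M_2$ full rank, $\mathrm{span}\{\mu_i\}_{i=1}^k=\reals^{d_{sp}}$).} The slope and equal-residual-variance identities both contain $\rV(f(X)\mid e_i)$, and $\rV(Y\mid e_i)={\w_c^*}^\top\Sigma_i^c\w_c^*+\sigma_y^2$ is parameter-free, so subtracting them cancels every nonlinear term and leaves a relation linear in $\w_c$, $\mu_i^\top\w_{sp}$ and $\tau$. Combining this with the mean identity scaled by $({\w_c^*}^\top\Sigma_i^c\w_c^*+\sigma_y^2)/({\w_c^*}^\top\mu_i^c)$ (licit since ${\w_c^*}^\top\mu_i^c\neq0$) eliminates $\mu_i^\top\w_{sp}$; subtracting off the known solution $(\w_c,\tau)=(\w_c^*,\sigma_y^2)$ then recasts the constraints --- up to the precise row operations that define it --- as the homogeneous system $M_2(\{\mu_i^c,\Sigma_i^c\}_{i=1}^k,\sigma_y^2,\w_c^*)\,v'=\mathbf{0}$, where $v'$ is a fixed packaging of $\w_c-\w_c^*$ and $\tau-\sigma_y^2$. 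Full column rank of $M_2$ (available because $k>d_c+2$) forces $v'=\mathbf{0}$, i.e.\ $\w_c=\w_c^*$. Plugging $\w_c=\w_c^*$ back into the mean identity leaves $({\w_c^*}^\top\mu_i^c)\mu_i^\top\w_{sp}=0$, hence $\mu_i^\top\w_{sp}=0$ for every $i$; since $\{\mu_i\}_{i=1}^k$ spans $\reals^{d_{sp}}$, $\w_{sp}=\mathbf{0}$.

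\textbf{Main obstacle.} The delicate part is the bookkeeping in Steps~1 and~3: the model couples a covariate-shift channel (entering via $\v_i=\w_c+\rho_i\w_c^*$ and $\Sigma_i^c$) with a spurious channel (entering via $\rho_i=\mu_i^\top\w_{sp}$ and $\Sigma_i$), so one must verify carefully that the quadratic terms $\rho_i^2$, $\v_i^\top\Sigma_i^c\v_i$, $\w_{sp}^\top\Sigma_i\w_{sp}$ really cancel in the chosen differences of calibration identities, and that what remains assembles exactly into the matrices $M$ and $M_2$ as stated. After that, each case is the same ``full-rank coefficient matrix $\Rightarrow$ trivial kernel'' argument as in Section~\ref{sec:proof1}.
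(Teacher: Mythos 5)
Your proposal is correct and follows essentially the same route as the paper's proof: write the joint Gaussian law of $(f(X),Y)$ in each environment, translate calibration into affine conditions on the conditional mean together with equality of the conditional variances, observe that all quadratic terms enter only through $\mathrm{Var}(f(X)\mid e_i)$ and disappear once the slope condition is substituted, and conclude with the full-rank and spanning arguments that force $\w=[\w_c^*,\mathbf{0}]$. The only (harmless) difference is that you impose exact calibration (unit slope, zero intercept) per environment from the outset, whereas the paper first exploits only cross-environment invariance with nuisance scalars $t,t_2,t_3$ and pins $t=1$ at the very end, so your homogeneous systems involve column-submatrices of $M$ and $M_2$, whose full column rank is implied by the stated full-rank hypotheses.
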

It is rather clear that rank-deficiency of $M_2$ would impose some highly non-trivial conditions on the relationships between $\mu_i^c, {\w_c^*}^\top\Sigma^c_i$ and the conditions given above are satisfied for all settings of environments other than a measure zero under any absolutely continuous measure on the parameters $\w_c^*, \{\mu_i^c, \Sigma_i^c\}_{i=1}^{k}$. The proof proceeds by writing the conditional distribution of $Y$ on $f(X)$, and showing that the conditions in the theorem are the direct result of the calibration constraints. 
\begin{proof}
Since $X_c, X_{sp}, Y$ are jointly Gaussian, we can write their distribution at environment $i\in{[k]}$ as:
\begin{align*}
\begin{bmatrix}
X_c \\
X_{sp} \\
Y
\end{bmatrix} \sim \N\Bigg(&\begin{bmatrix}
\mu^c_i \\
({\w^*_c}^\top \mu^c_i)\mu_i\\
{\w^*_c}^\top \mu^c_i
\end{bmatrix}, \\ &\begin{bmatrix}
\Sigma^c_i & \Sigma^c_i\w_c^*\mu_i^\top & \Sigma^c_i\w_c^* \\
\mu_i{\w_c^*}^\top\Sigma^c_i & \left({\w_c^*}^\top\Sigma^c_i{\w_c^*}+\sigma_y^2\right)\mu_i\mu_i^\top + \Sigma_i & \left({\w_c^*}^\top\Sigma^c_i{\w_c^*}+\sigma_y^2\right)\mu_i\\
{\w_c^*}^\top\Sigma^c_i &  ({\w_c^*}^\top\Sigma^c_i{\w_c^*} + \sigma_y^2)\mu_i^\top & {\w_c^*}^\top\Sigma^c_i{\w_c^*} + \sigma_y^2
\end{bmatrix}\Bigg).
\end{align*}
The predictions $\w^\top X$ are then also normally distributed, and jointly with $Y$ this can be written as:
\begin{align*}
\begin{bmatrix}
\w^\top X \\
Y
\end{bmatrix} \sim \N\Bigg(&\begin{bmatrix}
\w_c^\top\mu^c_i + (\w_{sp}^\top\mu_i)({\w^*_c}^\top\mu^c_i) \\
{\w^*_c}^\top \mu^c_i
\end{bmatrix},
\begin{bmatrix}
        \sigma^2_{f, i} & \sigma_{f,y,i} \\
        \sigma_{f,y,i} & \sigma^2_{y, i}
\end{bmatrix}
\Bigg),
\end{align*}
where we defined the items of the covariance matrix:
\begin{align*}
\sigma_{f,i}^2 &= \w_c^\top\Sigma^c_i\w_c + 2(\w^\top_c\Sigma^c_i\w^*_{c})(\mu_i^\top\w_{sp}) + \w_{sp}^\top\left(\mu_i\mu_i^\top({\w_c^*}^\top\Sigma^c_i\w_c^* + \sigma_y^2) + \Sigma_i\right)\w_{sp}, \\
\sigma_{f,y,i} &= {\w_c^*}^\top\Sigma^c_i\w_c + ({\w^*_c}^\top\Sigma^c_i\w_c^* + \sigma^2_y)\mu_i^\top\w_{sp}, \\
\sigma_{y, i}^2 &=  {\w_c^*}^\top\Sigma^c_i{\w_c^*} + \sigma_y^2.
\end{align*}
Now we can write the mean of the conditional distribution of $Y$ on $f(X)_1=\alpha$ as:
\begin{align*}
    \E\left[ Y \mid f(X)_1=\alpha, E=e_i \right] = {\w_c^*}^\top\mu^c_i + \frac{\sigma_{f,y,i}}{\sigma^2_{f,i}}(\alpha - \w_c^\top\mu^c_i - (\w_{sp}^\top\mu_i)({\w^*_c}^\top\mu^c_i)).
\end{align*}
For each environment $i\in{[k]}$, the above is a linear function of $\alpha$. Demanding $f(X)$ to be calibrated on all environments then imposes both the slopes and intercepts to be equal across environments. Writing this for the slope, we obtain that there must exist $t\in{\reals}$ such that:
\begin{align} \label{eq:slope_invariance}
    \frac{\sigma_{f, y, i}}{\sigma^2_{f, i}} = t \quad \forall i\in{[k]}. 
\end{align}
We note that $t \neq 0$ since if it is zero then we have that $\E[Y \mid f(X)_1=\alpha, E=i]$ does not depend on $\alpha$, where calibration demands that it equals $\alpha$. This can only happen if $\w_c=\mathbf{0}$, otherwise the range of $f(\x)$ is $\reals$ because we assumed in the definition of the environments that $\Sigma_c^i\succ 0$. Furthermore, $\w_c=\mathbf{0}$ cannot be calibrated if $\E[Y \mid E=e_i]$ is not constant across environments; which is also part of the non-degeneracy constraints we required. 
Next we demand the equality of the intercepts across environments. Taking these equations and replacing \eqref{eq:slope_invariance} into each of them, we get:
\begin{align*}
    {\w_c^*}^\top\mu^c_i - t\left(\w_c^\top\mu^c_i + (\w^\top_{sp}\mu_i)({\w^*_c}^\top\mu^c_i)\right) =  {\w_c^*}^\top\mu^c_j - t\left(\w_c^\top\mu^c_j + (\w^\top_{sp}\mu_j)({\w^*_c}^\top\mu^c_j)\right) \: \forall i,j\in{[k]}.
\end{align*}
Dividing both sides by $t$ and defining $\bar{\w}_c = \frac{\w^*_c}{t} - \w_c$, we can introduce another variable $t_2\in{\reals}$ and write this as a linear system of equations in variables $\w_{sp}, \bar{\w}_c, t_2$:
\begin{align} \label{eq:means_invariance}
    \bar{\w}^\top_c\mu^c_i - \w_{sp}^\top\mu_i({\w^*_c}^\top\mu^c_i) + t_2 = 0 \quad \forall i\in{[k]}.
\end{align}
We see that given $d_c + d_{sp} + 1$ environments, then with mild conditions on their non-degeneracy (i.e. the vectors containing the environment means and an extra entry of $1$ span $\reals^{d_c+d_{sp}+1}$), the only solution to the system is $\bar{\w}_c=0, \w_{sp}=0$, proving the last part of our statement.

Moving forward to demand multiple calibration on second moments $\E[Y^2 \mid f(X)_1=\alpha, E=e_i] = \E[Y^2 \mid f(X)_1=\alpha, E=e_j]$ for all $i,j\in{[k]}$, we may write this as:
\begin{align*}
    \sigma^2_{y,i} - \frac{\sigma^2_{f,y,i}}{\sigma^2_{f,i}} = \sigma^2_{y,j} - \frac{\sigma^2_{f,y,j}}{\sigma^2_{f,j}} \quad \forall i,j\in{[k]}.
\end{align*}
Plugging \eqref{eq:slope_invariance} into the above, a simplified expression is obtained:
\begin{align*}
    \sigma^2_{y,i} - t\sigma_{f,y,i} = \sigma^2_{y,j} - t\sigma_{f,y,j} \quad \forall i,j\in{[k]}.
\end{align*}
Again we can divide by $t$ and obtain an explicit expression using $\bar{\w}_c, \w_{sp}$:
\begin{align*}
    \bar{\w}_c^\top\Sigma^c_i\w_c^* - ({\w_c^*}^\top\Sigma^c_i\w_c^* + \sigma_y)\w^\top_{sp}\mu_i = \bar{\w}_c^\top\Sigma^c_j\w_c^* - ({\w_c^*}^\top\Sigma^c_j\w_c^* + \sigma_y)\w^\top_{sp}\mu_j \quad \forall i,j\in{[k]}.
\end{align*}
Finally, we can plug in \eqref{eq:means_invariance} and introduce another variable $t_3\in{\reals}$ to turn the above equations into:
\begin{align*}
    \bar{\w}_c^\top\left( \Sigma^c_i{\w_c^*} + \left(\frac{ {\w_c^*}^\top\Sigma^c_i{\w_c^*} + \sigma^2_y}{{\w^*_c}^\top\mu^c_i}\right)\mu^c_i \right) + t_2\left(\frac{ {\w_c^*}^\top\Sigma_i\w_c^*}{{\w^*_c}^\top\mu^c_i} \right) + t_3 = 0.
\end{align*}
It is now easy to see that if $k > d_c + 2$ and $\mathbf{M}_2(\{\mu_i,\Sigma_i\}_{i=1}^{k}, \sigma^2_y, \w_c^*)$ has full rank, the only solution to these equations satisfies $\bar{\w}_c=\mathbf{0}, t_2=t_3=0$. When this is plugged into \eqref{eq:means_invariance}, we find that if $k > d_{sp}$ and the spurious means span $\reals^{d_{sp}}$ then the only possible solution is $\w_{sp}=\mathbf{0}$. Finally, $\bar{\w}_c=\mathbf{0}$ means $\w^*_c=t\w_c$, and if $f(\x)$ is calibrated then we must have $t = 1$ since otherwise its estimate of the conditional mean is incorrect. Hence our proof is concluded.
\end{proof}

We note that even though the setting we considered is restricted to causal features, anti-causal non-spurious features as those in \figref{fig:scenario_a} can also be treated (resulting in the graph given in \figref{scm}). This is since for a single environment, the distribution $P[X_{\text{c}}, X_{\text{ac-ns}}, \Xac, Y \mid E=e]$ (we shorten here to $P^e$ for convenience) can always be written as follows, treating $X_{\text{ac-ns}}$ as causal features:
\begin{align*}
    P^e[X_{\text{c}}, X_{\text{ac-ns}}, X_{\text{ac-sp}}, Y] &= P^e(X_{\text{c}}, X_{\text{ac-ns}})P^e(Y \mid X_{\text{c}}, X_{\text{ac-ns}})P^e(X_{\text{ac-sp}} \mid Y, X_{\text{ac-ns}}, X_{\text{c}}) \\
    &= P^e(X_{\text{c}}, X_{\text{ac-ns}})P^e(Y \mid X_{\text{c}}, X_{\text{ac-ns}})P^e(X_{\text{ac-sp}} \mid X_{\text{ac-ns}}).
\end{align*}
The last equality is due to the separation properties of the graph, and since the joint distribution is a multivariate Gaussian, so are all the factors in the above product. Hence each environment can be described using a structural equation model of the same type as \eqref{eq:regression_sem} and \thmref{thm:causal_regression} applies.




\section{Dataset Statistics and Models} \label{sec:datasets_models}

For each of the four WILDS experiments presented in \secref{sec:exp}, we briefly describe the data and report the splits we use for training, validation and test. In each experiment we train a model on the training set, and the calibrators on the validation set. The post-processing calibrators receive tuples of model predictions and labels as input, whereas fine tuning with \ours{} receives a latent representation (values of the last hidden layer for \textit{Camelyon17} and \textit{FMoW}, and average of the representation of the cls token over the last $4$ hidden layers in \textit{CivilComments}). \ours{} is trained over a Multilayer Perceptron with $3$ hidden layers, with batch size of $64$ and the Adam optimizer. We then compare all alternatives (Original, Naive Calibration, Robust Calibration and \ours{}) on the held-out test set (OOD). Whenever an In-Domain (ID) test set is available (\textit{PovertyMap} and \textit{Camelyon17}), we evaluate the model on it as well. Throughout our experiments, we measure and report the Expected Calibration Error (ECE) using $10$ bins, dividing the $[0,1]$ interval into sub-intervals of equal length. The licenses to the datasets are CC0 for \textit{Camelyon17} and \textit{CivilComments}, \textit{FMoW} is distributed under the FMoW Challenge Public License and \textit{PovertyMap} is public domain. All model training is done on an infrastructure with 4 RTX 2080 Ti GPUs.

\subsection{\textit{PovertyMap}}
\textbf{Problem Setting} \textit{PovertyMap} is a regression task of poverty mapping across countries. Input $\x$ is a multispectral satellite image, output $y$ is a real-valued asset wealth index and domain $d$ is a country and whether the satellite image is of an urban or a rural area. The goal is to generalize across countries and demonstrate subpopulation performance across urban and rural areas.

\textbf{Data} \textit{PovertyMap} is based on a dataset collected by \cite{yeh2020using}, which organized satellite images and survey data from 23 African countries between 2009 and 2016. There are 23 countries, and every location is classified as either urban or rural. Each example includes the survey year, and its urban/rural classification. 

\begin{enumerate}
    \item Training: 10000 images from 13 countries.
    \item Validation (OOD): 4000 images from 5 different countries (distinct from training and test (OOD) countries).
    \item Test (OOD): 4000 images from 5 different countries (distinct from training and validation (OOD) countries).
    \item Validation (ID): 1000 images from the same 13 countries in the training set.
    \item Test (ID): 1000 images from the same 13 countries in the training set.
\end{enumerate}

\subsection{\textit{Camelyon17}}
\textbf{Problem Setting} \textit{Camelyon17} is a tumor identification task across different hospitals. Input $\x$ is an histopathological image, label $y$ is a binary indicator of whether the central region contains any tumor tissue and domain $d$ is an integer identifying the hospital. The training and validation sets include the same four hospitals, and the goal is to generalize to an unseen fifth hospital. We note that in \cite{koh2020wilds} they include data from three hospitals in the training set and validate on data from a fourth hospital. Our setting includes a validation set from multiple hospitals since our fine tuning methods requires multiple domains.

\textbf{Data} The dataset comprises 450000 patches extracted from 50 whole-slide images (WSIs) of breast cancer metastases in lymph node sections, with 10 WSIs from each of five hospitals in the Netherlands \cite{bandi2018detection}. Each WSI was manually annotated with tumor regions by pathologists, and the resulting segmentation masks were used to determine the labels for each patch. Data is split according to the hospital from which patches were taken.

\begin{enumerate}
    \item Training: 335996 patches taken from each of the 4 hospitals in the training set.
    \item Validation: 60000 patches taken from each of the 4 hospitals in the training set (15000 patches from each hospital).
    \item Test (OOD): 85054 patches taken from the 5th hospital, which was chosen because its patches were the most visually distinctive. 
\end{enumerate}

\subsection{\textit{CivilComments}}
\textbf{Problem Setting} \textit{CivilComments} is a toxicity classification task across different demographic identities. Input $\x$ is a comment on an online article, label $y$ indicates if it is toxic, and domain $d$ is a one-hot vector with 8 dimensions corresponding to whether the comment mentions either of the 8 demographic identities \textit{male}, \textit{female}, \textit{LGBTQ}, \textit{Christian}, \textit{Muslim}, \textit{other religions}, \textit{Black}, and \textit{White}. The goal is to do well across all subpopulations, as computed through the average and worst case model performance.

\textbf{Data} \textit{CivilComments} comprises 450000 comments, annotated for toxicity and demographic mentions by multiple crowdworkers, where toxicity classification is modeled as a binary task \cite{borkan2019nuanced}. Each comment was originally made on an online article. Articles are randomly partitioned into disjoint training, validation, and test splits, and then formed the corresponding datasets by taking all comments on the articles in those splits. 

\begin{enumerate}
    \item Training: 269038 comments.
    \item Validation: 45180 comments.
    \item Test: 133782 comments.
\end{enumerate}

\subsection{\textit{FMoW}}
\textbf{Problem Setting} \textit{FMoW} is a building and land  multi-class classification task across regions and years. Input $\x$ is an RGB satellite image, label $y$ is one of 62 building or land use categories, and domain $d$ is the time the image was taken and the geographical region it captures. The goal is to generalize across time, and improve subpopulation performance across all regions. 

\textbf{Data} \textit{FMoW} is based on the Functional Map of the World dataset \cite{christie2018functional}, which includes over 1 million high-resolution satellite images from over 200 countries, based on the functional purpose of the buildings or land in the image, over the years 2002–2018. We use a subset of this data introduced in \cite{koh2020wilds}, which is split into three time range domains, 2002–2013, 2013–2016, and 2016–2018, as well as five geographical regions as subpopulations: \textit{Africa}, \textit{Americas}, \textit{Oceania}, \textit{Asia} and \textit{Europe}. 

\begin{enumerate}
    \item Training: 76863 images from the years 2002–2013.
    \item Validation (OOD): 19915 images from the years from 2013–2016.
    \item Test (OOD): 22108 images from the years from 2016–2018.
    \item Validation (ID): 11483 images from the years from 2002–2013.
    \item Test (ID): 11327 images from the years from 2002–2013.
\end{enumerate}

\paragraph{Models}
In the following we briefly describe each of the models used in the experiments reported in \secref{sec:exp}.
\begin{itemize}
    \itemsep0em
    \item \textbf{BERT} - BERT is a 12-layer Transformer model \cite{vaswani2017attention} that represents textual inputs contextually and sequentially \cite{devlin2019bert}. It is widely used in NLP, and is considered the standard benchmark for any state-of-the-art system. It was previously shown to be miscalibrated across its training and test environments \cite{desai2020calibration}. In our \textit{CivilComments} experiments, we use BERT-base-uncased, a smaller variant of BERT which has a layer size of 768
    \item \textbf{DenseNet} - Dense Convolutional Network (DenseNet), is a feed-forward neural network where for each layer, the feature-maps of all preceding layers are used as inputs, and its own feature-maps are used as inputs into all subsequent layers \cite{huang2017densely}. DenseNets are widely used in computer vision, especially for image classification tasks . We use a DenseNet-121 model, a DenseNet variant with 121 layers, in the \textit{Camelyon17} and \textit{FMoW} experiments.
    \item \textbf{ResNet} - Residual Network (ResNet) is a feed-forward neural network where layers are reformulated to learning residual functions with reference to the layer inputs \cite{he2016identity}. DenseNets where shown to be successful in multiple image recognition tasks. We use the 18-layer variant, ResNet-18, in the \textit{PovertyMap} experiment.
\end{itemize}

We run our models using the default setting used in \cite{koh2020wilds}. Each model is trained four times, using a different random seed at each run. We report performance averages and their standard deviation in \secref{sec:exp}.

\paragraph{Robustness to Model Architecture Choice}
For each of the five WILDS datasets we report results on (\textit{PovertyMap}, \textit{Camelyon17}, \textit{CivilComments} and \textit{FMoW}) also tested the robustness of our results to different model architectures. In the following we describe the architecture we tested for each dataset, and the relative results achieved.

\begin{itemize}
    \item \textbf{BERT} - We used a pre-trained \textit{BERT} in the \textit{Civilcomments} experiments. On the \textit{Civilcomments} dataset, we compared results on the BERT-base-uncased model with the cased and large versions. While we did find the performance increases with model size, perfromance drops on OOD examples remained consistent across models, with \ours{} outperforming \textit{Robust Calibration} and \textit{Naive Calibration} by an average of $1.4 \%$ and $3.1 \%$ (absolute), respectively.
    \item \textbf{DenseNet} - In the \textit{FMoW} experiments, we tested the relative performance of the $121$ layer version to the $169$ and $201$ layer alternatives available via \url{https://pytorch.org/hub/pytorch_vision_densenet/}. Differences between the three models were not statistically significant. 
    \item \textbf{ResNet} - In the \textit{PovertyMap} experiments, we compare \textit{ResNet-18} to the $34$ and $50$ layers alternatives available via \url{https://pytorch.org/hub/pytorch_vision_resnet/}. We found that \textit{ResNet-18} performs slightly on the OOD test set, with average gain of $0.01$ in pearson correlation compared with \textit{ResNet-34}. \textit{Robust Calibration} remained better than \textit{Naive Calibration} and the original model across runs. 
\end{itemize}


\paragraph{Training Algorithms}
In the WILDS experiments, for each dataset we train our models using three out of these four alternatives:
\begin{itemize}
    \itemsep0em
    \item \textbf{ERM} - Empirical risk minimization (ERM) is a training algorithms the looks for models that minimize the average training loss, regardless of the training environment.
    \item \textbf{IRM} Invariant risk minimization (IRM) \cite{arjovsky2019invariant} is a training algorithm that penalizes feature distributions that have different optimal linear classifiers for each environment.
    \item \textbf{DeepCORAL} 
    DeepCORAL \cite{sun2016deep} is an algorithm that penalizes differences in the means and covariances of the feature distributions for each training environment. It was originally proposed in the context of domain adaptation, and has been subsequently adapted for domain generalization \cite{gulrajani2020search}.
    \item \textbf{GroupDRO} - Group DRO \cite{hu2018does} uses distributionally robust optimization (DRO) to explicitly minimize the loss on the worst-case environment.
\end{itemize}

We do not perform any hyperparameter search, and use the default version available in \cite{koh2020wilds}.

\section{Experiments on Colored MNIST} \label{sec:cmnist_results}
For the colored MNIST\footnote{The MNIST dataset is available under the terms of the Creative Commons Attribution-Share Alike 3.0 license} dataset we trained Multi-Layer Perceptrons (MLPs) with ERM, IRMv1 and \ours{}, based on the code provided in \cite{kamath2021does} with the following adjustments: we add \ours{} and optimize it using SGD with batches of size $512$ from each training environment, for $5001$ steps at each run (~$50$ epochs). We used either the Adagrad optimizer \cite{duchi2011adaptive} or Adam \cite{kingma2014adam} (Adam was replaced with Adagrad in one environment where it produced highly unstable training metrics). All models were trained on a single NVidia Tesla P100 GPU virtual machine, on the Google Cloud Platform. Other algorithms were trained with Gradient Descent (i.e. without batching the dataset, which is infeasible for \ours{} since it is based on kernels) and Adam for $500$ steps/epochs, exactly as done in the code provided by \cite{arjovsky2019invariant, kamath2021does}. For \ours{}, hyperparamters are drawn similarly to the rest of the algorithms, except when using Adagrad where we multiply the originally drawn learning rate by $5$.

\subsection{Performance of \ours{}}
We will refer to environments with tuples $(\alpha, \beta)$ that denote correlation with digit and color respectively, as done in \secref{subsec:2bit}. For each setting of training and test environments we experiment with, $100$ models are trained using each algorithm: ERM, IRM and \ours{}.
To illustrate the failure case pointed out in \cite{kamath2021does} and \secref{sec:exp} of the paper, we train the algorithms with training environments corresponding to $e_1=(0.1, 0.05), e_2=(0.2, 0.05)$ and use data from test environment $e_3=(0.9, 0.05)$. \figref{fig:box_cmnist_color05} which we produce using code provided in \cite{kamath2021does} shows the results, where each point corresponds to a model trained with some set of drawn hyperparameters. Most models trained by \ours{} achieve log-loss that is close to that of the optimal invariant classifier (marked by dashed black line), while the models trained with IRMv1 are more scattered and specifically those that achieve lower log-loss are the ones that also obtain lower training objective. The bold colored lines mark the points that minimize $\sum_{e\in{E_\text{train}}}{l^e(f_\theta) + \lambda\cdot r^e(f_\theta)}$ with $\lambda=10^6$ (expect for ERM where it's the point which minimizes the empirical loss), showing that out of the models trained with IRMv1, the one which minimizes the objective has loss close to that of the solution $\text{OPT}_{\text{IRMv1}}$ from \figref{fig:mnist}(a) in the paper (marked by dashed red line). That is while the \ours{} model with the lowest training objective is very close to the optimal invariant classifier in its test loss (marked by black dashed line).
\begin{figure}[h]
    \centering
    \includegraphics[width=0.7\textwidth]{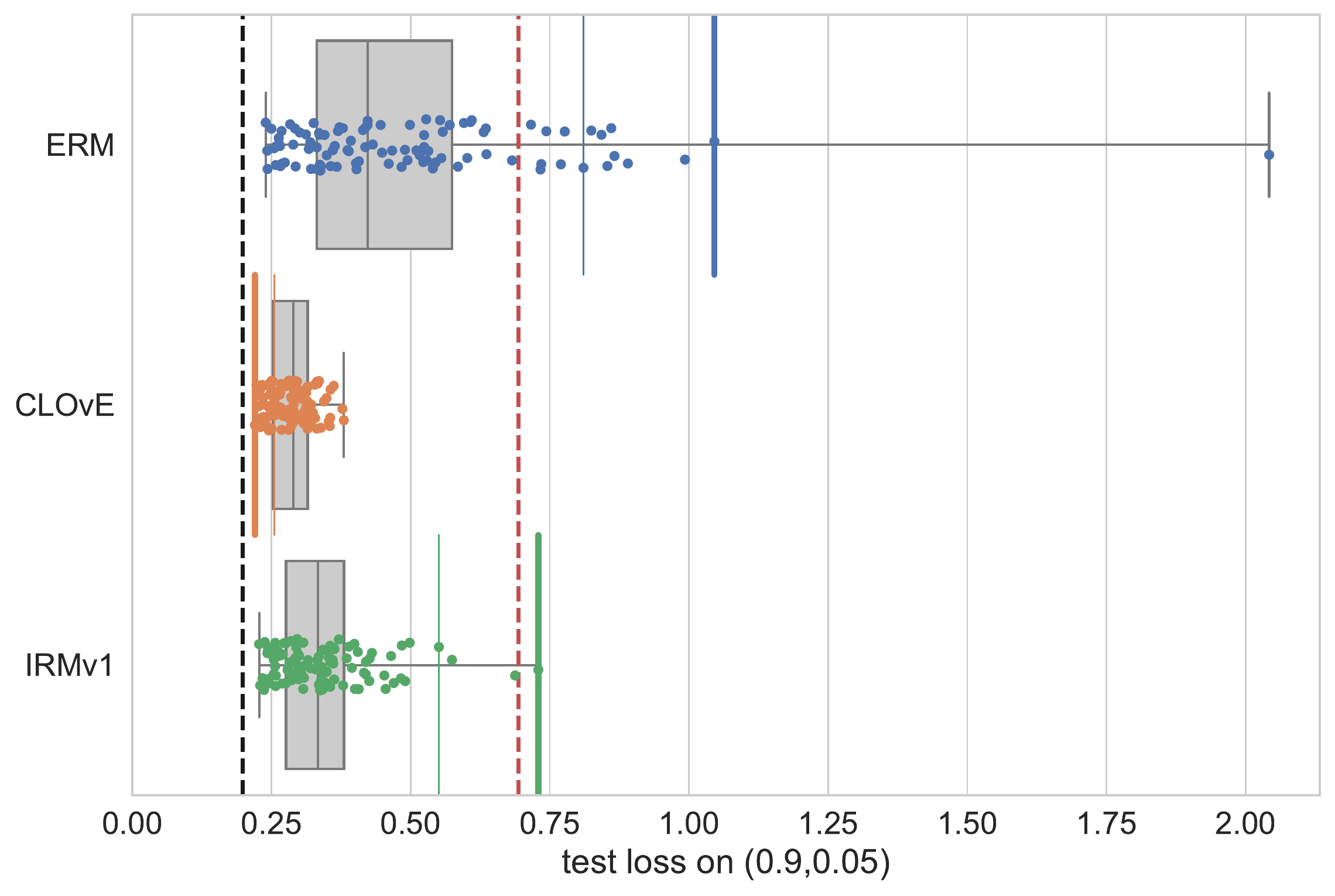}
    \caption{Log-loss on test environment $(0.9, 0.05)$ of classifiers trained with ERM, \ours{} and IRMv1 on training environments $(0.1, 0.05), (0.2, 0.05)$. Black dashed line marks the log-loss achieved by the optimal invariant classifier, while the red dashed line shows the loss achieved by $\text{OPT}_{\text{IRMv1}}$ from \figref{fig:mnist}(a). Bold colored lines mark the test loss achieved by the model which minimizes $\sum_{e\in{E_\text{train}}}{l^e(f_\theta) + \lambda\cdot r^e(f_\theta)}$ with $\lambda=10^6$ out of all trained models.
    }
    \label{fig:box_cmnist_color05}
\end{figure}
Note that in this case color is the invariant feature while the digit is spurious. For the opposite case, where the digit is invariant, the error incurred by MLPs in digit recognition makes it difficult to find the exact invariant classifier by optimizing \ours{} (since this error is close to the magnitude of the $0.05$ correlation). Yet in \secref{subsec:cmnist_model_selection} the failure case of IRMv1 in these environments will be illustrated by average ECE (which \ours{} is a surrogate for) being a better measure of invariance than the IRMv1 objective.

The experiment presented in \cite{arjovsky2019invariant} used the training environments $e_1=(0.25, 0.1), e_2=(0.25, 0.2)$ with test environments $e_3=(0.25, 0.9)$, where IRMv1 can in principle learn the optimal invariant classifier. We give the results on learning with these environments for completion. As can be observed in \figref{fig:box_cmnist_digit25}, both \ours{} and IRMv1 learn models that are close to the optimal invariant one. While IRMv1 learned more of those models during the hyperparameter sweep\footnote{This can be attributed to the choice of ranges for drawing hyperparameters which we did not carefully tune to accommodate \ours{}.}, \ours{} still obtains some close-to-invariant models during the sweep.
\begin{figure}[h]
    \centering
    \includegraphics[width=0.7\textwidth]{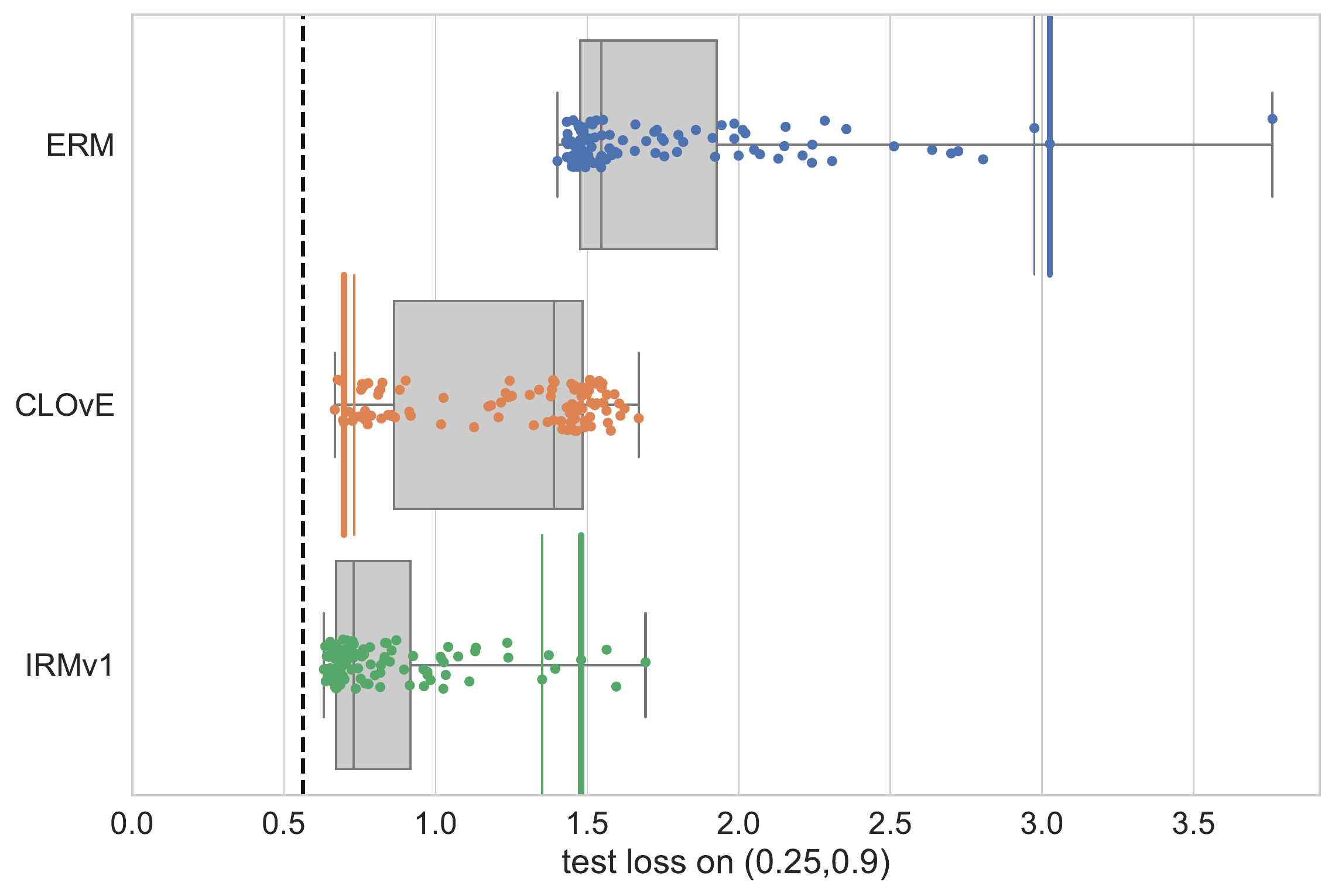}
    \caption{Log-loss on test environment $(0.25, 0.9)$ of classifiers trained with ERM, \ours{} and IRMv1 on training environments $(0.25, 0.1), (0.25, 0.2)$. Lines denote the same corresponding quantities in \figref{fig:box_cmnist_color05}, except we omit the red dashed line from that figure.
    }
    \label{fig:box_cmnist_digit25}
\end{figure}
The rest of this section will be dedicated to studying model selection with the proposed average ECE criterion and the correlation between ID average ECE and OOD performance.
\subsection{Model Selection Experiments} \label{subsec:cmnist_model_selection}
\begin{figure}[h]
    \centering
    \includegraphics[width=0.5\textwidth]{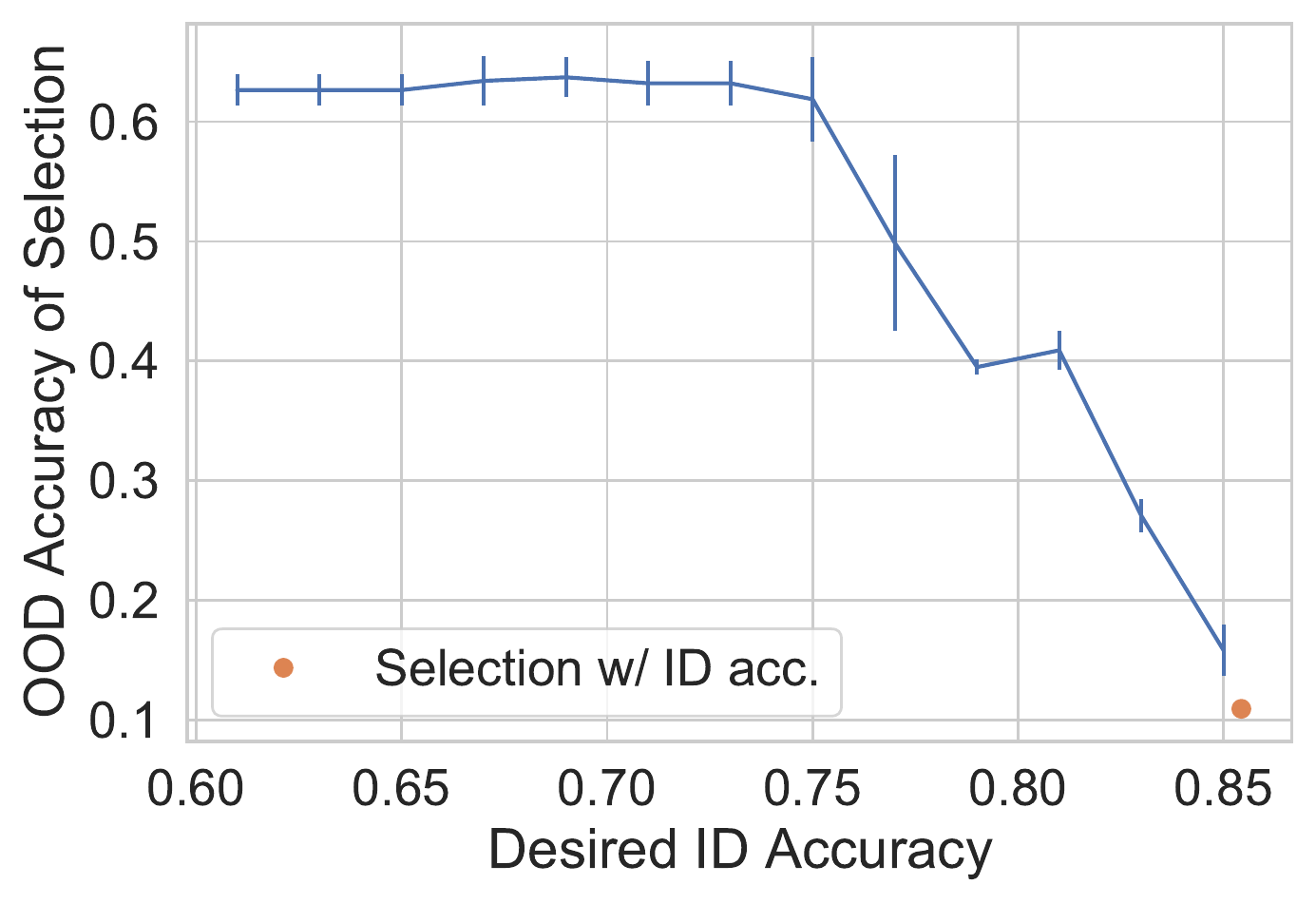}
    \caption{Model selection under a constraint on the ID accuracy. The OOD accuracy obtained by the proposed model selection method is plotted against the desired In-Domain accuracy, $\text{acc}_{\text{ID}}$, which is the minimal validation accuracy that we constrain the selected model to achieve. Red marker denotes the performance of the model achieved by selection based on ID validation accuracy alone. 
    }
    \label{fig:cmnist_selection_curve}
\end{figure}
Let us recall and elaborate the selection procedure proposed in \secref{sec:algs}:
\begin{itemize}
    \item Given a desired threshold for In-Domain accuracy $\text{Thr}_{\text{ID}}$ and a set of models $f_1(\x), \ldots, f_n(\x)$ from which we would like to select a candidate, perform the following.
    \item For each candidate model $\hat{f}$, recalibrate it with Isotonic Regression or some other preferred post-processing technique \footnote{This is a crucial step, since models that are highly miscalibrated can become well-calibrated upon post-processing}. Calculate its ID validation error $\text{val}_{\text{ID}}(\hat{f})$ over a held-out dataset. For the held-out dataset from each environment $e\in{E_{\text{train}}}$ also calculate $ECE^e(\hat{f})$: the $ECE$ of $\hat{f}$ over this dataset. Then take $ECE(\hat{f}) = \sum_{e\in{E_{\text{train}}}}{ECE^e(\hat{f})}$.
    \item Choose $\mathrm{arg}\min_{\hat{f}: \text{val}_{\text{ID}}(\hat{f}) \geq \text{Thr}_{\text{ID}}}{ECE(\hat{f})}$.
\end{itemize}
\begin{figure}[ht]
    \centering
    \subfigure[]{\centering
    \includegraphics[width=0.45\textwidth]{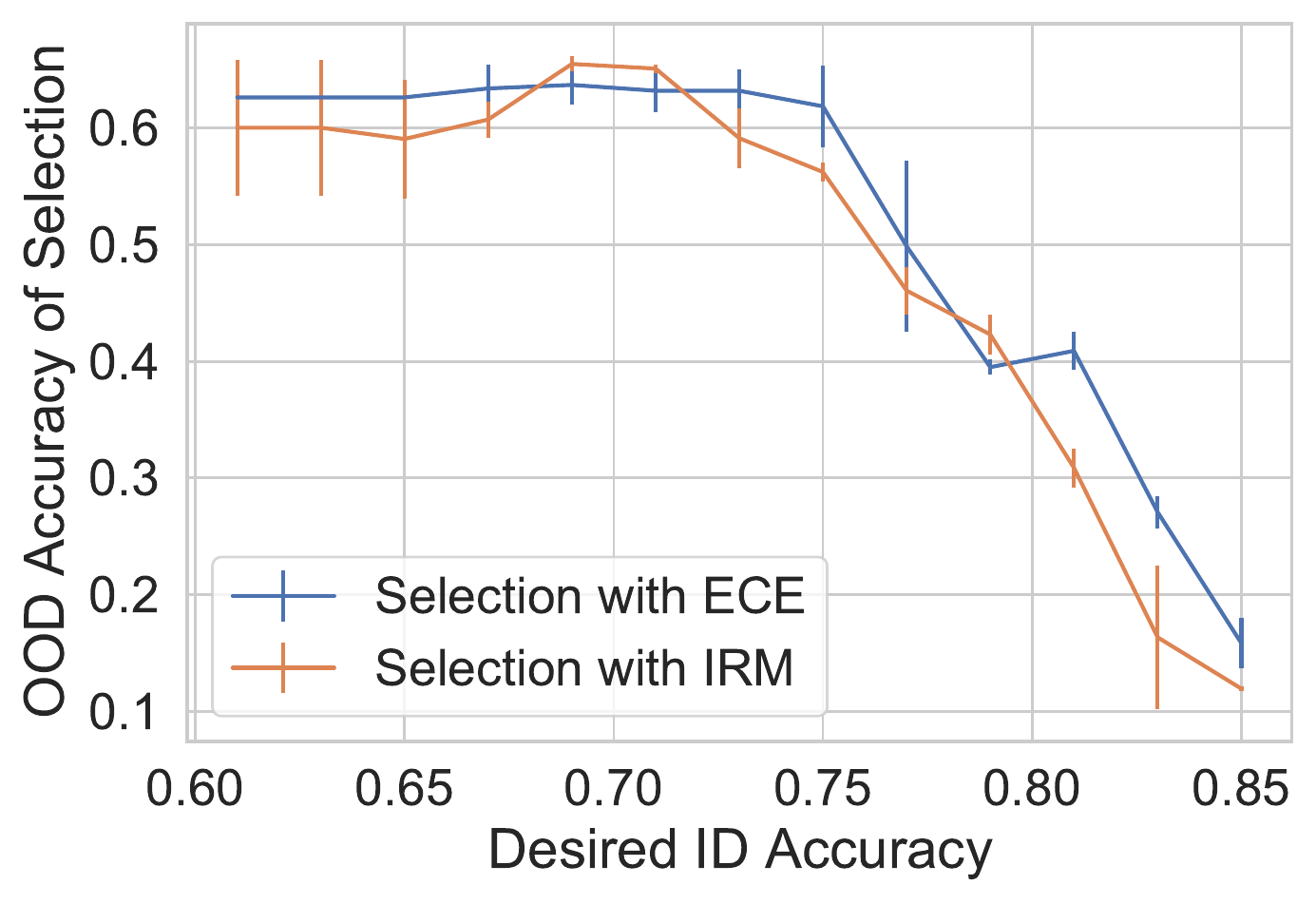}}
    \subfigure[]{\centering
    \includegraphics[width=0.45\textwidth]{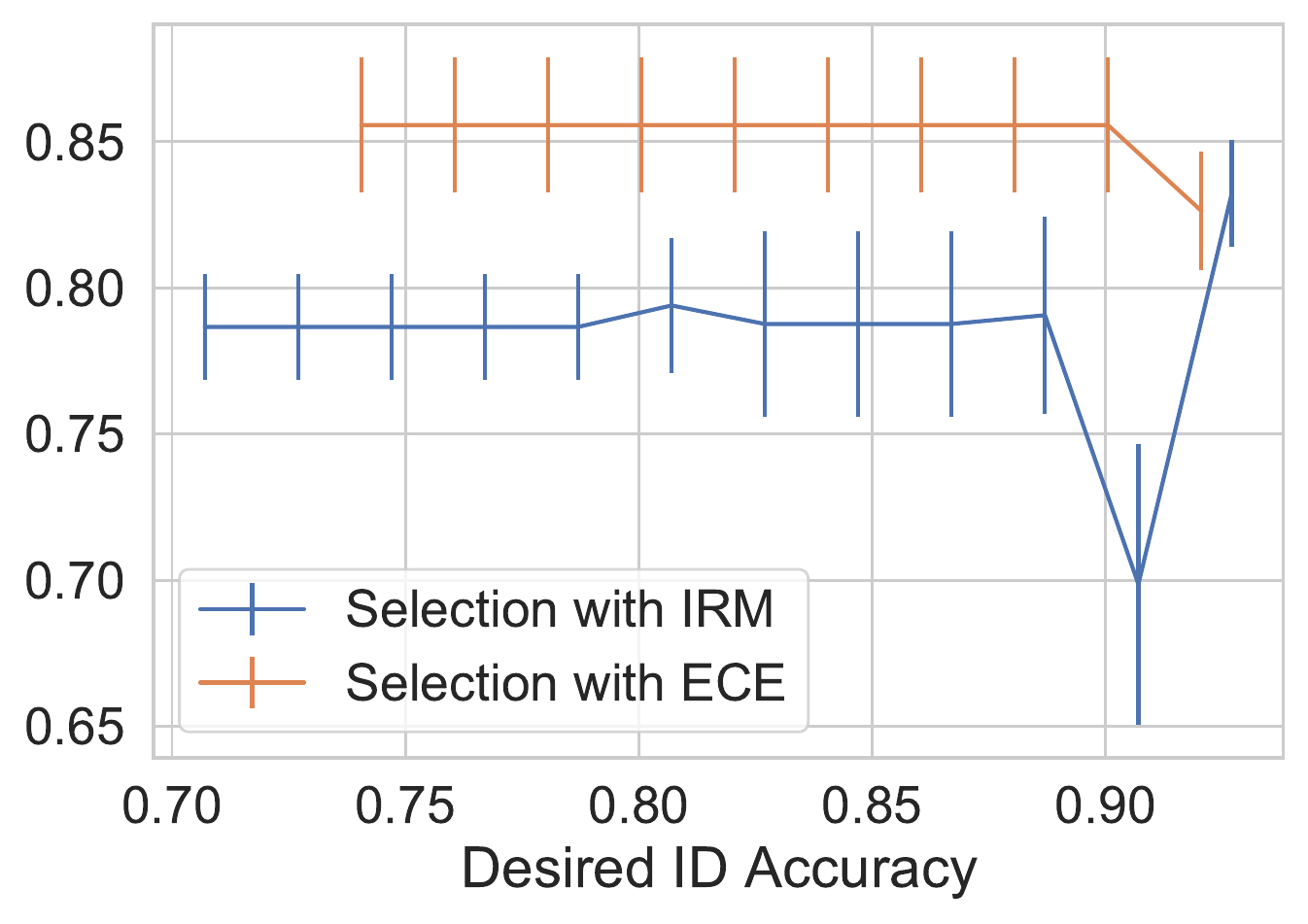}}
    \caption{Comparison of proposed model selection procedure applied with IRMv1 objective and the average ECE over training environments in two settings. (a) $e_1=(0.05, 0.1),e_2=(0.05, 0.2), e_{\text{test}}=(0.05, 0.9)$ and (b) $e_1=(0.25, 0.1),e_2=(0.25, 0.2), e_{\text{test}}=(0.25, 0.9)$.}
    \label{fig:irm_vs_ece_selection}
\end{figure}
\begin{figure}[ht]
    \centering
    \subfigure[]{\centering
    \includegraphics[width=0.8\textwidth]{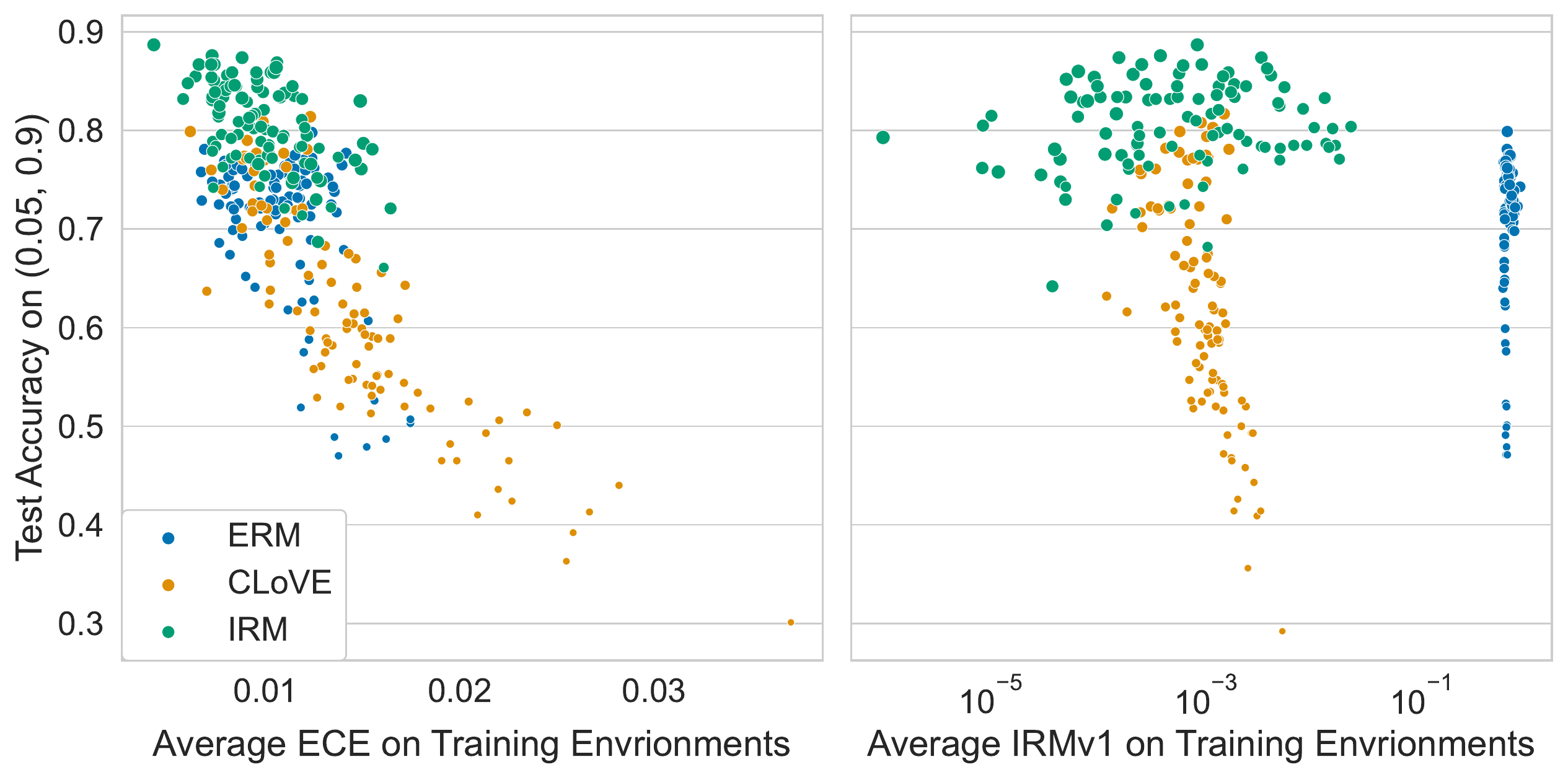}} \\
    \subfigure[]{\centering
    \includegraphics[width=0.8\textwidth]{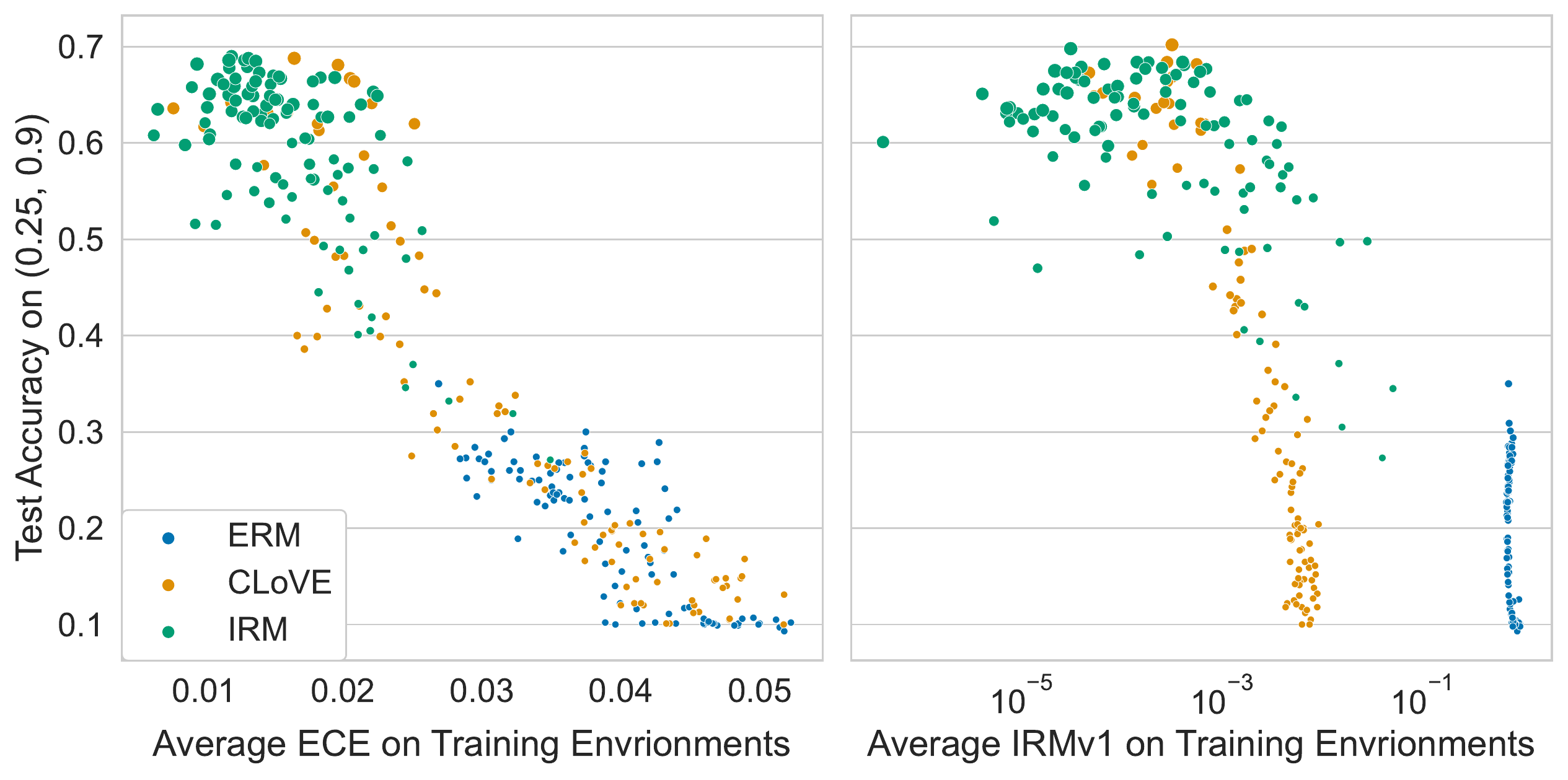}}
    \caption{Scatter plots of average ECE, and average IRMv1 penalty over training environments against the accuracy on test environments in settings (a) $e_1=(0.05, 0.1),e_2=(0.05, 0.2), e_{\text{test}}=(0.05, 0.9)$ and (b) $e_1=(0.25, 0.1),e_2=(0.25, 0.2), e_{\text{test}}=(0.25, 0.9)$. Size of marker is proportional to the ratio between OOD and ID accuracies.}
    \label{fig:ece_irm_vs_ood_expanded}
\end{figure}
\textbf{Selection with minimal ECE facilitates a tradeoff between ID accuracy and stability.}
We use the trained models from the last section (all models trained with either ERM, IRMv1 or \ours{} are pooled into a set of candidates), over environments $e_1=(0.25, 0.1), e_2=(0.25, 0.2)$. Selecting the model with minimal $\text{val}_{ID}(\hat{f})$ delivers a classifier with $10.96\%(\pm 0.81)$ accuracy on $e_{\text{test}}=(0.25, 0.9)$ and $85.43\%(\pm 0.13)$ accuracy on the training environments. The trade-off achieved by selection with the proposed criterion is shown in \figref{fig:cmnist_selection_curve}.
Demanding ID accuracy that is higher than $75\%$ (the ID error obtained by an optimal invariant classifier) yields a relatively sharp drop towards the OOD accuracy obtained by a classifier that purely minimizes empirical error. Going below $75\%$ retrieves a classifier that achieves $64.98\%(\pm 2.67)\%$ OOD accuracy.
\\ \textbf{Comparison with IRMv1 Penalty as Selection Criterion.}
As a baseline to the average ECE over training environments we compare it with using the value of the IRMv1 regularizer, also calculated with a validation set from each training environment. In \figref{fig:irm_vs_ece_selection} we compare the curves obtained by the proposed model selection procedure, and that same procedure when replacing the ECE with the value of IRMv1. \figref{fig:irm_vs_ece_selection}(a) shows the result on the scenario where $e_1=(0.25, 0.1),e_2=(0.25, 0.2)$ and $e_{\text{test}}=(0.25, 0.9)$. In this case the two methods are quite comparable, expect for the tail of high desired ID accuracies, where the chosen models are trained with ERM and the IRMv1 criterion fails to rank them by their OOD accuracy. \figref{fig:irm_vs_ece_selection}(b) shows the same plot on the scenario where $e_1=(0.05, 0.1),e_2=(0.05, 0.2)$ and $e_{\text{test}}=(0.05, 0.9)$, which corresponds to the failure case of IRM in \figref{fig:mnist}(a). Due the observation of \cite{kamath2021does}, we may expect the IRMv1 objective to fail at capturing invariance in this setting. Indeed, the model selection done using the IRMv1 penalty gives a worst model than the one selected by ECE in this case. In \figref{fig:ece_irm_vs_ood_expanded} we also plot the correspondence between OOD accuracy and these quantities (namely ID average ECE, and IRMv1 penalty) as in \figref{fig:mnist}(b) for both settings depicted in \figref{fig:irm_vs_ece_selection} showing the erratic behavior of the IRM penalty when considered on different training regimes.

\end{document}